\setlist[itemize]{leftmargin=25pt}
\newtheorem{theorem}{Theorem}
\newtheorem{lemma}{Lemma}
\newcommand{\at}[2][]{#1|_{#2}}
\newtheorem{hyp}{Hypothesis}
\begin{document}

\title{ES-GNN: Generalizing Graph Neural Networks Beyond Homophily with Edge Splitting}



\author{Jingwei~Guo,
        Kaizhu~Huang*,
        Rui~Zhang,
        and~Xinping~Yi
\thanks{
The work was supported by the following: National Natural Science Foundation of China under No. 92370119, and 62376113. 
Jingwei Guo is with University of Liverpool, Liverpool, U.K. and is also with Xi’an Jiaotong-Liverpool University, Suzhou, China (E-mail:~jingwei.guo@liverpool.ac.uk); Kaizhu Huang is with Duke Kunshan University, Suzhou, China (E-mail:~kaizhu.huang@dukekunshan.edu.cn); Rui Zhang is with Xi’an Jiaotong-Liverpool University, Suzhou, China (E-mail:~rui.zhang02@xjtlu.edu.cn); Xinping Yi is with Southeast University, Nanjing, China (E-mail:~xyi@seu.edu.cn).
*Corresponding author: Kaizhu Huang.
Digital Object Identifier: 10.1109/TPAMI.2024.3459932.
}
}
\IEEEtitleabstractindextext{%

\begin{abstract}
While Graph Neural Networks (GNNs) have achieved enormous success in multiple graph analytical tasks, modern variants mostly rely on the strong inductive bias of homophily. However, real-world networks typically exhibit both homophilic and heterophilic linking patterns, wherein adjacent nodes may share dissimilar attributes and distinct labels. Therefore, GNNs smoothing node proximity holistically may aggregate both task-relevant and irrelevant (even harmful) information, limiting their ability to generalize to heterophilic graphs and potentially causing non-robustness. In this work, we propose a novel Edge Splitting GNN (ES-GNN) framework to adaptively distinguish between graph edges either relevant or irrelevant to learning tasks. This essentially transfers the original graph into two subgraphs with the same node set but complementary edge sets dynamically. Given that, information propagation separately on these subgraphs and edge splitting are alternatively conducted, thus disentangling the task-relevant and irrelevant features. Theoretically, we show that our ES-GNN can be regarded as a solution to a \textit{disentangled graph denoising problem}, which further illustrates our motivations and interprets the improved generalization beyond homophily. Extensive experiments over 11 benchmark and 1 synthetic datasets not only demonstrate the effective performance of ES-GNN but also highlight its robustness to adversarial graphs and mitigation of the over-smoothing problem.
\end{abstract}
 
\begin{IEEEkeywords}
Graph Neural Networks, Heterophilic Graphs, Disentangled Representation Learning, Graph Mining.
\end{IEEEkeywords}}

\maketitle

\IEEEdisplaynontitleabstractindextext

%
\IEEEpeerreviewmaketitle

\IEEEraisesectionheading{\section{Introduction}\label{sec:introduction}}
\IEEEPARstart{A}{s} a ubiquitous data structure, graph can symbolize complex relationships between entities in different domains. For example, knowledge graphs describe the inter-connections between real-world events, and social networks store the online interactions between users. With the flourishing of deep learning models on graph-structured data, graph neural networks (GNNs) emerge as one of the most powerful techniques in recent years. Owing to their remarkable performance, GNNs have been widely adopted in multiple graph-based learning tasks, such as link prediction, node classification, and recommendation~\cite{ciano2021inductive,zhou2020graph,chen2020handling,zhang2020deep}. 

Modern GNNs are mainly built upon a message passing framework~\cite{gilmer2017neural}, where nodes' representations are learned by aggregating their transformed neighbors iteratively. From the graph signal denoising viewpoint, this mechanism could be seen as a low-pass filter~\cite{wu2019simplifying,balcilar2020analyzing,ma2021unified,zhu2021interpreting} that smooths the signals between adjacent nodes. Several works~\cite{zhu2020beyond,chien2021adaptive,zhu2021graph,ma2021unified,wang2021graph,suresh2021breaking,meta_yang2022graph} refer this to smoothness or homophily assumption in GNNs. Notably, they work well on homophilic (assortative) graphs, from which the proximity information of nodes can be utilized to predict their labels~\cite{nt2019revisiting}. However, real-world networks are typically abstracted from complex systems, and sometimes display heterophilic (disassortative) properties whereby the opposite objects are attracted to each other~\cite{mcpherson2001birds}. For instance, different types of amino acids are mostly interacted in many protein structures~\cite{zhu2020beyond}, and most people in heterosexual dating networks prefer to link with others of the opposite gender. Recent studies~\cite{zhu2020beyond,chien2021adaptive,zhu2021graph,fagcn2021,wang2021graph,suresh2021breaking,liu2021non,yan2021two,fang2022polarized,yang2022graph,li2022finding} have shown that the conventional neighborhood aggregation strategy may not only cause the over-smoothing problem~\cite{chen2020simple,oono2020graph} but also severely hinder the generalization performance of GNNs beyond homophily.

One reason why current GNNs perform poorly on heterophilic graphs, could be the mismatch between the labeling rules of nodes and their linking mechanism. The former is the target that GNNs are expected to learn for classification tasks, while the latter specifies how messages pass among nodes for attaining this goal. In homophilic scenarios, both of them are similar in the sense that most nodes are linked because of their commonality which therefore leads to identical labels. In heterophilic scenarios, however, the motivation underlying why two nodes get connected may be ambiguous to the classification task. Let us take the social network within a university as an example, where students from different clubs can be linked usually due to taking the same classes and/or being roommates but not sharing the same hobbies. Namely, the task-relevant and irrelevant (or even harmful) information is typically mixed into node neighborhood under heterophily. However, current methods usually fail to recognize and differentiate these two types of information within nodes' proximity, as illustrated in Fig.~\ref{fig:visual_illustration}. As a consequence, the learned representations are prone to be entangled with false information, leading to non-robustness and sub-optimal performance.

Once the issue of GNNs' learning beyond homophily is identified, a natural question arises: \textit{Can we design a new type of GNNs that is adaptive to both homophilic and heterophilic scenarios?} Well formed designs should be able to identify the node connections irrelevant to learning tasks, and substantially extract the most correlated information for prediction. However, the assortativity of real-world networks is usually agnostic.  Even worse, the features of nodes are typically full of noises, where similarity or dissimilarity between connected ones may not actually reflect their class relations. Existing techniques including~\cite{velickovic2018graph,kim2021how,fagcn2021} usually parameterize graph edges with node similarity or dissimilarity, and cannot well assess the correlation between node connections and the downstream target.

In this paper, we propose ES-GNN, an end-to-end graph learning framework that generalizes GNNs on graphs with either homophily or heterophily. Without loss of generality, we make an assumption that two nodes get connected mainly because they share some similar features, which are however unnecessarily just relevant to the learning task. 
In other words, nodes may be linked due to similar features, either relevant or irrelevant to the task. This implicitly divides the original graph edges into two complementary sets, each of which represents a latent relation between nodes. Thanks to the proximity smoothness, aggregating node features individually on each edge set should disentangle the task-relevant and irrelevant features. Meanwhile, these disentangled representations potentially reflect node similarity in two aspects (task-relevant and irrelevant). As such, they can be better utilized to split the original graph edges more precisely. Motivated by this, the proposed framework integrates GNNs with an interpretable edge splitting (ES), to jointly partition network topology and disentangle node features.

Technically, we design a residual scoring mechanism, executed within each ES-layer, to distinguish the task-relevant and irrelevant graph edges. The node features are then aggregated separately on these connections to produce disentangled representations, based on which graph edges can be classified more accurately in the next ES-layer. Finally, the task-relevant representations are granted for prediction. 
Meanwhile, an Irrelevant Consistency Regularization (ICR) is developed to regulate the task-irrelevant representations with the potential label-disagreement between adjacent nodes, for further reducing the classification-harmful information from the final predictive target.
To interpret our new algorithm theoretically, generalizing the \textit{standard smoothness assumption}~\cite{ma2021unified}, we also conduct some analysis on ES-GNN and establish its connection with a \textit{disentangled graph signal denoising problem}. 
In summary, the main contributions of this work are four-fold:
\begin{itemize}
    \item We propose a novel framework called ES-GNN for node classification tasks with one plausible hypothesis, which enables GNNs to go beyond the strong homophily assumption on graphs.
    \item We theoretically prove that our ES-GNN is equivalent to solving a graph denoising problem with a \textit{disentangled smoothness assumption}, which interprets its good performance on different types of networks.
    \item Extensive evaluations across 11 benchmark and 1 synthetic datasets illustrate ES-GNN's efficacy on graphs with varying homophily levels, achieving an average error reduction of 5.8\% over a broad spectrum of competitive methods.
    \item Importantly, ES-GNN is able to alleviate the over-smoothing problem and enjoys remarkable robustness against adversarial graphs. This shows that ES-GNN could still lead to excellent performance even if the \textit{disentangled smoothness assumption} may not hold practically.
\end{itemize}

\begin{figure}[t]
\centering
\includegraphics[clip, trim=0cm 12cm 10cm 0cm,width=0.45\textwidth]{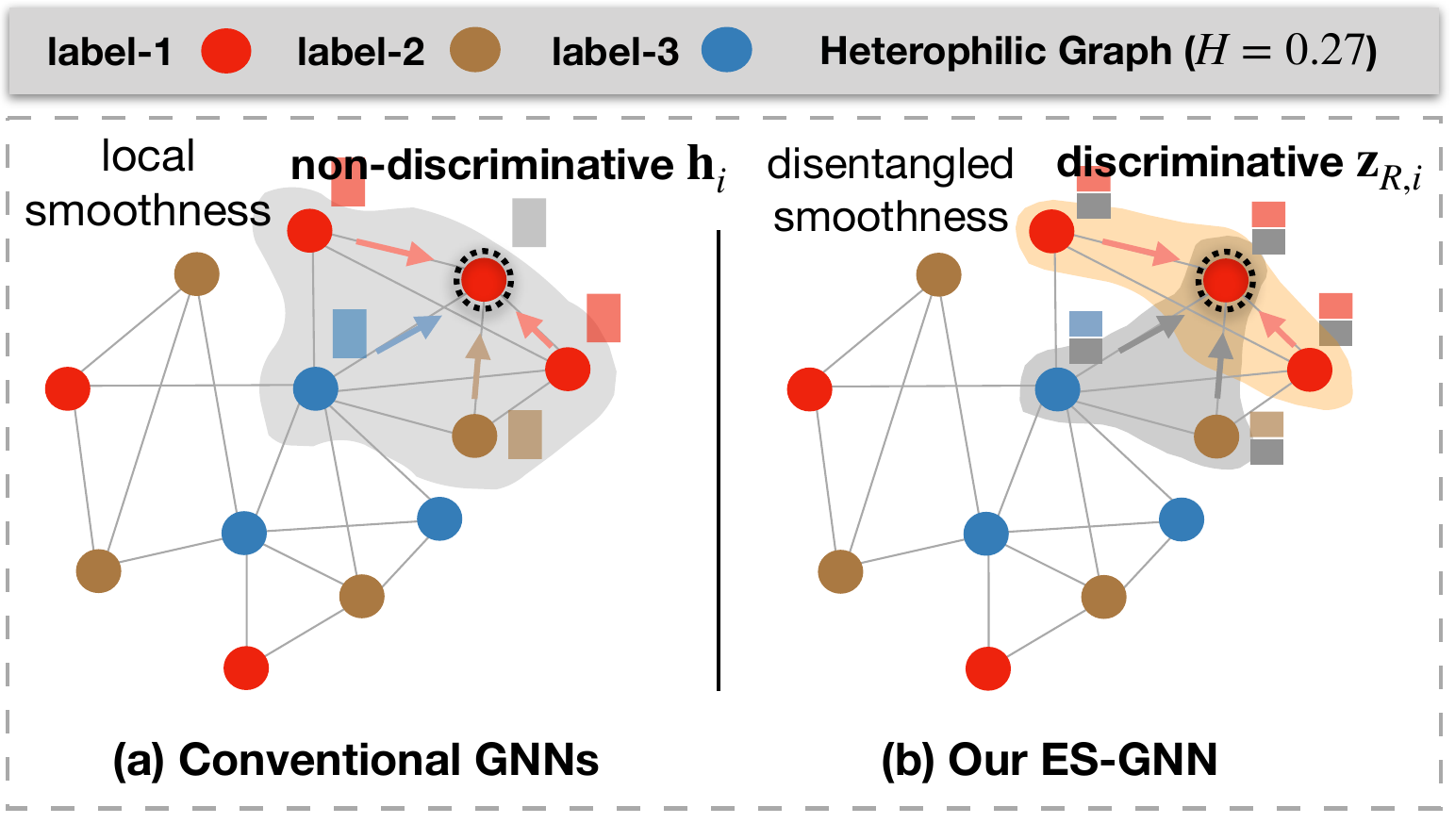}
\caption{A toy example to show differences between conventional GNNs and our ES-GNN in aggregating node features. Conventional GNNs with local smoothness tend to produce non-discriminative representations on heterophilic graphs, while our ES-GNN is able to disentangle and exclude the task-harmful features from the final predictive target.}
\label{fig:visual_illustration}
\end{figure}

\section{Preliminaries}
Let $\mathcal{G}=(\mathcal{V}, \mathcal{E})$ be an undirected graph with node set $\mathcal{V} = \{v_n\}_{n=1}^{N}$ and edge set $\mathcal{E}$, where $N = |\mathcal{V}|$ refers to node number and $(v_i, v_j) \in \mathcal{E}$ if two distinct nodes $v_i,v_j$ are connected. We use $\mathcal{N}_i$ to denote the 1-hop neighborhood of node $v_i$ and define the adjacency matrix as $\mathbf{A} \in \mathbb{R}^{N \times N}$ where $\mathbf{A}_{i,j} = 1$ if $(v_i, v_j) \in \mathcal{E}$ and 0 otherwise. 
The degree matrix $\mathbf{D}$ can be obtained by summing the row of $\mathbf{A}$ into a diagonal matrix.
As our ES-GNN disentangles the original graph into the task-relevant and irrelevant subgraphs, we will denote their adjacency matrixes respectively as $\mathbf{A}_\text{R}$ and $\mathbf{A}_\text{IR}$ in this paper. Nodes are usually associated with a feature matrix $\mathbf{X} \in \mathbb{R}^{N \times F}$ where $F$ refers to the number of raw feature and $\mathbf{X}_{[i,:]}$ is the $i$-th row of $\mathbf{X}$ pertinent to node $v_i$. For node classification tasks, each node is assigned with a label $c_i$ out of $C \leq N$ classes and have a ground truth one-hot vector $\mathbf{y}_i \in \mathbb{R}^C$.
In this context, real-world graphs can be divided into homophilic and heterophilic ones based on the extent of similarity (or dissimilarity) in class labels among connected nodes. To quantify this level of homophily, researchers have developed various metrics. Among these, edge homophily $\mathcal{H}$ is a widely used metric that calculates the proportion of edges connecting nodes with identical labels, expressed as: 
$\mathcal{H}=|\{(v_i,v_j)| \mathbf{y}_i = \mathbf{y}_j, (v_i,v_j) \in \mathcal{E}\}| / |\mathcal{E}|$.
This metric ranges from 0 (high heterophily) to 1 (high homophily). Recently, more nuanced metrics such as class homophily $\mathcal{H}_\text{class}$ and adjusted homophily $\mathcal{H}_\text{adjusted}$ have been proposed in works~\cite{lim2021large} and~\cite{platonov2024characterizing}, respectively. 
These metrics take into account potential class imbalance and the variability in class number across different datasets, offering a more accurate estimation.

\section{Background and Related Work}

In this section, we provide the necessary background and elucidate the connections between our work and previous studies in the field (see subsections~\ref{sec:connection_heterophily},~\ref{sec:connection_task} and~\ref{sec:connection_disen}).

\subsection{Graph Neural Networks}\label{sec:related_work}

The central idea of most GNNs is to utilize nodes' proximity information for building their representations for tasks, based on which great effort has been made in developing different variants~\cite{kipf2017semi,velickovic2018graph,wu2019simplifying,Klicpera2019PredictTP,xu2018representation,yang2020factorizable,policygnn_kdd20,isufi2021edgenets,bianchi2021graph,gao2022hgnn,bouritsas2022improving}, and understanding the nature of GNNs~\cite{ma2021unified,zhu2021interpreting,balcilar2021breaking,faber2021comparing,wang2022reinforced,Schnake2021HigherOrderEO}. Several works have proved that GNNs essentially behave as a low pass filter that smooths information within node surrounding~\cite{wu2019simplifying,nt2019revisiting,min2020scattering,balcilar2020analyzing}. In line with this view, \cite{zhu2021interpreting} and~\cite{ma2021unified} show that a number of GNN models, such as GCN~\cite{kipf2017semi} adopting first-order Chebyshev expansion for efficient graph convolution, SGC~\cite{wu2019simplifying}  removing non-linearity of GCN, and GAT~\cite{velickovic2018graph} parameterizing graph edges with an attention mechanism, can be seen as different optimization solvers to a graph signal denoising problem with a \textit{smoothness assumption} upon connected nodes. All these results indicate that most GNNs are designed with a strong homophily hypothesis on the observed graphs while largely overlooking the important setting of heterophily, where node features and labels vary unsmoothly on graphs. 

\subsubsection{Connection to GNNs Tailored for Heterophily}\label{sec:connection_heterophily}

This subsections briefly introduce GNNs tailored for addressing graphs under heterophilic scenarios and emphasize the differences between their approaches and ours.

To extend GNNs on heterophilic graphs, several works leverage the long-range information beyond nodes' proximity. Geom-GCN~\cite{Pei2020GeomGCNGG} extends the standard message passing with geometric aggregation in latent space. H2GCN~\cite{zhu2020beyond} directly models the higher order neighborhoods for capturing the homophily-dominant information. WRGAT~\cite{suresh2021breaking} transforms the input graph into a multi-relational graph, for modeling structural information and enhancing the assortativity level. GEN~\cite{wang2021graph} estimates a suitable graph for GNNs' learning with multi-order neighborhood information and Bayesian inference as guide. GloGNN++~\cite{li2022finding} captures the global homophily beyond immediate neighborhoods by learning a signed matrix to assess correlations among nodes. Another line of work emphasizes the proper utilization of node neighbors. The most common works employ attention mechanism~\cite{velickovic2018graph,hou2019measuring}, however, they are still imposing smoothness within nodes' neighborhood albeit on the important members only~\cite{balcilar2020analyzing,zhu2021interpreting,ma2021unified}. Compared to that, FAGCN~\cite{fagcn2021} adaptively models both similarities and dissimilarities between adjacent nodes. GPR-GNN~\cite{chien2021adaptive} introduces a universal polynomial graph filter, by associating different hop neighbors with learnable weights in both positive and negative signs, so as to extract both low- and high-frequency information. ACM-GCN~\cite{luan2022revisiting} proposes a multi-channel filtering approach that adaptively exploit both low- and high-frequency neighborhood information for each node. GOAL~\cite{zheng2023finding} enhances the modeling of intra- and inter-class node relationships through a graph complementary learning that recovers missing low- and high-frequency information in the original network topology.

However, most of them overlook the motivations why two nodes get connected, nor do they associate them with learning tasks, which is analyzed as one of the keys to generalize GNNs beyond homophily in this paper. In contrast, ES-GNN distinguishes graph edges as either relevant or irrelevant to the task. Such information acts as a guide to disentangle and exclude classification-harmful information from the final predictive target, and thus boosts GNNs' performance under heterophily.

\subsubsection{Connection to GNNs Considering Task-Relevance}\label{sec:connection_task}

Following the previous discussion, where we highlighted ES-GNN's distinctive approach of discerning task-relevant from irrelevant information amidst GNNs designed for heterophily, we now situate this idea within GNN research that focuses on task-relevance. While the notion of emphasizing task-relevant information is not new, this subsection is dedicated to clearly outlining how our work aligns with and diverges from existing methodologies in this realm.

The concept of prioritizing task-relevance in GNNs has been extensively explored across various domains, such as topological denoising~\cite{zheng2020robust,wang2020unifying,luo2021learning}, graph pooling~\cite{zhang2022dotin}, augmentations~\cite{trivedi2022augmentations,gong2023ma}, contrastive learning~\cite{xu2021infogcl}, and structure learning~\cite{sun2022graph,yang2021soft,miao2022interpretable}. Although~\cite{wang2020deep} does not explicitly define task-relevance or irrelevance, it subtly explains the reasons for node connections by employing relational topic modeling with hierarchical graph information. Given our focus on supervised classification tasks, the forthcoming discussion will be expressly centered around this area. This focus ensures a contextual analysis of ES-GNN within the established research landscape, highlighting its unique contributions to task-oriented GNN development.

One should note that there are essential distinctions between our method and the existing works of NeuralSparse~\cite{zheng2020robust} and GCN-LPA~\cite{wang2020unifying}, despite sharing the common goal of learning task-relevant edges. NeuralSparse employs a sparsification mechanism that can be trained with task loss, while GCN-LPA utilizes the outcome of label propagation as a guide to learn edge weights. Although these methods, like ours, actively select task-relevant edges to facilitate the extraction of task-relevant information, they primarily focus on this aspect, neglecting the potential benefits of modeling the opposite, task-irrelevant aspect. In contrast, our ES-GNN diverges by implementing a disentangled learning paradigm that partitions network topology and decouples node features into task-relevant and irrelevant parts. This explicit modeling of task-irrelevant information allows ES-GNN to further reduce noise and enhance the extraction of features with stronger correlations with the task. In other words, our approach not only focuses on selecting task-relevant edges but also strategically minimizes the impact of irrelevant information, therefore excelling in complex settings like heterophilic graphs (see Table~\ref{tab:real_nc}).

Additionally, while both of our work and DOTIN~\cite{zhang2022dotin} explore the enhancement of GNNs by identifying task-irrelevant graph information, the approaches we take diverge in their application and focus. While DOTIN focuses on streamlining graph classification by dropping task-irrelevant nodes to boost efficiency and scalability, our ES-GNN targets node-level tasks, emphasizing the discernment of task-relevance in graph edges to generalize GNNs beyond homophily. Extending the core principles of ES-GNN to graph-level tasks presents a promising future direction.

\subsection{Disentangled Representation Learning}

Disentangled representation learning, aimed at disentangling the explanatory latent variables within observed data into distinct dimensions~\cite{bengio2013representation,higgins2018towards}, has garnered considerable attention, particularly in the field of computer vision~\cite{tang2023context,dalva2023image,chu2021learning,chen2023sketchtrans,liu2022spoof}. In recent years, there has been a progressive expansion of disentangled representation learning into the graph domain, addressing a wide spectrum of tasks. These range from foundational classification~\cite{ma2019disentangled,liu2020independence,yang2020factorizable,li2021disentangled,xiao2022decoupled,li2022disentangled,he2022variational,zhao2022exploring,guo2022learning,fan2022debiasing,zheng2023adversarial,wu2022multi} and generation challenges~\cite{guo2020interpretable,guo2021deep,wang2022deep,mercatali2022symmetry,du2022disentangled}, in both supervised and unsupervised settings, to downstream applications like trajectory prediction~\cite{bae2021disentangled}, overlapping community detection~\cite{zhou2015infinite}, recommendation systems~\cite{xia2023disentangled,ren2023disentangled,li2023edge,li2022disenRecommendation,zhao2022multi}, and graph neural architecture search~\cite{qin2022graph}. Given our focus on foundational Graph Neural Network (GNN) models, the following discussion will concentrate on the subset of research that employs disentangled representation learning to enhance the capabilities of GNNs within the realm of fundamental tasks.

For instance, DisenGCN~\cite{ma2019disentangled} introduces a neighborhood routing mechanism to iteratively partition node neighborhoods into distinct segments, paving the way for disentangled node-level information learning. Following this, IPGDN~\cite{liu2020independence} and LGD-GCN~\cite{guo2022learning} further enhance the model by promoting independence among disentangled factors and integrating global graph information, respectively. At the graph-level, FactorGCN~\cite{yang2020factorizable} takes a novel approach by factorizing the original graph into multiple subgraphs, aiming to highlight various graph aspects. VEPM~\cite{he2022variational} subsequently extends this learning paradigm by developing an edge generative model that incorporates community information to partition edges. Distinct from the aforementioned works, DisGNN~\cite{zhao2022exploring} focuses explicitly on disentangling graph edges. It employs three pretext tasks to guide the learning process, aiming to enhance GNN performance under heterophily settings -- a goal that aligns closely with our work. 

Shifting the focus to unsupervised learning approaches, DGCL~\cite{li2021disentangled} introduces a factor-wise discrimination objective in a contrastive learning manner to disentangle graph-level representations. Building upon this foundation, IDGCL~\cite{li2022disentangled} further enhances this approach by promoting the independence among the disentangled latent representations. Complementing these at the node level, DSSL~\cite{xiao2022decoupled} advances graph self-supervised learning by simulating a graph generative process through latent variable modeling of semantic structures. This process effectively decouples diverse neighborhood contexts, particularly benefiting the analysis of non-homophilous graphs.
In the realm of graph generation, NED-VAE~\cite{guo2020interpretable} stands out as a unsupervised approaches by automatically disentangling latent factors in both nodes and edges. SND-VAE~\cite{guo2021deep} further advances this field as the first disentangled generative model tailored for spatial networks. It adeptly uncovers both independent and dependent latent factors of spatial and network domains.

\subsubsection{Connections to Disentangled GNNs}\label{sec:connection_disen}

This subsection explores the relationships between our ES-GNN model and established disentangled GNNs, such as FactorGCN~\cite{yang2020factorizable}, VEPM~\cite{he2022variational} and DisGNN~\cite{zhao2022exploring}, specifically within the context of supervised settings. Our emphasis on supervised node classification tasks guides the selection of these comparative models to highlight the unique contributions and distinctions of our approach.

First, we acknowledge that our work shares a foundational similarity with both FactorGCN and VEPM: the aim to decompose the original network topology into multiple subgraphs for disentangling node features. However, there are three main differences: \textbf{1)} unlike FactorGCN, which allows an edge to belong to multiple subgraphs, resulting in potential overlap, our ES-GNN adopts an edge-splitting strategy that adaptively divides the original network topology into two mutually complementary subgraphs, ensuring $\mathbf{A}_\text{R}+\mathbf{A}_\text{IR}=\mathbf{A}$. In this aspect, VEPM is somewhat similar to ours, also producing complementary subgraphs by normalizing edge weights with a softmax layer. \textbf{2)} FactorGCN merely interprets the decomposed subgraphs as different graph aspects without providing any concrete meanings, and the number of latent factors requires manual selection across different graphs. While VEPM attributes community characteristics to these subgraphs, it falls short in linking these characteristics directly to the task at hand, nor does this approach address the variability in the community number needed across graphs. In contrast, our model uniquely generates two interpretable, task-relevant and irrelevant topologies adaptable to any graph, offering more meaningful and application-specific insights. \textbf{3)} FactorGCN and VEPM integrate all disentangled components towards the final prediction, with VEPM even remixing the disentangled feature representations for prediction using a ``representation composer’’. Diverging from them, our ES-GNN focuses on segregating task-relevant from task-irrelevant features, allowing for the exclusion of classification-harmful information in the predictive process. This distinction is particularly beneficial in heterophilic contexts, where task-irrelevant information could easily obscure the target prediction, as empirically validated in our experiments (see Table~\ref{tab:real_nc}).

Second, as previously mentioned when introducing DisGNN, both our method and DisGNN aim to enhance GNN performance on heterophilic graphs through explicit edge disentanglement. However, unlike DisGNN, which relies on multiple heuristic-based pretext tasks to supervise the edge disentanglement process, our approach requires only the addition of an Irrelevance Consistency Regularization (ICR) loss alongside the main task loss. This ICR loss, systematic in nature, adheres strictly to our core model principle as outlined in Hypothesis~\ref{hyp:our}. Moreover, similar to FactorGCN and VEPM, DisGNN does not prioritize task relevance when utilizing disentangled components for prediction. This approach risks retaining misleading information in heterophilic scenarios and potentially compromises model performance.

\section{Framework: ES-GNN}

In this section, we propose an end-to-end graph learning framework, ES-GNN, generalizing Graph Neural Networks (GNNs) to arbitrary graph-structured data with either homophilic or heterophilic properties. An overview of ES-GNN is given in Fig.~\ref{fig:gf_pipline}. The central idea is to integrate GNNs with an interpretable edge splitting (ES) layer that adaptively partitions the network topology as guide to disentangle the task-relevant and irrelevant node features.

\subsection{Edge Splitting Layer}\label{sec:es_layer}
The goal of this layer is to infer the latent relations underlying adjacent nodes on the observed graph, and distinguish between graph edges which could be relevant or irrelevant to learning tasks. 
Given a simple graph with an adjacency matrix $\mathbf{A}$ and node feature matrix $\mathbf{X}$, an ES-layer splits the original graph edges into two complementary sets, and thereby produces two partial network topologies with adjacency matrices $\mathbf{A}_{\text{\text{R}}},\mathbf{A}_{\text{\text{IR}}} \in \mathbb{R}^{N \times N}$ satisfying $\mathbf{A}_{\text{\text{R}}} + \mathbf{A}_{\text{\text{IR}}} = \mathbf{A}$. We would expect $\mathbf{A}_{\text{R}}$ storing the most correlated graph edges to the classification task, of which the rest is excluded and disentangled in $\mathbf{A}_{\text{IR}}$. Therefore, analyzing the correlation between node connections and learning tasks comes into the first step.

However, existing techniques~\cite{velickovic2018graph,kim2021how,fagcn2021} mainly parameterize graph edges with node similarity or dissimilarity, while failing to explicitly correlate them with the prediction target. Even worse, as the assortativity of real-world networks is usually agnostic and node features are typically full of noises, the captured similarity/dissimilarity may not truly reflect the label-agreement/disagreement between nearby nodes. Consequently, the harmful-similarity between pairwise nodes from different classes could be mistakenly preserved for prediction. 
To this end, we present one plausible hypothesis below, whereby the explicit correlation between node connections and learning tasks is established automatically.

\begin{hyp}\label{hyp:our}
Two nodes get connected in a graph mainly due to their similarity in some features, which could be either relevant or irrelevant (even harmful) to the learning task. 
\end{hyp}

This hypothesis is assumed without losing generality to both homophilic and heterophilic graphs. For a homophilic scenario, e.g., in citation networks, scientific papers tend to cite or be cited by others from the same area, and both of them usually possess the common keywords uniquely appearing in their topics. For a heterophilic scenario, students having different interests are likely be connected because of the same classes and/or dormitory they take and/or live in, but neither has direct relation to the clubs they have joined. This inspires us to classify graph edges by measuring the similarity between adjacent nodes in two different aspects, i.e., \textit{a graph edge is more relevant to a classification task if connected nodes are more similar in their task-relevant features, or otherwise.} Our experimental analysis in Section~\ref{sec:robust} further provides evidences that even when our Hypothesis~\ref{hyp:our} may not hold, most adversarial edges (considered as the task-irrelevant ones)  can still be recognized though neither types of node similarity exists.

\begin{figure}[!t]
\centering
\includegraphics[clip, trim=0cm 12.5cm 10cm 0cm,width=0.49\textwidth]{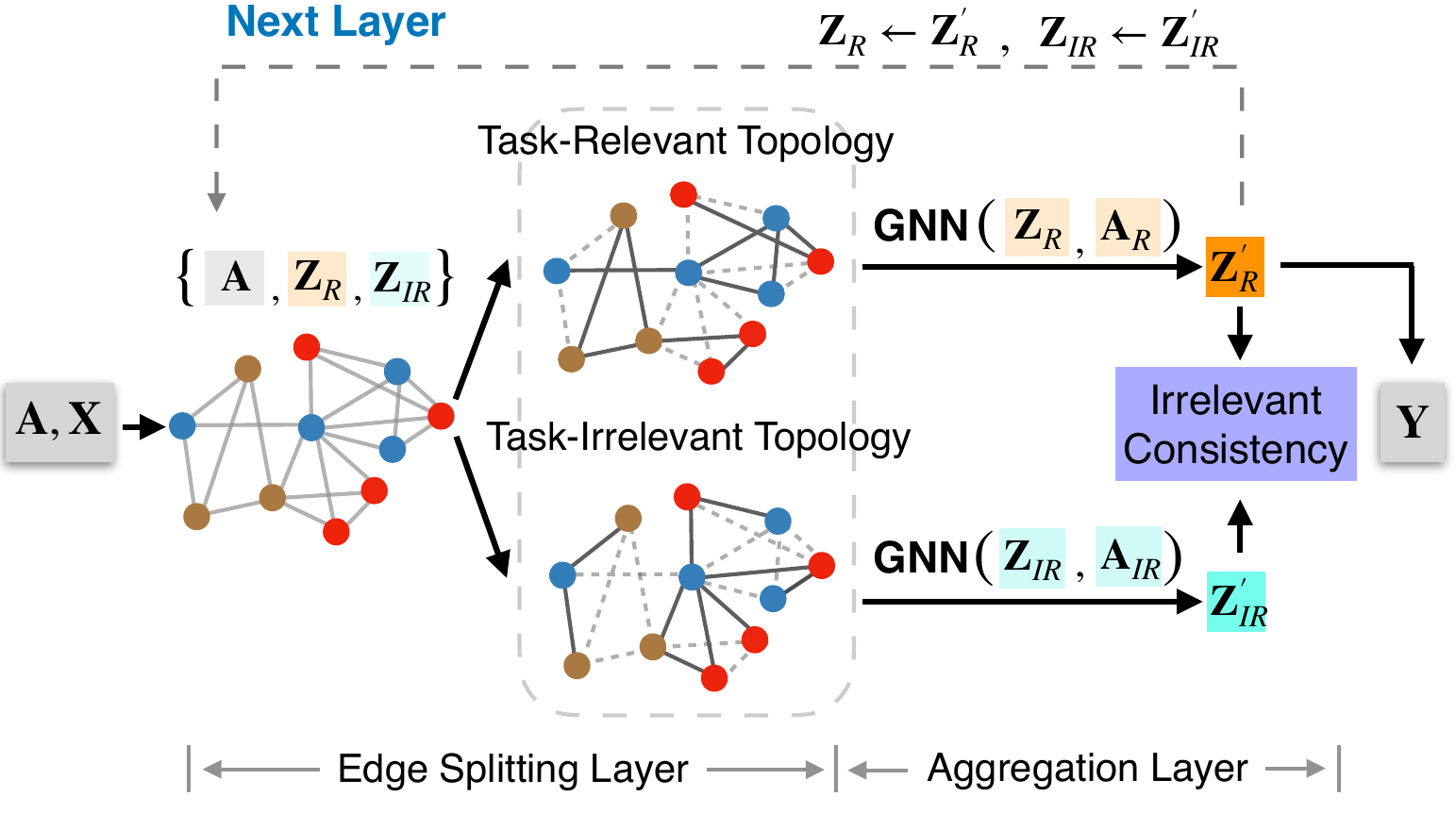}
\caption{Illustration of ES-GNN framework where $\mathbf{A}$ and $\mathbf{X}$ denote the adjacency matrix and feature matrix of nodes, respectively. First, $\mathbf{X}$ is projected onto different latent subspaces via different channels $\text{R}$ and $\text{IR}$. An edge splitting is then performed to divide the original graph edges into two complementary sets. After that, the node information can be aggregated individually and separately on different edge sets to produce disentangled representations, which are further utilized to make an more accurate edge splitting in the next layer. The task-relevant representation $\mathbf{Z}^{'}_{\text{R}}$ is reasonably granted for prediction, and an Irrelevant Consistency Regularization (ICR) term is developed to further reduce the potential task-harmful information from the final predictive target.}
\label{fig:gf_pipline}
\end{figure}

It is worthy mentioning that our hypothesis is not in contradiction to the ``opposites attract'', which could be intuitively explained by linking due to different but matching attributes.
We believe the inherent cause to connection even in ``opposites attract'' may still be certain commonalities. For example, in heterosexual dating networks, people of the opposite sex are most likely connected because of their similar life values. Although these similarities may be inappropriate (or even harmful) in distinguishing genders, modeling and disentangling them from the final predictive target might be still of great importance.

An ES-layer consists of two channels to respectively extract the task-relevant and irrelevant information from nodes. As only the raw feature matrix $\mathbf{X}$ is provided in the beginning, we will project them into two different subspaces before the first ES-layer:
\begin{equation}
\mathbf{Z}_s^{(0)} = \sigma(\mathbf{W}_s^T\mathbf{X} + \mathbf{b}_s),
\label{eq:channels_mp}
\end{equation}
where $\mathbf{W}_s \in \mathbb{R}^{f \times \frac{d}{2}}$ and $\mathbf{b}_s \in \mathbb{R}^{\frac{d}{2}}$ are the learnable parameters in channel $s \in \{\text{R}, \text{IR}\}$, $d$ is the number of node hidden states, and $\sigma$ is a nonlinear activation function. 

Given Hypothesis~\ref{hyp:our}, we adopt a flexible approach for classifying node connections by using continuous edge weights from 0 to 1, reflecting the varying degrees to which edges are task-relevant or irrelevant. Nevertheless, applying metrics to independently determine $\mathbf{A}_{\text{R}}$ and $\mathbf{A}_{\text{IR}}$ based on node similarity may not fully capture the complex interplay between different channels and could diminish the focus on topological distinctions. To address this, for edges where $\mathbf{A}_{(i,j)}=1$, we parameterize the difference between $\mathbf{A}_{\text{R}(i,j)}$ and $\mathbf{A}_{\text{IR}(i,j)}$, by solving the linear equation:
\begin{equation} \label{eq:linear_eq}
\begin{cases}
\mathbf{A}_{\text{R}(i,j)} - \mathbf{A}_{\text{IR}(i,j)} = \alpha_{i,j}\\ 
\mathbf{A}_{\text{R}(i,j)} + \mathbf{A}_{\text{IR}(i,j)} = 1
\end{cases}.
\end{equation}
This gives us $\mathbf{A}_{\text{R}(i,j)} = \frac{1 + \alpha_{i,j}}{2}$ and $\mathbf{A}_{\text{IR}(i,j)} = \frac{1 - \alpha_{i,j}}{2}$ with $-1 \leq \alpha_{i,j} \leq 1$. 
To effectively quantify the interaction (or relative importance) between the task-relevant and irrelevant aspects of each edge, we propose a residual scoring mechanism:
\begin{equation} \label{eq:rs_eq}
\alpha_{i,j} = \tanh(\mathbf{g}\left[\mathbf{Z}_{\text{R}[i,:]} \oplus \mathbf{Z}_{\text{IR}[i,:]} \oplus \mathbf{Z}_{\text{R}[j,:]} \oplus \mathbf{Z}_{\text{IR}[j,:]} \right]^T).
\end{equation}
Here, both of the task-relevant and irrelevant node features are first concatenated and convoluted by learnable $\mathbf{g} \in \mathbb{R}^{1 \times 2d}$, and then passed to the tangent activation function to produce a floating value between -1 and 1. Similar learning scheme can be found in works~\cite{velickovic2018graph,fagcn2021,kim2021how}. To further enhance the distinction between $\mathbf{A}_\text{R}$ and $\mathbf{A}_\text{IR}$, while acknowledging their inherent continuous nature, one can apply techniques, such as softmax with temperature in Eq.~\eqref{eq:sft_t}, Gumbel-Softmax~\cite{jang2016categorical,maddison2016concrete} in Eq.~\eqref{eq:gumbel}, or thresholding in Eq.~\eqref{eq:thresh}. These methods aim to bring their values closer to 0 or 1, thereby strengthening the clarity of task relevance and promoting graph disentanglement.

\begin{align}
\mathbf{A}_{\textit{s}(i,j)}^{'} &= \frac{\exp(\mathbf{A}_{\textit{s}(i,j)} / \tau)}{\sum_{\kappa \in \{\text{R},\text{IR}\}}\exp(\mathbf{A}_{\kappa(i,j)} / \tau)}\label{eq:sft_t}\\
\mathbf{A}_{\textit{s}(i,j)}^{'} &= \frac{\exp((\log(\mathbf{A}_{\textit{s}(i,j)}) + \gamma)/ \tau)}{\sum_{\kappa \in \{\text{R},\text{IR}\}}\exp((\log(\mathbf{A}_{\kappa(i,j)}) + \gamma)/ \tau)}\label{eq:gumbel}\\
\mathbf{A}_{\textit{s}(i,j)}^{'} &= \begin{cases}
1 & \mathbf{A}_{\textit{s}(i,j)} > 0.5 \\
0 & \text{otherwise}
\end{cases}
\label{eq:thresh}
\end{align}
where $s \in \{\text{R}, \text{IR}\}$, $\tau$ is a hyper-parameter mediating discreteness degree, and $\gamma \sim \textit{Gumbel}(0, 1)$ is a Gumbel random variable. 
However, in this work, we find good results without adding any additional discretization techniques, and will leave this investigation to the future work.

\subsection{Aggregation Layer}\label{sec:agg_ly}
As the split network topologies disclose the partial relations among nodes in different latent spaces, they can be utilized  to aggregate information for learning different node aspects. Specifically, we leverage a simple low-pass filter with scaling parameters $\{\epsilon_\text{R}, \epsilon_\text{IR}\}$ for both task-relevant and irrelevant channels, from the $k$-th to $k+1$-th layer:
\begin{equation} \label{eq:agg_eq}
\mathbf{Z}_s^{(k+1)} = \epsilon_s\mathbf{Z}_s^{(0)} + (1-\epsilon_s) \mathbf{D}^{-\frac{1}{2}}_s \mathbf{A}_s \mathbf{D}^{-\frac{1}{2}}_s \mathbf{Z}_s^{(k)}.
\end{equation}
$s \in \{\text{R}, \text{IR}\}$ denotes the task-relevant or irrelevant channel, and $\mathbf{D}_s$ is the degree matrix associated with the adjacency matrix $\mathbf{A}_s$. Derivation of Eq.~\eqref{eq:agg_eq} is detailed in our theoretical analysis. Importantly, by incorporating proximity information in different structural spaces, the task-relevant and irrelevant information can be better disentangled in $\mathbf{Z}_{\text{\text{R}}}^{(k+1)}$ and $\mathbf{Z}_{\text{IR}}^{(k+1)}$, based on which the next ES-layer
can make a more precise partition on the raw topology. 

\begin{algorithm}[t]
\caption{Framework of ES-GNN}
\label{alg:es_gnn}
\begin{algorithmic}[1]
\REQUIRE nodes set: $\mathcal{V}$, edge set: $\mathcal{E}$, adjacency matrix: $\mathbf{A} \in \mathbb{R}^{N \times N}$, node feature matrix: $\mathbf{X} \in \mathbb{R}^{|V| \times F}$, the number of layers: $K$, scaling parameters: $\{\epsilon_\text{R},\epsilon_\text{IR}\}$, irrelevant consistency coefficient: $\lambda_\text{ICR}$, and ground truth labels on the training set: $\{\mathbf{y}_i \in \mathbb{R}^C| \forall v_i \in \mathcal{V}_\text{trn}\}$.
\ENSURE $\mathbf{W}_\text{R}, \mathbf{W}_\text{IR} \in \mathbb{R}^{f \times d}$, $\mathbf{W}_{F} \in \mathbb{R}^{d \times C}, \mathbf{b}_{F} \in \mathbb{R}^C$, $\{\mathbf{g}^{(k)} \in \mathbb{R}^{1 \times 2d}|k=0,1,...,K-1\}$
\STATE // \textit{Project node features into two subspaces}.
\FOR{$s \in \{\text{\text{R}}, \text{\text{IR}}\}$}
    \STATE $\mathbf{Z}_{s}^{(0)} \gets \sigma(\mathbf{W}_s^T\:\mathbf{X} + \mathbf{b}_s)$.
    \STATE $\mathbf{Z}_{s}^{(0)} \gets \text{Dropout}(\mathbf{Z}_{s}^{(0)})$ // \textit{Enabled only for training}.
\ENDFOR
\STATE // \textit{Stack Edge Splitting and Aggregation Layers}.
\FOR{layer number $k=0,1,...,K-1$}
    \STATE // \textit{Edge Splitting Layer}.
    \STATE Initialize $\mathbf{A}_\text{R}, \mathbf{A}_\text{IR} \in \mathbb{R}^{N \times N}$ with zeros.
    \FOR{$\left(v_i,v_j\right) \in \mathcal{E}$}
        \STATE \small $\alpha_{i,j} \gets \tanh({\mathbf{g}^{(k)}}\left[\mathbf{Z}_{{\text{\text{R}}}[i,:]}^{(k)} \oplus \mathbf{Z}_{{\text{\text{IR}}}[i,:]}^{(k)} \oplus \mathbf{Z}_{{\text{\text{R}}}[j,:]}^{(k)} \oplus \mathbf{Z}_{\text{IR}, [j,:]}^{(k)} \right]^T)$.
        \STATE $\alpha_{i,j} \gets \text{Dropout}(\alpha_{i,j})$ // \textit{Enabled only for training}.
        \STATE $\mathbf{A}_{\text{R} (i,j)} \gets \frac{1 + \alpha_{i,j}}{2}$, $\mathbf{A}_{\text{IR} (i,j)} \gets \frac{1 - \alpha_{i,j}}{2}$.
    \ENDFOR
    \STATE // \textit{Aggregation Layer}.
    \FOR{$s \in \{{\text{\text{R}}}, \text{\text{IR}}\}$}
        \STATE $\mathbf{Z}_s^{(k+1)} \gets \epsilon_s\mathbf{Z}_s^{(0)} + (1 - \epsilon_s) \mathbf{D}^{-\frac{1}{2}}_s \mathbf{A}_s \mathbf{D}^{-\frac{1}{2}}_s \mathbf{Z}_s^{(k)}$.
    \ENDFOR
\ENDFOR
\STATE // \textit{Prediction}.
\STATE $\hat{\mathbf{y}}_i = \text{softmax}(\mathbf{W}_F^T\mathbf{Z}_{{\text{\text{R}}}[i,:]}^{(K)} + \mathbf{b}_F), \forall v_i \in \mathcal{V}$.
\STATE // \textit{Optimization with Irrelevant Consistency Regularization}.
\STATE $\mathcal{L}_{\text{ICR}} = \sum_{(v_i, v_j) \in \mathcal{E}} (1 - \delta(\hat{\mathbf{y}}_i, \hat{\mathbf{y}}_j)) \|\mathbf{Z}_{\text{\text{IR}}[i,:]} - \mathbf{Z}_{\text{\text{IR}}[j,:]}\|_2^2$.
\STATE $\mathcal{L}_{\text{pred}} = -\frac{1}{|\mathcal{V}_{\text{trn}}|}\sum_{i \in \mathcal{V}_{\text{trn}}}\mathbf{y}_i^T\log(\hat{\mathbf{y}}_i)$.
\STATE Minimize $\mathcal{L}_{\text{pred}} + \lambda_{\text{ICR}}\mathcal{L}_{\text{ICR}}$.
\end{algorithmic}
\end{algorithm}

\subsection{Irrelevant Consistency Regularization}

Stacking ES-layer and aggregation layer iteratively lends itself to disentangling different features of nodes into two distinct representations, denoted by $\mathbf{Z}_{\text{\text{R}}}$ and $\mathbf{Z}_{\text{\text{IR}}}$. First, $\mathbf{Z}_{\text{R}}$, informed and shaped by $\mathbf{A}_{\text{R}}$, is tuned for prediction, with its development guided by the minimization of the classification loss $\mathcal{L}_{\text{pred}}$. This process not only makes $\mathbf{Z}_{\text{R}}$ predictive of node labels but also implicitly reinforces the task-relevant nature of $\mathbf{A}_{\text{R}}$ via message passing. However, only supervising one channel ($\text{R}$) risks neglecting the meaningfulness of the other ($\text{IR}$), potentially leading to the preservation of erroneous information in predictions. To this end, we introduce a Irrelevant Consistency Regularization (ICR) loss $\mathcal{L}_{\text{ICR}}$, designed to regulate $\mathbf{Z}_{\text{IR}}$ as the opposite of $\mathbf{Z}_{\text{R}}$, i.e., identifying the classification-harmful information within the observed graph. The key rationale is to explore the similarities among nodes that are detrimental to classification tasks within $\mathbf{Z}_{\text{\text{IR}}}$. Given any node pairs $(v_i,v_j) \in \mathcal{E}$, we would expect $\mathbf{Z}_{{\text{\text{IR}}}[i,:]}$ and $\mathbf{Z}_{{\text{\text{IR}}}[j,:]}$ to be close in the latent space if they possess different labels, which can be formulated as:
\begin{equation} \label{eq:icr}
\mathcal{L}_{\text{ICR}} = \sum_{(v_i, v_j) \in \mathcal{E}} (1 - \delta(\mathbf{y}_i,\mathbf{y}_j)) \|\mathbf{Z}_{\text{\text{IR}}[i,:]} - \mathbf{Z}_{\text{\text{IR}}[j,:]}\|_2^2,
\end{equation}
where $\delta$ is a Kronecker function (1 if $\mathbf{y}_i=\mathbf{y}_j$, 0 otherwise) and $\|\cdot\|_2$ denotes $L_2$ norm. 
As such, $\mathbf{Z}_{{\text{\text{IR}}}}$ is constrained with a local consistency between adjacent nodes from different classes,
aiding in the exclusion and disentanglement of classification-harmful information from $\mathbf{Z}_{\text{\text{R}}}$ and to $\mathbf{Z}_{\text{\text{IR}}}$.

Several powerful techniques have been developed to assess label agreement between nodes~\cite{Stretcu2019GraphAM,kim2021how}. In this work, however, we find that using the joint probability from model predictions is effective and eliminates the need for additional trainable parameters. Besides, while the idea in ICR could be adapted to task-relevant representations by making closer nodes with the same predicted label, we avoid this due to the risk of inaccurate prediction during training. Such inaccuracy could irreversibly distort the classification metric space. Instead, we supervise task-irrelevant representations -- unused for direct prediction -- with noisy labels, offering a margin of error that our proposed layers can correct before reaching the final prediction stage.

\subsection{Overall Algorithm}
The overall pipeline of ES-GNN is detailed in Algorithm~\ref{alg:es_gnn}. Specifically, we adopt ReLU activation function in Eq.~\eqref{eq:channels_mp} to first map node features into two different channels, and then pass them with the adjacency matrix to an ES-layer for splitting the raw network topology into two complementary parts. After that, these two partial network topologies are utilized to aggregate information in different structural spaces. 
Alternatively stacking ES-layer and aggregation layer not only enables more accurate disentanglement but also explores the graph information beyond local neighborhood. 
Finally, a fully connected layer is appended to project the learned representations into class space $\mathbb{R}^C$. We integrate $\mathcal{L}_{\text{ICR}}$ into the optimization process with a irrelevant consistency coefficient $\lambda_{\text{ICR}}$ to have final objective function below, where $\mathcal{L}_{\text{pred}} = -\frac{1}{|\mathcal{V}_{\text{trn}}|}\sum_{v_i \in \mathcal{V}_\text{trn}}\mathbf{y}_i^T\log(\hat{\mathbf{y}}_i)$.
\begin{equation}\label{eq:final_op}
\mathcal{L} = \mathcal{L}_{\text{pred}} + \lambda_{\text{ICR}} \mathcal{L}_{\text{ICR}}.
\end{equation}

It is noted that the method ES-GNN employs in Eq.~(\ref{eq:rs_eq}) for learning edge weights diverges from the $L_2$ space metrics used in our ICR loss. While parameterizing edges with node similarity in $L_2$ space seems straightforward, this method is only feasible for modeling task-relevant and irrelevant channels independently. Such an approach may not fully capture the intricate interactions across different channels, possibly reducing the focus on topological distinctions. Alternatively, our strategy employs a flexible attention mechanism, unlike direct metric computation, allowing for the nuanced learning of weight residuals between different channels. Importantly, the use of learnable $\mathbf{g}$ in Eq.~(\ref{eq:rs_eq}) lends our method a universal fitting capability, enabling it to adapt and bridge potential inconsistencies between different framework components. This flexibility ensures that our model effectively integrates and responds to the diverse dynamics within the graph structure, maintaining our focus on graph disentanglement.

Finally, we also report in Table~\ref{tab:time_comp} the complexity of our ES-GNN in comparison with the baseline models evaluated in the experimental section. Clearly, our model displays the same complexity to FAGCN~\cite{fagcn2021} while being slightly overhead compared to GPR-GNN~\cite{chien2021adaptive}. Here, we omit the related works, such as GEN~\cite{wang2021graph}, WRGAT~\cite{suresh2021breaking}, GloGNN++~\cite{li2022finding}, ACM-GCN~\cite{luan2022revisiting}, and GOAL~\cite{zheng2023finding} as their complexity is obviously higher than others by involving graph reconstruction or node-wise operations.

\section{Theoretical Analysis}
In this section, we investigate two important problems: (1) what limits the generalization power of the conventional GNNs on graphs beyond homophily, and (2) how the proposed ES-GNN breaks this limit and performs well on different types of networks. We will answer these questions by first analyzing the typical GNNs as graph signal denoising from a more generalized viewpoint, and then impose our Hypothesis~\ref{hyp:our} to derive ES-GNN.

\subsection{Limited Generalization  of Conventional GNNs}
Recent studies~\cite{zhu2021interpreting,ma2021unified} have proved that most GNNs can be regarded as solving a graph signal denoising problem:
\begin{equation}
    \mathop{\arg \min }_{\mathbf{Z}}\ \ \|\mathbf{Z} - \mathbf{X}\|^2_2 + \xi \cdot tr(\mathbf{Z}^T\mathbf{L}\mathbf{Z}), \label{eq:1st_gsd}
\end{equation}
where $\mathbf{X} \in \mathbb{R}^{N \times F}$ is the input signal, $\mathbf{L} = \mathbf{D} - \mathbf{A} \in \mathbb{R}^{N\times N}$ is the graph Laplacian matrix, and $\xi$ is a constant coefficient. The first term guides $\mathbf{Z}$ to be close to $\mathbf{X}$, while the second term $tr(\mathbf{Z}^T\mathbf{L}\mathbf{Z})$ is the Laplacian regularization, enforcing smoothness between connected nodes. One fundamental assumption made here is that similar nodes should have a higher tendency to connect each other, and we refer it as \textit{standard smoothness assumption} on graphs. However, real-world networks typically exhibit diverse linking patterns of both assortativity and disassortativity. Constraining smoothness on each node pair is prone to mistakenly preserve both of the task-relevant and irrelevant (or even harmful) information for prediction. Given that, we divide the original graph into two subgraphs with the same nodes sets but complementary edge sets, and reformulate Eq.~\eqref{eq:1st_gsd} as:
\begin{equation}\label{eq:2nd_gsd}
\centering
\begin{split}
\mathop{\arg \min }_{\mathbf{Z}}\ \  \|\mathbf{Z} - \mathbf{X}\|^2_2 + \xi \cdot tr(\mathbf{Z}^T\mathbf{L}_{\text{\text{R}}}\mathbf{Z}) + \xi \cdot tr(\mathbf{Z}^T\mathbf{L}_{\text{\text{IR}}}\mathbf{Z}).
\end{split}
\end{equation}
Here, $\mathbf{L}_{\text{\text{R}}} = \mathbf{D}_{\text{\text{R}}} - \mathbf{A}_{\text{\text{R}}}$, and $\mathbf{L}_{\text{\text{IR}}} = \mathbf{D}_{\text{\text{IR}}} - \mathbf{A}_{\text{\text{IR}}}$, where the task-relevant and irrelevant node relations are separately captured in $\mathbf{A}_{\text{\text{R}}}$ and $\mathbf{A}_{\text{\text{IR}}}$. Clearly, emphasizing the commonality between adjacent nodes in $\mathbf{A}_{\text{\text{R}}}$ is beneficial for keeping task-correlated information only. However, smoothing node pairs in $\mathbf{A}_{\text{\text{IR}}}$ simultaneously may preserve classification-harmful similarity between nodes, thus limiting the prediction performance of GNNs.

\begin{table}[t]\caption{Time complexity of the comparison models with one hidden layer as an example. $N_e$ denotes the number of graph aspects assumed in FactorGCN~\cite{yang2020factorizable}, $D_\text{max}$ represents the maximum node degree, and $|\mathcal{E}_2|$ is the total number of neighbors in the second hop of nodes. Other symbols are earlier defined in the texts.}\label{tab:time_comp}
\centering
\resizebox{0.4\textwidth}{!}{
\begin{tabular}{l|l}
\toprule
\textbf{Models}   &\textbf{Complexity} \\ \midrule
GCN~\cite{kipf2017semi}  &$\mathcal{O}((f+C)|\mathcal{E}|d)$ \\
GAT~\cite{velickovic2018graph}  &$\mathcal{O}(((2+f)N + (4+C)|\mathcal{E}|)d)$ \\
FactorGCN~\cite{yang2020factorizable}  &$\mathcal{O}(N_e N+(Nf+(3+C)|\mathcal{E}|)d)$ \\
H2GCN~\cite{zhu2020beyond}  &$\mathcal{O}(fd + |\mathcal{E}|D_\text{max} + (|\mathcal{E}| + |\mathcal{E}_2|)d)$ \\
FAGCN~\cite{fagcn2021} &$\mathcal{O}(((1+C+f)N + |\mathcal{E}|)d)$ \\
GPR-GNN~\cite{chien2021adaptive} &$\mathcal{O}((fN + |\mathcal{E}|C)d)$ \\
\textbf{ES-GNN (Ours)}  &$\mathcal{O}(((1+C+f)N + |\mathcal{E}|)d)$ \\
\bottomrule
\end{tabular}
}
\end{table}

\subsection{Disentangled Smoothness Assumption in ES-GNN}
Our Hypothesis~\ref{hyp:our} suggests that the original graph topology can be partitioned into two complementary ones, wherein connected nodes displays high similarity with either task-relevant or irrelevant features only. We further interpret this result as \textit{disentangled smoothness assumption}, based on which the conventional graph signal denoising problem in Eq.~\eqref{eq:1st_gsd} can be generalized as:
\begin{equation}\label{eq:disen_gsd}
\begin{aligned}
\mathop{\arg \min}_{\mathbf{Z}_{\text{R}},\mathbf{Z}_{\text{IR}}} \quad 
                    & \|\mathbf{Z}_{\text{R}} - \mathbf{X}_{\text{R}}\|^2_2 + \|\mathbf{Z}_{\text{\text{IR}}} - \mathbf{X}_{\text{\text{IR}}}\|^2_2\\
                    & + \xi_{\text{R}} \cdot tr(\mathbf{Z}^T_{\text{R}}\mathbf{L}_{\text{R}}\mathbf{Z}_{\text{R}}) + \xi_{\text{IR}} \cdot tr(\mathbf{Z}^T_{\text{IR}}\mathbf{L}_{\text{IR}}\mathbf{Z}_{\text{IR}})\\[5pt]
\textrm{where} \quad & \mathbf{L}_\text{R} = \mathbf{D}_\text{R} - \mathbf{A}_\text{R}, \mathbf{L}_\text{IR} = \mathbf{D}_\text{IR} - \mathbf{A}_\text{IR}\\[5pt]
\textrm{s.t.} \quad & \mathbf{A}_\text{R} + \mathbf{A}_\text{IR} = \mathbf{A}\\[5pt]
                    & \mathbf{A}_{\text{R}(i, j)}, \mathbf{A}_{\text{IR}(i, j)} \in [0,1].\\
\end{aligned}
\end{equation}
Here, $\mathbf{A}_{\text{R}(i,j)}$ and $\mathbf{A}_{\text{IR}(i,j)}$ measure the degree to which the node connection $(v_i,v_j)$ are relevant and irrelevant to the learning task, respectively. We further name this optimization as \textit{disentangled graph denoising problem}, and finally derive the following theorem:
\begin{theorem}
The proposed ES-GNN is equivalent to the solution of the disentangled graph denoising problem in Eq.~\eqref{eq:disen_gsd}.
\end{theorem}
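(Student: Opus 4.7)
The plan is to treat the splitting coefficients $\{a_{R(i,j)}, a_{IR(i,j)}\}$ as fixed inputs produced by the ES-layer and then show that one gradient-descent step on the generalized denoising objective in Eq.~\eqref{eq:disen_gsd} reproduces exactly one aggregation step Eq.~\eqref{eq:agg_eq}. Because the two Laplacian-type penalties are decoupled and the fidelity term splits as $\|\mathbf{Z}_R-\mathbf{X}_R\|_2^2+\|\mathbf{Z}_{IR}-\mathbf{X}_{IR}\|_2^2$, the problem separates into two independent per-channel subproblems in $\mathbf{Z}_R$ and $\mathbf{Z}_{IR}$. For each channel $s\in\{R,IR\}$, I identify the input signal with the projected features, i.e. $\mathbf{X}_s := \mathbf{Z}_s^{(0)}$ defined in Eq.~\eqref{eq:channels_mp}, since those are the raw (noisy) signals the aggregation stack is meant to denoise.

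Next I would rewrite each regularization term in matrix form. Using the identity $\sum_{(v_i,v_j)\in\mathcal{E}} a_{s(i,j)}\bigl\|\tfrac{\mathbf{Z}_{s[i,:]}}{\sqrt{d_{s,i}}}-\tfrac{\mathbf{Z}_{s[j,:]}}{\sqrt{d_{s,j}}}\bigr\|_2^{2}=\mathrm{tr}\!\bigl(\mathbf{Z}_s^{T}\tilde{\mathbf{L}}_s\mathbf{Z}_s\bigr)$ with the symmetrically normalized Laplacian $\tilde{\mathbf{L}}_s=\mathbf{I}-\mathbf{D}_s^{-1/2}\mathbf{A}_s\mathbf{D}_s^{-1/2}$, the per-channel objective reads $\mathcal{J}_s(\mathbf{Z}_s)=\|\mathbf{Z}_s-\mathbf{Z}_s^{(0)}\|_2^{2}+\xi\,\mathrm{tr}(\mathbf{Z}_s^{T}\tilde{\mathbf{L}}_s\mathbf{Z}_s)$, so that the stationarity condition is $(\mathbf{I}+\xi\tilde{\mathbf{L}}_s)\mathbf{Z}_s=\mathbf{Z}_s^{(0)}$. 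Since inverting $\mathbf{I}+\xi\tilde{\mathbf{L}}_s$ is expensive, I will instead solve the normal equation iteratively by gradient descent, $\mathbf{Z}_s^{(k+1)}=\mathbf{Z}_s^{(k)}-\eta\,\nabla\mathcal{J}_s(\mathbf{Z}_s^{(k)})$, which after substituting the gradient gives
\begin{equation}
\mathbf{Z}_s^{(k+1)}=\bigl(1-\eta-\eta\xi\bigr)\mathbf{Z}_s^{(k)}+\eta\,\mathbf{Z}_s^{(0)}+\eta\xi\,\mathbf{D}_s^{-1/2}\mathbf{A}_s\mathbf{D}_s^{-1/2}\mathbf{Z}_s^{(k)}.\nonumber
\end{equation}

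Finally I match coefficients with Eq.~\eqref{eq:agg_eq}. Setting $\eta=\epsilon_s$ and $\xi=(1-\epsilon_s)/\epsilon_s$ makes $\eta+\eta\xi=1$, so the bracketed term vanishes and the update collapses to $\mathbf{Z}_s^{(k+1)}=\epsilon_s\mathbf{Z}_s^{(0)}+(1-\epsilon_s)\,\mathbf{D}_s^{-1/2}\mathbf{A}_s\mathbf{D}_s^{-1/2}\mathbf{Z}_s^{(k)}$, which is exactly the ES-GNN aggregation. Applied to both channels, this establishes that iterating Eq.~\eqref{eq:agg_eq} implements proximal/gradient steps on Eq.~\eqref{eq:disen_gsd}, proving the claimed equivalence.

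The main obstacle I anticipate is the asymmetry between the edge-weighted quadratic form written in Eq.~\eqref{eq:disen_gsd} and the symmetrically normalized filter $\mathbf{D}_s^{-1/2}\mathbf{A}_s\mathbf{D}_s^{-1/2}$ appearing in Eq.~\eqref{eq:agg_eq}: the equivalence requires interpreting the pairwise penalty in its normalized form so that the gradient of $\mathrm{tr}(\mathbf{Z}_s^{T}\tilde{\mathbf{L}}_s\mathbf{Z}_s)$ yields precisely $\tilde{\mathbf{L}}_s\mathbf{Z}_s$. A secondary subtlety is justifying that the splitting coefficients can be treated as constants in this argument even though they depend on $\mathbf{Z}_R^{(k)},\mathbf{Z}_{IR}^{(k)}$ through Eq.~\eqref{eq:rs_eq}; I would handle this by viewing each ES-layer as a block-coordinate step that refreshes $a_{R(i,j)},a_{IR(i,j)}$ between successive denoising iterations, so that within any single aggregation step they are held fixed and the convex subproblem becomes the one solved above.
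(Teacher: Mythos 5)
Your proposal is correct and follows essentially the same route as the paper: identify $\mathbf{X}_s$ with $\mathbf{Z}_s^{(0)}$, hold the splitting coefficients fixed so Eq.~\eqref{eq:disen_gsd} decouples into two per-channel denoising problems (your block-coordinate view is exactly the paper's iterative ``stage learning'' in which the ES-layer refreshes $a_{R(i,j)},a_{IR(i,j)}$ between aggregation steps), and show that one gradient-descent step on the normalized-Laplacian objective reproduces Eq.~\eqref{eq:agg_eq} with $\xi=\tfrac{1}{\epsilon_s}-1$, which is precisely the paper's Lemma~\ref{lm:agg}. The only cosmetic difference is a factor of $2$ absorbed into your stepsize (your $\eta=\epsilon_s$ corresponds to the paper's $\beta=\tfrac{1}{2+2\xi}=\tfrac{\epsilon_s}{2}$), which does not affect the conclusion.
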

\begin{proof}
Let $\mathbf{X}_\text{\text{R}} \in \mathbb{R}^{\frac{d}{2}}$ and $\mathbf{X}_{\text{\text{IR}}} \in \mathbb{R}^{\frac{d}{2}}$ be the results of mapping $\mathbf{X}$ into different channels in Eq.~\eqref{eq:channels_mp}, i.e., $\mathbf{X}_{\text{\text{R}}}=\mathbf{Z}_{\text{\text{R}}}^{(0)}$ and $\mathbf{X}_{\text{\text{IR}}}=\mathbf{Z}_{\text{\text{IR}}}^{(0)}$.  Hypothesis~\ref{hyp:our} motivates us to define $\mathbf{A}_{\text{R}(i,j)}$ and $\mathbf{A}_{\text{IR}(i,j)}$ as node similarity in two aspects. Combining above constraints, we have a linear system in case of $\mathbf{A}_{(i,j)}=1$:
\begin{equation} 
\begin{cases}
\mathbf{A}_{\text{R}(i,j)} + \mathbf{A}_{\text{IR}(i,j)} = 1\\ 
\mathbf{A}_{\text{R}(i,j)} - \mathbf{A}_{\text{IR}(i,j)} = \phi_\text{res}(\mathbf{Z}_{\text{R}[i,:]}, \mathbf{Z}_{\text{IR}[i,:]}, \mathbf{Z}_{\text{R}[j,:]}, \mathbf{Z}_{\text{IR}[j,:]})
\end{cases},
\end{equation}
where $\phi_\text{res}(\cdot)$ outputs the residual between $\mathbf{A}_{\text{R}(i,j)}$ and $\mathbf{A}_{\text{IR}(i,j)}$ considering both task-relevant and irrelevant node information, and can be formulated with our residual scoring mechanism in Eq.~\eqref{eq:rs_eq}. 
Solving above equations, we can express both $\mathbf{A}_\text{R}$ and $\mathbf{A}_\text{IR}$ in terms of $\mathbf{Z}_\text{R}$ and $\mathbf{Z}_\text{IR}$, i.e.,
\begin{equation}
\mathbf{A}_{\text{R}(i,j)} = \frac{1}{2}(1 + \alpha_{i, j}),\ 
\mathbf{A}_{\text{IR}(i,j)} = \frac{1}{2}(1 - \alpha_{i, j})\label{eq:theo_res}.
\end{equation}
where $\alpha_{i, j} = \phi_\text{res}(\mathbf{Z}_{\text{R}[i,:]}, \mathbf{Z}_{\text{IR}[i,:]}, \mathbf{Z}_{\text{R}[j,:]}, \mathbf{Z}_{\text{IR}[j,:]})$.
So far, the optimization problem in Eq.~\eqref{eq:disen_gsd} is only made up of variables $\mathbf{X}_\text{R}$, $\mathbf{X}_\text{IR}$, $\mathbf{Z}_\text{R}$, and $\mathbf{Z}_\text{IR}$. Directly solving it is still however not easy, as the mixing variables of $\mathbf{Z}_\text{R}$ and $\mathbf{Z}_\text{IR}$, and the introduced non-linear operator in $\phi_{\text{res}(\cdot)}$ result in a complicated differentiation process. 

Instead, we can approach this problem by decoupling the learning of $\mathbf{A}_\text{R},\mathbf{A}_\text{IR}$ from the optimization target, and employ an alternative learning between stages.
Suppose we have attained the task-relevant and irrelevant node features in the $k^{th}$ round, i.e., $\mathbf{Z}_\text{R}^{(k)}$ and $\mathbf{Z}_\text{IR}^{(k)}$. In the first stage, we can compute $\mathbf{A}_{\text{R}(i,j)}^{(k+1)}$ and $\mathbf{A}_{\text{IR}(i,j)}^{(k+1)}$ 
using $\{\mathbf{Z}_{\text{\text{R}}[i,:]}^{(k)}, \mathbf{Z}_{\text{\text{IR}}[i,:]}^{(k)}, \mathbf{Z}_{\text{\text{R}},[j,:]}^{(k)}, \mathbf{Z}_{\text{\text{IR}}[j,:]}^{(k)}\}$ with Eq.~\eqref{eq:theo_res}, which in fact turns out to be our ES-layer in Section~\ref{sec:es_layer}. 

In the second stage, injecting the computed values of $\mathbf{A}_{\text{R}(i,j)}^{(k+1)}$ and $\mathbf{A}_{\text{IR}(i,j)}^{(k+1)}$ relaxes the mixture of variables $\mathbf{Z}_\text{R}$ and $\mathbf{Z}_\text{IR}$, and the original optimization problem can then be disentangled into two independent targets (as all four penalized terms are positive):

\begin{align}
&\mathop{\arg \min }_{\mathbf{Z}_\text{\text{R}}^*} \|\mathbf{Z}_\text{\text{R}}^* - \mathbf{Z}_\text{\text{R}}^{(0)}\|^2_2 + \xi_{\text{R}} \cdot tr({\mathbf{Z}_\text{\text{R}}^*}^T\mathbf{L}_\text{\text{R}}^{(k)}\mathbf{Z}_\text{\text{R}}^*)\label{eq:rel_gsd}\\
&\mathop{\arg \min }_{\mathbf{Z}_{\text{\text{IR}}}^*} \|\mathbf{Z}_{\text{\text{IR}}}^* - \mathbf{Z}_{\text{\text{IR}}}^{(0)}\|^2_2 + \xi_{\text{IR}} \cdot tr({\mathbf{Z}_{\text{\text{IR}}}^*}^T\mathbf{L}_{\text{\text{\text{IR}}}}^{(k)}\mathbf{Z}_{\text{\text{IR}}}^*)\label{eq:irr_gsd}
\end{align}
where $\mathbf{L}_\text{R}^{(k)} = \mathbf{D}_\text{R}^{(k)} - \mathbf{A}_\text{R}^{(k)}$ and $\mathbf{L}_\text{IR}^{(k)} = \mathbf{D}_\text{IR}^{(k)} - \mathbf{A}_\text{IR}^{(k)}$ are fixed values. Lemma~\ref{lm:agg}, on the $\text{\text{R}}$ channel as an example, further shows that our aggregation layer, on the task-relevant and irrelevant topologies, in Section~\ref{sec:agg_ly} is approximately solving these two optimization problems in Eq.~\eqref{eq:rel_gsd} and Eq.~\eqref{eq:irr_gsd}.

Therefore, stacking ES- and aggregation layers iteratively is equivalent to the above alternative learning for solving the \textit{disentangled graph denoising problem} in Eq.~\eqref{eq:disen_gsd} with $\mathbf{X}_\text{R}=\mathbf{Z}_\text{R}^{(0)}$ and $\mathbf{X}_\text{IR}=\mathbf{Z}_\text{IR}^{(0)}$. 
Finally, given $\mathbf{Z}_{\text{\text{R}}}^{(K)}$ and $\mathbf{Z}_{\text{\text{IR}}}^{(K)}$, we minimize the prediction loss~$\mathcal{L}_{\text{pred}}$ and the Irrelevant Consistency Regularization~$\mathcal{L}_{\text{ICR}}$ in Eq.~\eqref{eq:final_op} with Adam~\cite{kingma2014adam} algorithm, which imposes concrete meanings on different channels, and simultaneously ensures the convergence of our described alternative learning.
\end{proof}

\begin{lemma}\label{lm:agg}
When adopting the normalized Laplacian matrix $\mathbf{L}_{\text{\text{R}}} = \mathbf{I} - \mathbf{D}_{\text{\text{R}}}^{-\frac{1}{2}} \mathbf{A}_{\text{\text{R}}} \mathbf{D}_{\text{\text{R}}}^{-\frac{1}{2}}$, the feature aggregation operator in Eq.~\eqref{eq:agg_eq} with channel $s=\text{\text{R}}$ can be regarded as solving Eq.~\eqref{eq:rel_gsd} using iterative gradient descent with stepsize $\beta=\frac{1}{2+2\xi_{\text{R}}}$ and $\xi_{\text{R}}=\frac{1}{\epsilon_{\text{\text{R}}}} - 1$.
\end{lemma}

\begin{proof}
We take iterative gradient descent with the stepsize $\beta$ to solve the denoising problem in Eq.~\eqref{eq:rel_gsd} (referred as $\mathcal{L}_\text{R}$) as follows:
{\small
\begin{align}
    \mathbf{Z}_\text{R}^{(k+1)} 
    &= \mathbf{Z}_\text{R}^{(k)} - \beta \cdot \dfrac{\partial \mathcal{L}_\text{R}}{\partial \mathbf{Z}_\text{R}^{*}}\at{\mathbf{Z}_\text{R}^{*}=\mathbf{Z}_\text{R}^{(k)}}\\
    &= 2\beta \mathbf{Z}_\text{R}^{(0)} + 2 \beta \xi_{\text{R}} (\mathbf{D}_\text{R}^{-\frac{1}{2}} \mathbf{A}_\text{R} \mathbf{D}_\text{R}^{-\frac{1}{2}})\mathbf{Z}_\text{R}^{(k)} + (1 - 2\beta - 2\beta \xi_{\text{R}}) \mathbf{Z}_\text{R}^{(k)}.
\end{align}
}
Setting $\beta$ as $\frac{1}{2 + 2\xi_{\text{R}}}$ gives us:
\begin{equation}
    \mathbf{Z}_\text{R}^{(k+1)} = \frac{1}{1 + \xi_{\text{R}}} \mathbf{Z}_\text{R}^{(0)} + \frac{\xi_{\text{R}}}{1 + \xi_{\text{R}}} (\mathbf{D}_\text{R}^{-\frac{1}{2}} \mathbf{A}_\text{R} \mathbf{D}_\text{R}^{-\frac{1}{2}}) \mathbf{Z}_\text{R}^{(k)},
\end{equation}
which is equivalent to Eq.~\eqref{eq:agg_eq} while choosing $\xi_{\text{R}}=\frac{1}{\epsilon_\text{R}} - 1$, i.e., 
\begin{equation}
    \mathbf{Z}_\text{R}^{(k+1)} = \epsilon_\text{R} \mathbf{Z}_\text{R}^{(0)} + (1-\epsilon_\text{R}) (\mathbf{D}_\text{R}^{-\frac{1}{2}} \mathbf{A}_\text{R} \mathbf{D}_\text{R}^{-\frac{1}{2}}) \mathbf{Z}_\text{R}^{(k)}.
\end{equation}
\end{proof}

As the possible classification-harmful similarity between nodes (hidden in $\mathbf{A}_\text{IR}$) can be excluded from $\mathbf{Z}_\text{R}$ and disentangled in $\mathbf{Z}_\text{IR}$ while optimizing Eq.~\eqref{eq:disen_gsd}, our ES-GNN presents a universal approach that theoretically guarantees good performance on different types of networks.

\subsection{Aligning Disentangled and Conventional Problems}
It is noted that Eq.~(\ref{eq:1st_gsd}) can be interpreted as a special case of Eq.~(\ref{eq:disen_gsd}) in specific graph scenarios. This situation arises in graphs where only edges connecting nodes with identical labels exist, indicating that the similarity between adjacent nodes should be beneficial in predicting their shared label. In such cases, task-irrelevant edges may not exist, as all connections inherently support the task. Consequently, in this scenario, the objective term involving $\mathbf{L}_{\text{IR}}$ in Eq.~(\ref{eq:disen_gsd}) becomes redundant, amounting to zero, and the need for disentangling $\mathbf{Z}_{\text{R}}$ and $\mathbf{Z}_{\text{IR}}$ is obviated. This leads to the simplification of Eq.~(\ref{eq:disen_gsd}) into the conventional graph denoising problem Eq.~(\ref{eq:1st_gsd}), conforming to the standard smoothness assumption across the entire graph.

In practice, completely smooth graphs devoid of edges linking nodes with different labels are rare. Nevertheless, for homophilic graphs where most edges connect nodes from the same class, such as in the citation network Cora with homophily ratio 0.81, Eq.~(\ref{eq:1st_gsd}) serves as a close approximation of Eq.~(\ref{eq:disen_gsd}). This approximation holds as the term involving $\mathbf{L}_{\text{IR}}$ in Eq.~(\ref{eq:disen_gsd}) becomes negligible and the inherently classification-harmful information in the graph is almost non-existent. Empirical evidence from Fig.~\ref{fig:feat_corr} in our study reinforces this understanding, showing that on homophilic graphs like Cora, the majority of informative content is retained in the task-relevant channel, underscoring the minimal presence of classification-harmful information.

\section{Experiments}\label{sec:exp}
We empirically evaluate our ES-GNN for node classification using both synthetic and real-world datasets in this section.

\subsection{Datasets and Experimental Setup}

\subsubsection{Real-World Datasets}
We consider 11 widely used benchmark datasets including both seven heterophilc graphs, i.e., Chameleon, Squirrel~\cite{rozemberczki2021multi}, Wisconsin, Cornell, Texas~\cite{Pei2020GeomGCNGG} (webpage networks), Actor~\cite{tang2009social} (co-occurrence network), and Twitch-DE~\cite{rozemberczki2021multi,lim2021large} (social network), as well as four homophilic graphs including Cora, Citeseer, Pubmed~\cite{Sen2008CollectiveCI} (citation networks), and Polblogs~\cite{adamic2005political,jin2020graph} (community network) with statistics shown in Table~\ref{tab:data_sta}. For Polblogs dataset, since node features are not provided, we use the rows of the adjacency matrix.

\begin{table}[t]
\centering
\caption{Statistics of real-world datasets.}
\label{tab:data_sta}
\label{tab:data_sta}
\setlength\tabcolsep{4pt}
\resizebox{0.48\textwidth}{!}{
\begin{tabular}{lccccccc}
\toprule
\textbf{Dataset}   &$|\mathcal{V}|$ &$|\mathcal{E}|$ &$F$ &$C$ & $\mathcal{H}$ & $\mathcal{H}_{\text{class}}$ & $\mathcal{H}_{\text{adjusted}}$ \\
\midrule
\textbf{Squirrel}  &5,201       &217,073       &2,089          &5         &0.22   &0.03   &0.01\\
\textbf{Chameleon} &2,227       &36,101       &2,325          &5         &0.23   &0.06   &0.03\\
\textbf{Wisconsin} &251       &499       &1,703          &5         &0.21   &0.09   &-0.17\\
\textbf{Cornell}   &183       &295       &1,703          &5         &0.30   &0.05   &-0.08\\
\textbf{Texas}     &183       &309       &1,703          &5         &0.11   &0.00   &-0.23\\
\textbf{Twtich-DE} &9,498       &153,138       &2,545          &2         &0.63   &0.14   &0.14\\
\textbf{Actor} &7,600   &33,544   &931   &5   &0.22   &0.01   &0.00\\
\midrule
\textbf{Cora}      &2,708       &5,429       &1,433          &7         &0.81   &0.77   &0.77\\
\textbf{Citeseer}  &3,327       &4,732       &3,703          &6         &0.74   &0.63   &0.67\\
\textbf{Pubmed}    &19,717       &44,338       &500       &3       &0.80       &0.66       &0.69\\
\textbf{Polblogs} &1,222   &16,714   &/   &2   &0.91   &0.81   &0.81\\
\bottomrule
\end{tabular}
}
\end{table}

\begin{table}[t]
\caption{Parameters for synthesizing graphs with varying homophily ratios.}
\label{tab:syn_param}
\centering
\setlength\tabcolsep{3pt}
\resizebox{0.48\textwidth}{!}{
\begin{tabular}{cccccccccccc}
\toprule
$\mathcal{H}_\text{syn}$ & \textbf{0.0} & \textbf{0.1} & \textbf{0.2} & \textbf{0.3} & \textbf{0.4} & \textbf{0.5} & \textbf{0.6} & \textbf{0.7} & \textbf{0.8} & \textbf{0.9} & \textbf{1.0}\\ 
\midrule
$P_\text{E}$       
&0.02           &0.06           &0.1           &0.2           &0.4           &0.4           &0.6           &0.7           &0.8           &0.9           &0.96           \\
$P_\text{I}$       
&0.72           &0.81           &0.6           &0.7           &0.9           &0.6           &0.6           &0.45           &0.3           &0.15           &0.045           \\
$\omega$          
&0.1           &0.084           &0.1           &0.075           &0.05           &0.062           &0.05           &0.05           &0.05           &0.05           &0.051           \\ 
\bottomrule
\end{tabular}}
\end{table}

\begin{figure}[t]
\centering
\includegraphics[clip, trim=0cm 15cm 0cm 0cm,width=0.48\textwidth]{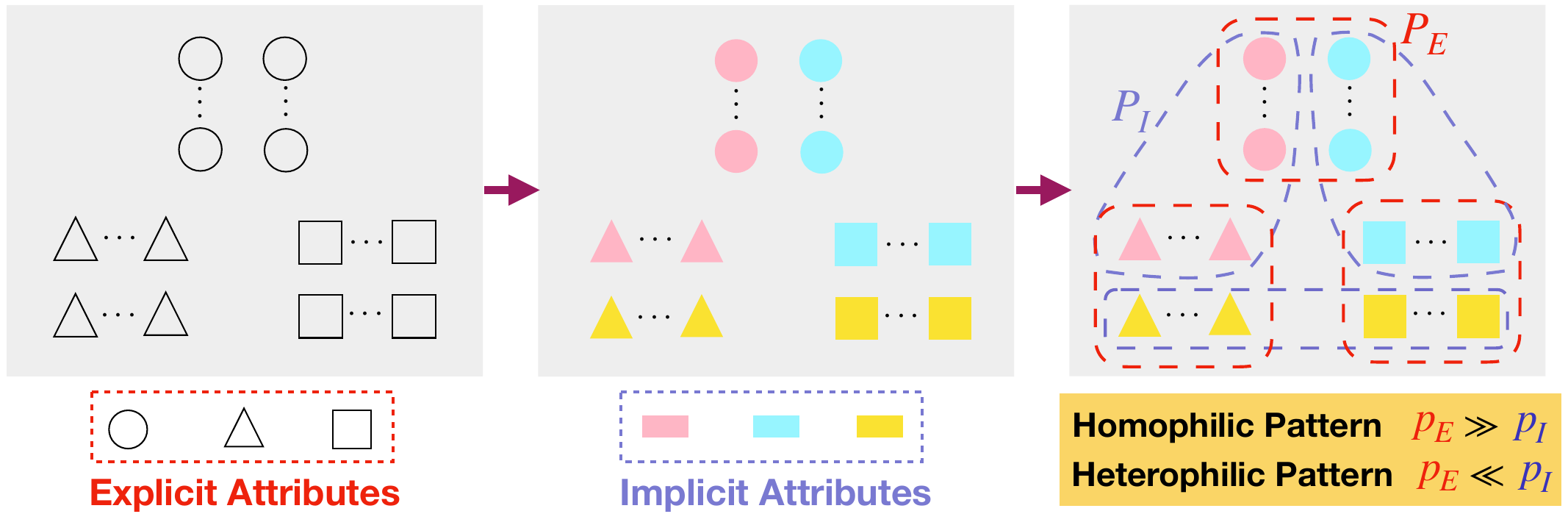}
\caption{Synthetic graphs with varying levels of homophily. Node shape and color refer to the explicit and implicit attributes, respectively. Nodes sharing the same shape (or color) are connected with a probability of $P_\text{E}$ (or $P_\text{I}$) and are classified into three categories only based on their different shapes. In this context, ``shape'' attributes represent task-relevant features, whereas ``color'' attributes denote task-irrelevant ones. It can be intuitively observed that adequate disentanglement of these attributes is crucial for classification tasks; otherwise, model prediction will inevitably suffer, as misled by the task-irrelevant ``color'' information.}
\label{fig:gen_syn}
\end{figure}

\begin{table*}[t]
\caption{Node classification accuracies (\%) over 100 runs. Error Reduction gives the average improvement of ES-GNN upon baselines w/o Basic GNNs.}\label{tab:real_nc}
\setlength\tabcolsep{4pt}
\resizebox{0.98\textwidth}{!}{
\begin{tabular}{lccccccc|cccc}
\toprule
\multirow{2}{*}{\textbf{Datasets}} & \multicolumn{7}{c|}{\textbf{Heterophilic Graphs}}                                          & \multicolumn{4}{c}{\textbf{Homophilic Graphs}}                                            \\ 
\cmidrule{2-12}
& \textbf{Squirrel}    & \textbf{Chameleon}   & \textbf{Wisconsin} & \textbf{Cornell} & \textbf{Texas}  & \textbf{Twitch-DE}      &{\textbf{Actor}}    & \textbf{Cora}        & \textbf{Citeseer}    & \textbf{Pubmed}      & \textbf{Polblogs}    \\ 
\midrule
GCN~\cite{kipf2017semi}     
&55.2{$\pm$1.5}    &67.6{$\pm$2.0}    &59.5{$\pm$3.6}    &52.8{$\pm$6.0}     &61.7{$\pm$3.7}        &74.0{$\pm$1.2}  &{31.2{$\pm$1.3}}  
&79.7{$\pm$1.2}    &69.5{$\pm$1.7}      &78.7{$\pm$1.6}     &89.4{$\pm$0.9}           \\
SGC~\cite{wu2019simplifying}     
&50.7{$\pm$1.3}    &61.9{$\pm$2.6}    &53.7{$\pm$3.9}    &51.2{$\pm$0.9}     &51.4{$\pm$2.2}        &73.9{$\pm$1.3}    &{30.9{$\pm$0.6}}   
&79.1{$\pm$1.0}    &69.9{$\pm$2.0}    &76.6{$\pm$1.3}    &89.0{$\pm$1.5}        \\
GAT~\cite{velickovic2018graph}     
&54.8{$\pm$2.2}    &67.3{$\pm$2.2}    &57.9{$\pm$4.5}    &50.4{$\pm$5.9}     &55.4{$\pm$5.9}        &73.7{$\pm$1.3}    &{30.5{$\pm$1.2}} 
&82.0{$\pm$1.1}    &69.9{$\pm$1.7}    &78.6{$\pm$2.0}      &87.4{$\pm$1.1}              \\ 
\midrule
NeuralSparse~\cite{zheng2020robust}
&40.0{$\pm$1.6}
&60.5{$\pm$2.0}
&70.8{$\pm$3.4}
&64.1{$\pm$5.5}
&66.4{$\pm$5.7}
&71.3{$\pm$1.3}
&35.5{$\pm$1.1}
&78.5{$\pm$1.4}
&69.7{$\pm$1.8}
&79.1{$\pm$1.2}
&89.3{$\pm$0.9}
\\
GCN-LPA~\cite{wang2020unifying}
&54.2{$\pm$1.1}
&63.4{$\pm$1.9}
&63.3{$\pm$3.7}
&65.6{$\pm$7.3}
&61.2{$\pm$7.6}
&74.0{$\pm$1.2}
&37.8{$\pm$0.9}
&80.4{$\pm$1.5}
&69.7{$\pm$1.7}
&79.7{$\pm$1.3}
&\textbf{89.7{$\pm$0.8}}\\
\midrule
DisenGCN~\cite{ma2019disentangled}
&42.4{$\pm$1.6}
&58.4{$\pm$2.3}
&78.1{$\pm$4.0}
&77.4{$\pm$4.4}
&71.3{$\pm$5.7}
&73.5{$\pm$1.7}
&36.7{$\pm$1.2}
&81.5{$\pm$1.3}
&69.2{$\pm$1.7}
&\underline{80.0{$\pm$1.6}}
&89.5{$\pm$0.9}
\\
FactorGCN~\cite{yang2020factorizable}     
&56.6{$\pm$2.4}        &69.8{$\pm$2.0}        &64.2{$\pm$4.8}        &50.6{$\pm$1.8}        &69.5{$\pm$6.5}        &73.1{$\pm$1.4}        &{29.0{$\pm$1.4}}            &75.2{$\pm$1.6}        &61.6{$\pm$2.0}        &72.9{$\pm$2.3}        &87.9{$\pm$1.7}\\    
VEPM~\cite{he2022variational}
&50.3{$\pm$1.7}
&67.3{$\pm$2.1}
&55.6{$\pm$4.9}
&51.2{$\pm$7.0}
&55.8{$\pm$4.3}
&73.3{$\pm$1.2}
&29.3{$\pm$1.1}
&82.2{$\pm$1.2}
&69.1{$\pm$1.9}
&78.8{$\pm$1.6}
&89.5{$\pm$0.9}\\
DisGNN~\cite{zhao2022exploring}
&55.1{$\pm$4.8}
&68.2{$\pm$1.9}
&54.6{$\pm$5.4}
&52.0{$\pm$5.7}
&60.6{$\pm$3.9}
&69.2{$\pm$0.8}
&30.2{$\pm$1.3}
&78.2{$\pm$1.4}
&66.2{$\pm$2.2}
&77.6{$\pm$1.7}
&\underline{89.6{$\pm$0.9}}
\\
\midrule
GEN~\cite{wang2021graph}     
&36.0{$\pm$4.0}    &57.6{$\pm$3.1}    &83.3{$\pm$3.6}    &81.0{$\pm$3.9}     &78.3{$\pm$8.0}        &74.1{$\pm$1.4}    &{37.3{$\pm$1.4}}
&79.8{$\pm$1.3}    &69.7{$\pm$1.6}    &78.9{$\pm$1.7}    &\underline{89.6{$\pm$1.4}}        \\
WRGAT~\cite{suresh2021breaking}     
&39.6{$\pm$1.4}    &57.7{$\pm$1.6}    &82.9{$\pm$4.5}    &79.2{$\pm$3.5}     &80.5{$\pm$6.1}        &70.0{$\pm$1.3}    &{\underline{38.6{$\pm$1.1}}}  
&71.7{$\pm$1.5}    &64.1{$\pm$1.9}     &73.3{$\pm$2.1}    &88.2{$\pm$1.2}        \\
H2GCN~\cite{zhu2020beyond}     
&45.1{$\pm$1.9}    &62.9{$\pm$1.9}    &82.6{$\pm$4.0}    &79.6{$\pm$4.9}     &79.8{$\pm$7.3}        &73.1{$\pm$1.5}    &{38.4{$\pm$1.0}} 
&81.4{$\pm$1.4}     &68.7{$\pm$2.0}    &78.0{$\pm$2.0}    &89.0{$\pm$1.0}         \\
FAGCN~\cite{fagcn2021}
&50.4{$\pm$2.6}    &68.9{$\pm$1.8}    &82.3{$\pm$4.4}    &79.4{$\pm$5.5}     &80.3{$\pm$5.5}        &74.1{$\pm$1.4}    &{37.9{$\pm$1.0}}  
&\underline{82.6{$\pm$1.3}}     &70.3{$\pm$1.6}     &\underline{80.0{$\pm$1.7}}    &89.3{$\pm$1.1}         \\
GPR-GNN~\cite{chien2021adaptive}     
&54.1{$\pm$1.6}    &69.6{$\pm$1.7}    &82.7{$\pm$4.1}    &79.9{$\pm$5.3}     &81.7{$\pm$4.9}        &74.0{$\pm$1.6}    &{38.0{$\pm$1.1}}  
&81.5{$\pm$1.5}    &69.6{$\pm$1.7}    &79.8{$\pm$1.3}    &89.5{$\pm$0.8}        \\ 
GloGNN++~\cite{li2022finding}
&\underline{63.3{$\pm$1.2}}
&71.4{$\pm$2.0}
&\underline{84.9{$\pm$4.2}}
&82.0{$\pm$3.5}
&81.4{$\pm$5.6}
&72.8{$\pm$1.1}
&38.2{$\pm$1.2}
&80.9{$\pm$1.4}
&\underline{70.5{$\pm$1.9}}
&76.8{$\pm$2.1}
&\underline{89.6{$\pm$0.8}}
\\
ACM-GCN~\cite{luan2022revisiting}
&\textbf{67.0{$\pm$1.3}}
&\textbf{75.3{$\pm$2.2}}
&84.3{$\pm$4.5}
&\underline{82.1{$\pm$4.9}}
&\underline{82.2{$\pm$5.9}}
&\underline{74.2{$\pm$0.9}}
&36.6{$\pm$1.0}
&81.3{$\pm$1.0}
&69.4{$\pm$1.7}
&79.5{$\pm$1.4}
&\underline{89.6{$\pm$0.9}}
\\
GOAL~\cite{zheng2023finding}
&57.9{$\pm$0.9}
&71.3{$\pm$2.0}
&70.5{$\pm$5.1}
&54.9{$\pm$6.6}
&72.0$\pm$7.4
&68.5$\pm$1.5
&36.3{$\pm$1.0}
&80.6{$\pm$1.4}
&69.7$\pm$2.0
&78.7$\pm$1.3
&88.7{$\pm$1.6}
\\
\midrule
ES-GNN (ours)    
&62.4{$\pm$1.4}    &\underline{72.3{$\pm$2.1}}    &\textbf{85.3{$\pm$4.6}}    &\textbf{82.2{$\pm$4.0}}     &\textbf{82.3{$\pm$5.7}}        &\textbf{74.7{$\pm$1.1}}  &{\textbf{38.9{$\pm$0.8}}}  
&\textbf{83.0{$\pm$1.1}}    &\textbf{70.7{$\pm$1.7}}     &\textbf{80.7{$\pm$1.4}}    &\textbf{89.7{$\pm$0.9}}         \\ 
Error Reduction 
&\textbf{11.5\%}
&\textbf{6.4\%}
&\textbf{11.0\%}
&\textbf{11.7\%}
&\textbf{9.4\%}
&\textbf{2.2\%}
&\textbf{3.2\%}
&\textbf{3.3\%}
&\textbf{2.3\%}
&\textbf{2.6\%}
&\textbf{0.5\%}
\\
\bottomrule
\end{tabular}}
\end{table*}

\subsubsection{Synthetic Data}\label{sec:syn_data_gen}
To investigate the behavior of GNNs on graphs with arbitrary levels of homophily and heterophily, we consider the contextual stochastic block model (CSBM)~\cite{deshpande2018contextual,palowitch2022synthetic} to construct synthetic graphs with our Hypothesis~\ref{hyp:our} as guide. The central idea is to define links among nodes under two conditions independently, of which only one is correlated with the classification task. We consider 1,200 nodes, 3 equal-size classes, and 500 node features made up of both explicit and implicit attributes. The explicit attributes determine the label assignment, while implicit ones model dependency across different classes. Fig.~\ref{fig:gen_syn} further illustrates their allocation to nodes with ``shape'' and ``color'' as an example. Notably, all these attributes in six types (three explicit and three implicit ones) are randomly sampled from different Gaussian distributions, each pair of them are combined via element-wise addition to attain the final node features. For instance, the features of a node (from class-$i$) with explicit attribute-$i$ and implicit attribute-$j$ are defined as the addition of two random vectors respectively sampled from $\mathcal{N}(\boldsymbol{\mu}_{\text{E},i},\boldsymbol{\sigma}_{\text{E},i})$ and $\mathcal{N}(\boldsymbol{\mu}_{\text{I},j},\boldsymbol{\sigma}_{\text{I},j})$, where $\boldsymbol{\mu}_{\text{E},i},\boldsymbol{\mu}_{\text{I},j} \in \mathbb{R}^{F_\text{syn}}$ are means, $\boldsymbol{\sigma}_{\text{E},i},\boldsymbol{\sigma}_{\text{I},j} \in \mathbb{R}^{F_\text{syn} \times F_\text{syn}}$ are the associated covariance matrixes, and $F_\text{syn}=500$ is the feature dimensions. Then, we connect nodes with probability $P_\text{E}$ if they are from the same class (the task-relevant condition), with probability $P_\text{I}$ if they share different labels but
posses implicit attributes from the same distribution (the task-irrelevant condition). For all other cases, we connect nodes with probability $q$ in a small value, $1\mathrm{e}{-5}$ in this work for ensuring a connected graph. Since no class-imbalance problem exists here, the homophily ratios of our generated graphs are measured using index $\mathcal{H}$. Intuitively, we could anticipate heterophilic connecting pattern when setting $P_\text{E} \ll P_\text{I}$, and strong homophily otherwise. Quantitatively, the relationship between the homophily ratio $\mathcal{H}_\text{syn}$ and parameters $P_\text{E}$, $P_\text{I}$ can be derived with the simple knowledge on combinatorics and statistics while omitting the small value of $q$: 
$\mathcal{H}_\text{syn}(P_\text{E}, P_\text{I}) = \frac{3 (N_\text{syn}-3) }{3 (N_\text{syn}-3)  + 2 N_\text{syn} \frac{P_\text{I}}{P_\text{E}}}$,
with $N_\text{syn}$ being the total number of nodes. Clearly, we have $\mathcal{H}_\text{syn} \to 0$ while $P_\text{I} \gg P_\text{E}$, and $\mathcal{H}_\text{syn} \to 1$ while $P_\text{I} \ll P_\text{E}$. To avoid possible computational overhead, we also need to control the average node degree of our synthetic graphs. Similarly, we can approximately derive it as the function of $P_\text{I}$ and $P_\text{E}$:
$\mathcal{T}(P_\text{E}, P_\text{I}) = \frac{N_\text{syn}-3}{3}P_\text{E} + \frac{4 N_\text{syn}}{9}P_\text{I}$.
Give this, 
we have that $\mathcal{H}_\text{syn}(\cdot)$ is a function of the fraction between $P_\text{E}$ and $P_\text{I}$ with fixed $n$, and $\mathcal{T}(\cdot)$ is linearly correlated with $P_\text{E}$ and $P_\text{I}$. As such, given fixed $P_\text{E}$ and $P_\text{I}$ attaining certain $\mathcal{H}_{\text{syn}}$, we can almost attain the average node degree in any values with a scaling parameter $\omega$, i.e., average degree $=\omega \cdot \mathcal{T}(P_\text{E}, P_\text{I})=\mathcal{T}(\omega \cdot P_\text{E}, \omega \cdot P_\text{I})$ without changing $\mathcal{H}_\text{syn}$. 
In this work, we tune all these parameters such that the average degree is around 20, and list the experimented values in Table~\ref{tab:syn_param}.

\subsubsection{Data Splitting}
For heterophilic graphs and our synthetic graphs, we divides each dataset into 60\%/20\%/20\% corresponding to training/validation/testing to follow~\cite{Pei2020GeomGCNGG,zhu2020beyond,chien2021adaptive}. For homophilic graphs, we adopt the popular sparse splitting~\cite{kipf2017semi,velickovic2018graph,wu2019simplifying}, i.e., 20 nodes per class, 500 nodes, and 1,000 nodes to train, validate, and test models. For each dataset, 10 random splits are created for evaluation.

\subsubsection{Baselines}
We compare our ES-GNN with 17 baseline models, categorized into four groups: (1) Basic GNNs: GCN~\cite{kipf2017semi}, SGC~\cite{wu2019simplifying}, and GAT~\cite{velickovic2018graph}; (2) GNNs prioritizing task-relevance:, NeuralSparse~\cite{zheng2020robust}, and GCN-LPA~\cite{wang2020unifying}; (3) GNNs disentangling graphs: DisenGCN~\cite{ma2019disentangled}, FactorGCN~\cite{yang2020factorizable}, DisGNN~\cite{zhao2022exploring}, and VEPM~\cite{he2022variational}; (4) GNNs tailored for heterophily: GEN~\cite{wang2021graph}, WRGAT~\cite{suresh2021breaking}, H2GCN~\cite{zhu2020beyond}, FAGCN~\cite{fagcn2021}, GPR-GNN~\cite{chien2021adaptive}, GloGNN++~\cite{li2022finding}, ACM-GCN~\cite{luan2022revisiting}, and GOAL~\cite{zheng2023finding}.

\subsubsection{Implementation Details}
For all the baselines and our model, we set $d=64$ as the number of hidden states for fair comparison, and tune the hyper-parameters on the validation split of each dataset using Optuna~\cite{akiba2019optuna} for 200 trials. With the best hyper-parameters, we train models in 1,000 epochs using the early-stopping strategy with a patience of 100 epochs. We then report the models' average performance across 10 runs on the test set for each of the 10 random splits, leading to a total of 100 runs. For reproducibility, we provide the searching space of our hyper-parameters: learning rate $\sim \left[1\mathrm{e}{-2}, 1\mathrm{e}{-1}\right]$, weight decay $\sim \left[1\mathrm{e}{-6}, 1\mathrm{e}{-3}\right]$, dropout $\sim \{0,0.1,...,0.8\}$ with step $0.1$, the number of layers $K \sim \{1,2,...,8\}$ with step 1, scaling parameters $\epsilon_\text{R},\epsilon_\text{IR} \sim  \left[5\mathrm{e}{-2}, 0.5\right]$, and irrelevant consistency coefficient $\lambda_\text{ICR} \sim \left[0,1\right]$ for Cora, Citeseer, Pubmed, and Twitch-DE, $\left[5\mathrm{e}{-8},5\mathrm{e}{-6}\right]$ for Chameleon, Wisconsin, Cornell, and Texas, $\left[5\mathrm{e}{-5},5\mathrm{e}{-3}\right]$ for Squirrel, and $\left[5\mathrm{e}{-3},5\mathrm{e}{-2}\right]$ for Actor. Our implementation can be found at \url{https://github.com/jingweio/ES-GNN}.

\subsection{Results on Real-World Graphs}

Table~\ref{tab:real_nc} summaries node classification accuracies on real-world datasets over 100 runs with multiple random splits and various model initializations. Generally, our ES-GNN outperforms competitors on most datasets, except for ranking third on Squirrel and second on Chameleon against a wide array of baseline models. In particular, compared to both GNNs specializing in task-relevance, graph disentanglement, and heterophily, our method achieves an average improvements of 11.5\%, 6.4\%, 11.0\%, 11.7\%, and 9.4\% on heterophilic graphs like Squirrel, Chameleon, Wisconsin, Cornell, and Texas, respectively. On the Twitch-DE and Actor datasets, ES-GNN leads by a smaller margin, with an average increase of 2.7\%. In strong homophilic settings, where the majority of edges are intra-class links -- essential for node classification -- ES-GNN not only capitalizes on these connections but also effectively mitigates the potential noise propagation caused by a small number of inter-class edges. This capability ensures that ES-GNN remains competitive, demonstrating an average performance advantage of 2.2\% on the Cora, Citeseer, Pubmed, and Polblogs datasets. In this homophily context, we will further demonstrate the remarkable robustness of ES-GNN in case of perturbation or noisy links in Section~\ref{sec:robust}.

\begin{figure*}[t]
    \centering
    \subfloat[Chameleon]{\includegraphics[width=0.19\textwidth]{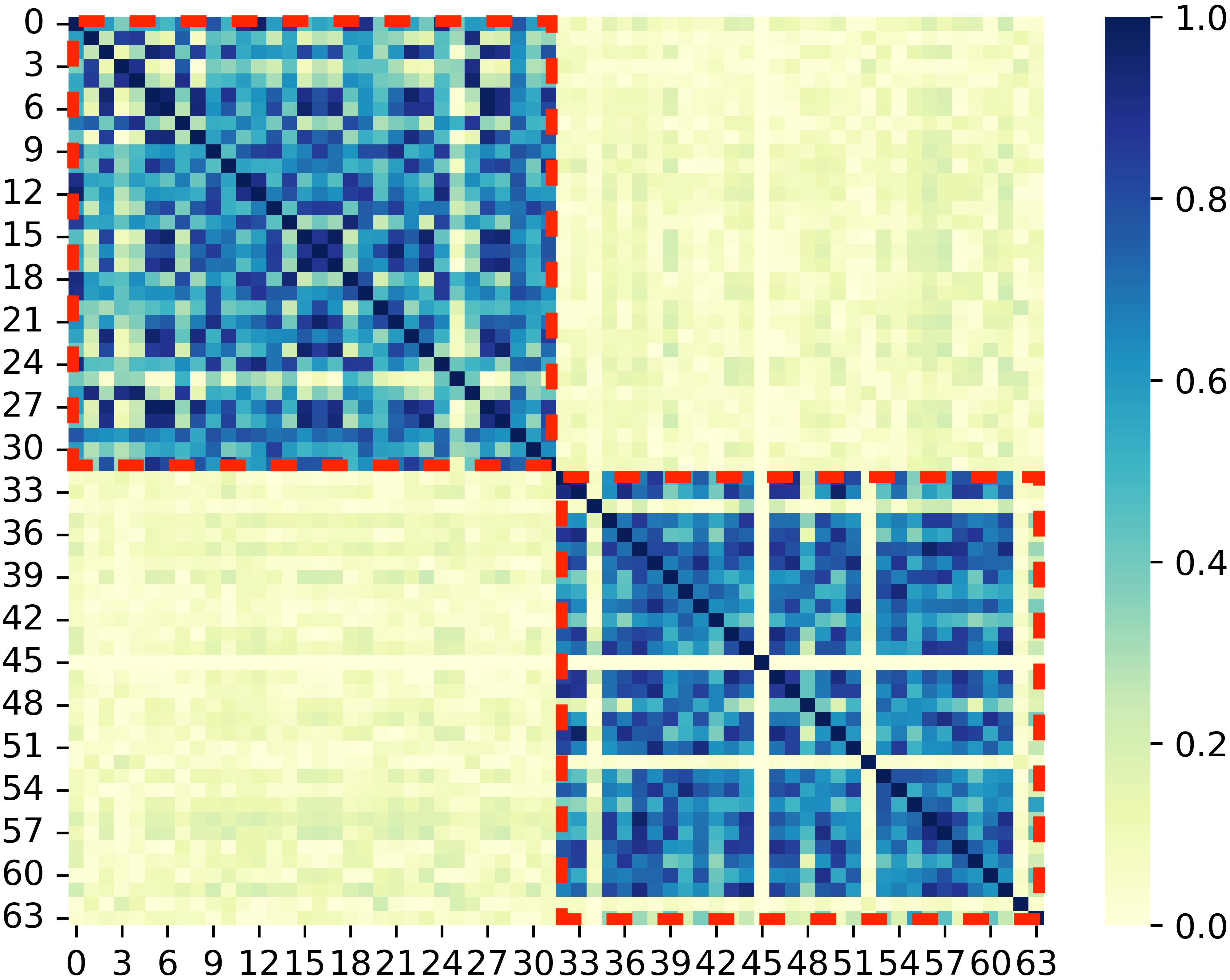}\label{fig:disen_cham}}
    \hfil
    \subfloat[Cora]{\includegraphics[width=0.19\textwidth]{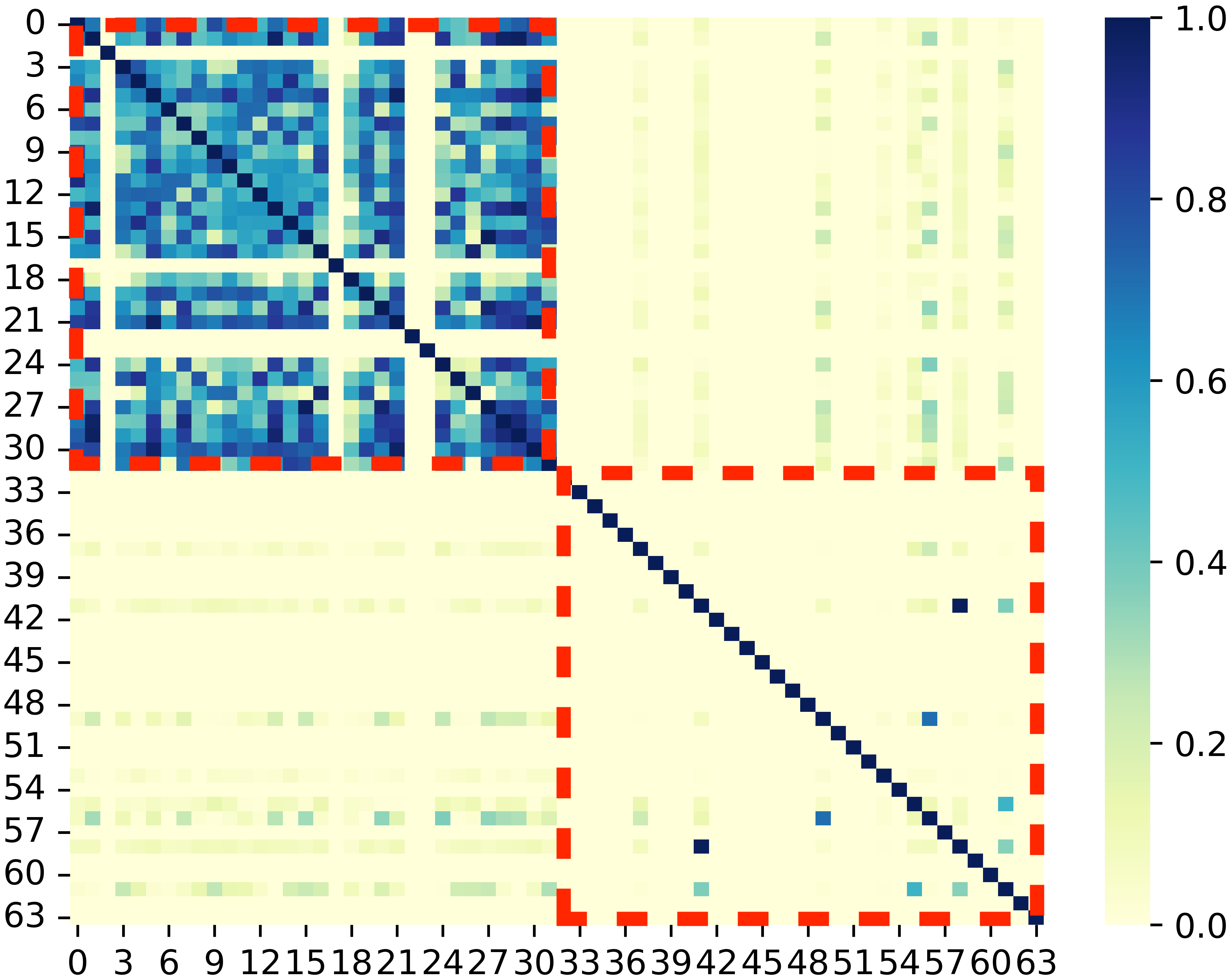}\label{fig:disen_cora}}
    \hfil
    \subfloat[$\mathcal{H}_\text{syn}=0.1$]{\includegraphics[width=0.19\textwidth]{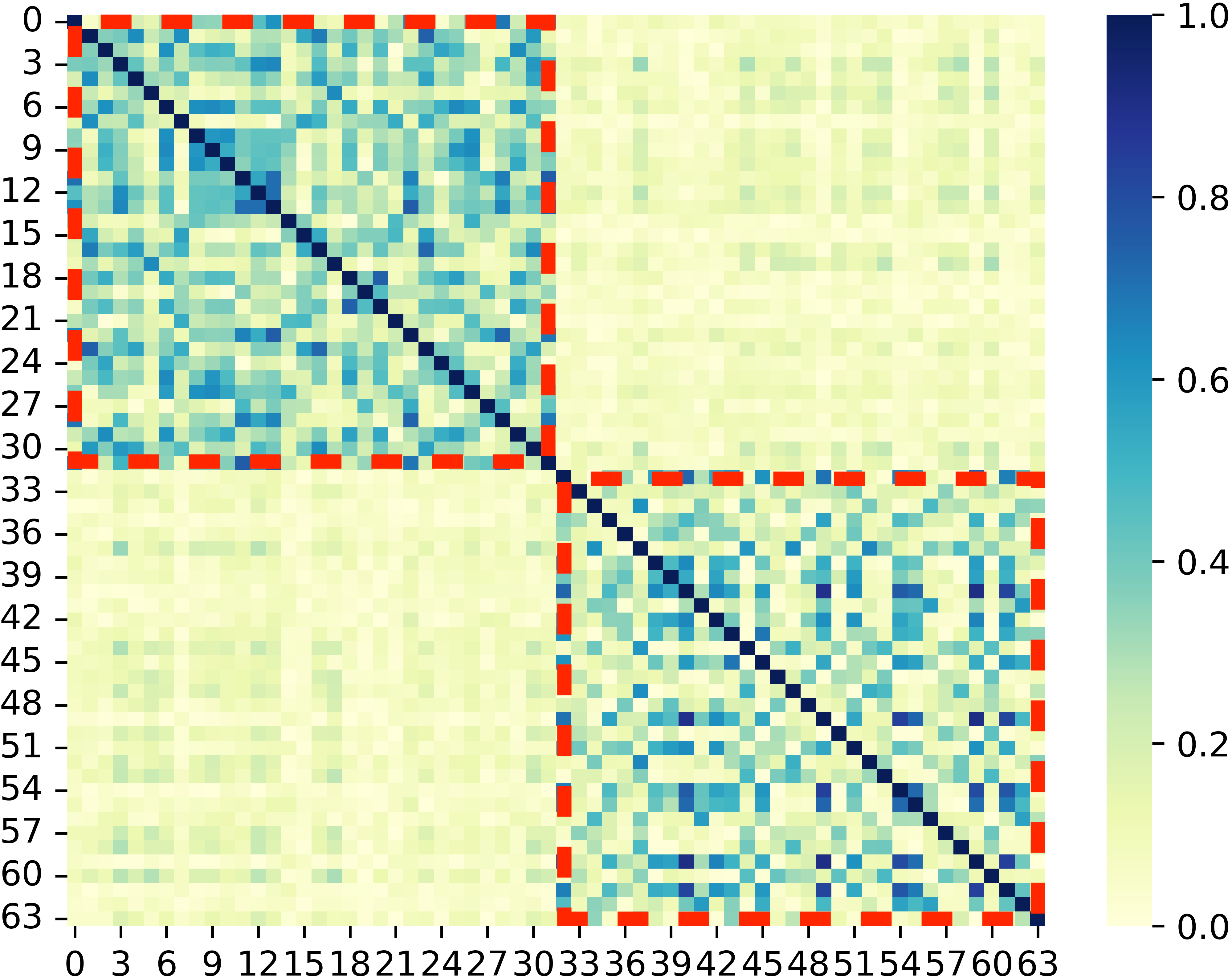}\label{fig:disen_syn_0.1}}
    \hfil
    \subfloat[$\mathcal{H}_\text{syn}=0.5$]{\includegraphics[width=0.19\textwidth]{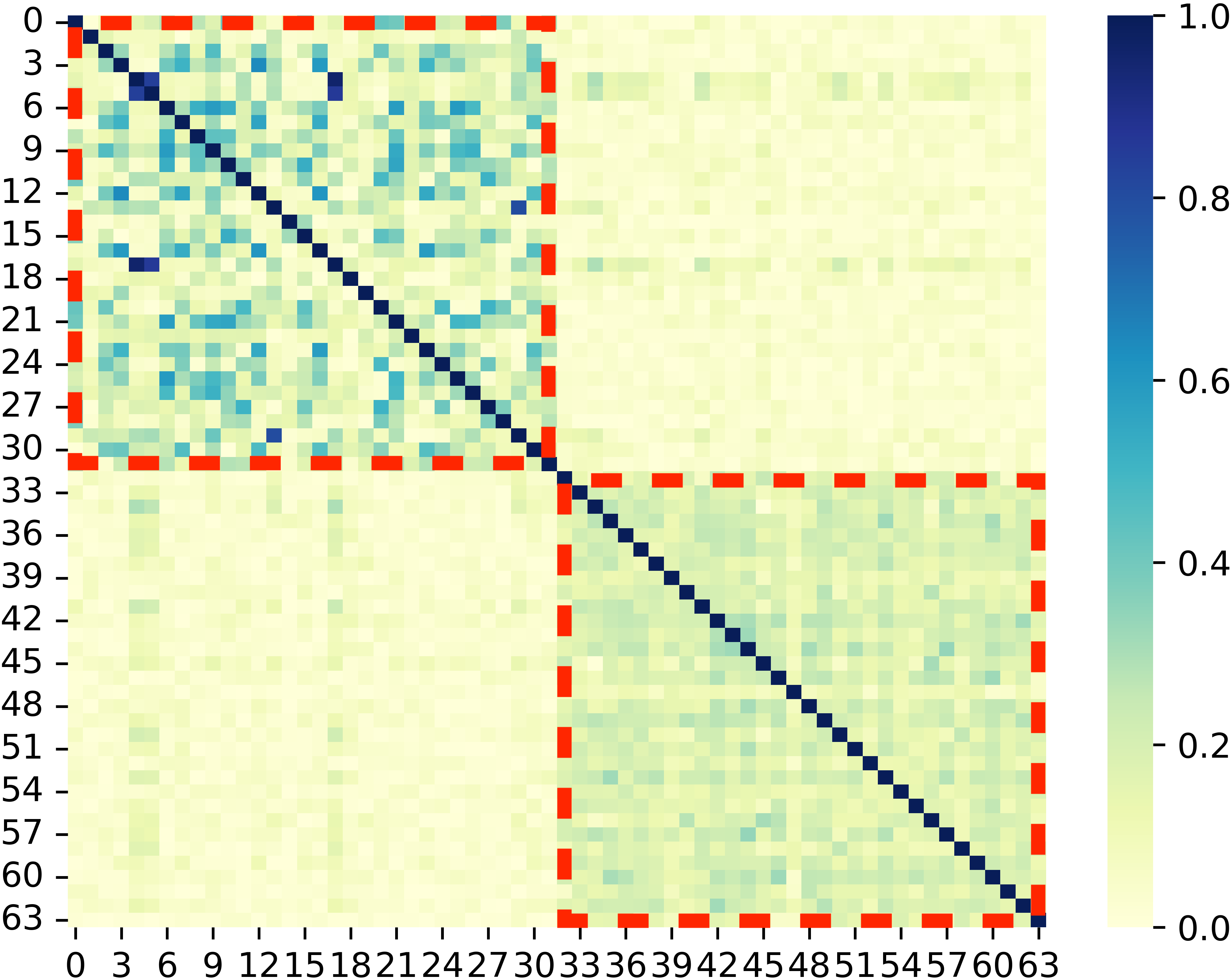}\label{fig:disen_syn_0.5}}
    \hfil
    \subfloat[$\mathcal{H}_\text{syn}=0.9$]{\includegraphics[width=0.19\textwidth]{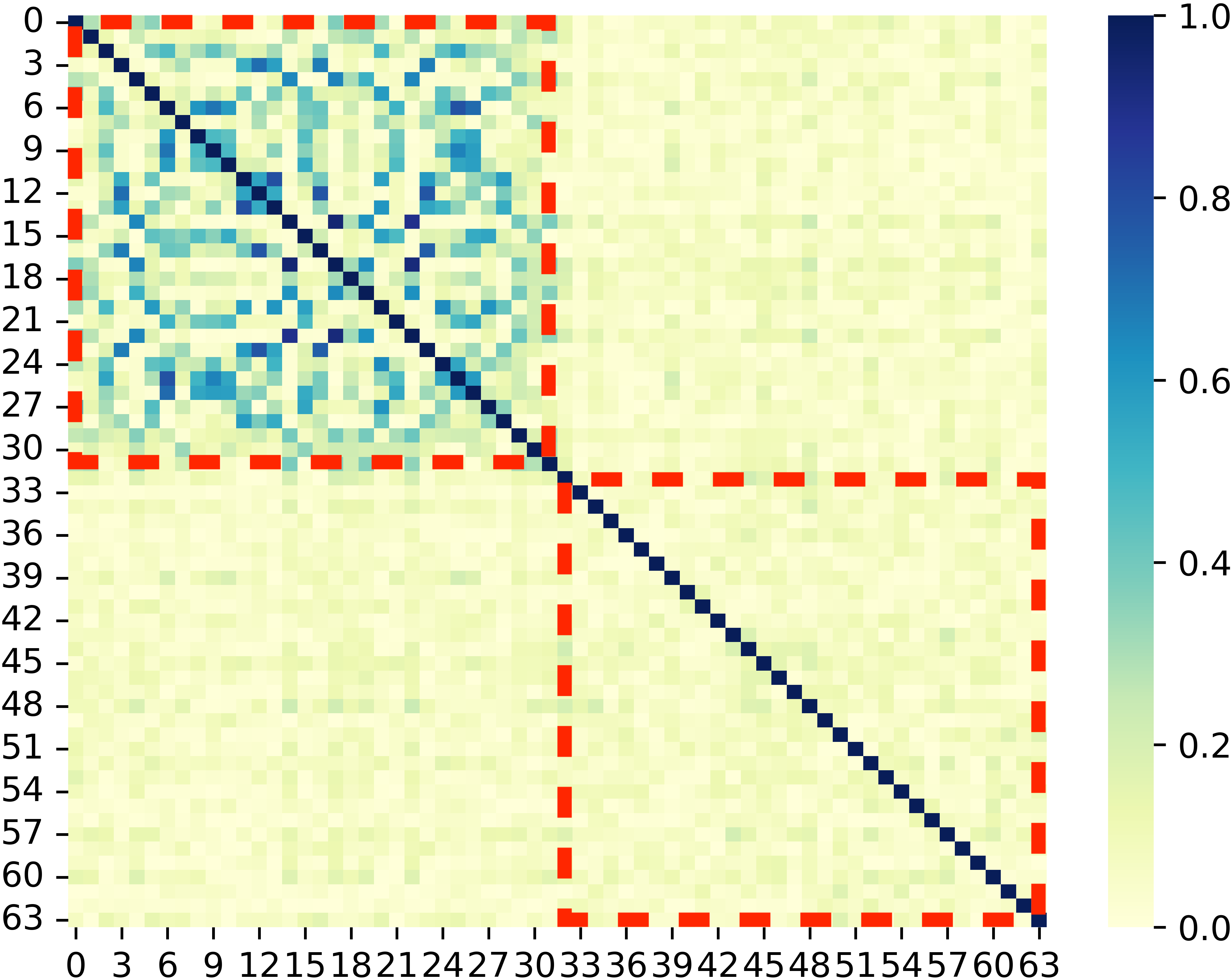}\label{fig:disen_syn_0.9}}
    \caption{Feature correlation analysis. Two distinct patterns (task-relevant and task-irrelevant topologies) can be learned on Chameleon with $\mathcal{H}=0.23$, while almost all information is retained in the task-relevant channel (0-31) on Cora with $\mathcal{H}=0.81$. 
    On synthetic graphs in (c), (d), and (e), block-wise pattern in the task-irrelevant channel (32-63) is gradually attenuated with the incremental homophily ratios across $0.1$, $0.5$, and $0.9$.
    ES-GNN presents one general framework which can be adaptive for both heterophilic and homophilic graphs.}\label{fig:feat_corr}
\end{figure*}

\begin{figure}[!t]
    \centering
    \includegraphics[width=0.38\textwidth]{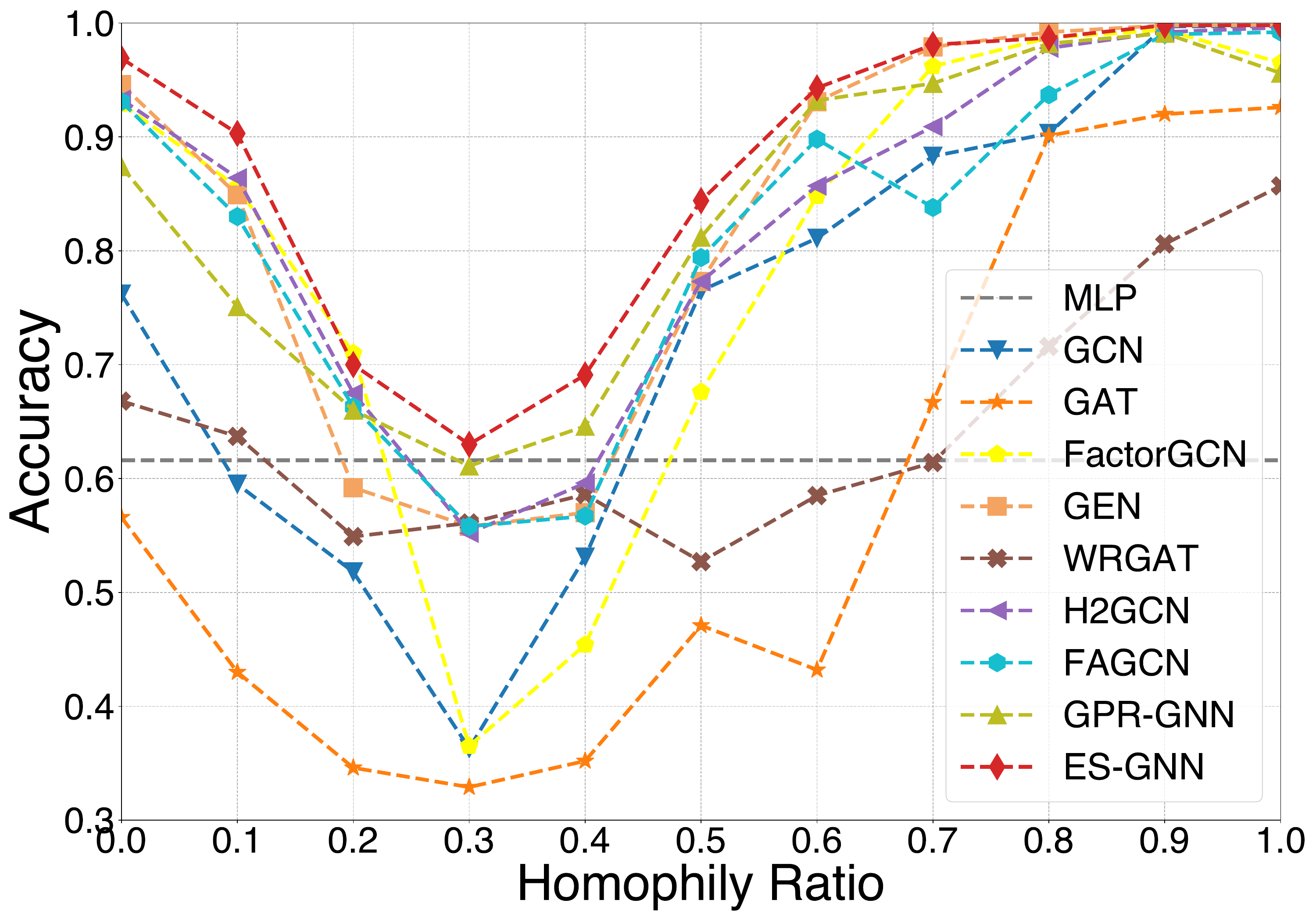}
    \caption{Results of different models on synthetic graphs with varied homophily ratios, where ES-GNN constantly outperform all the baselines.}
    \label{fig:syn_nc}
\end{figure}

\subsection{Results on Synthetic Graphs}\label{sec:re_syn_data}
We examine the learning ability of various models on graphs across the homophily or heterophily spectrum. 
From Fig.~\ref{fig:syn_nc}, we have the following observations:
\textbf{1)}~Looking through the overall trend, we obtain a ``U'' pattern on graphs from the lowest to the highest homophily ratios. That suggests GNNs' prediction performance is not monotonically correlated with graph homophily levels in a strict manner. When it comes to the extreme heterophilic scenario, GNNs tend to alternate node features completely between different classes, thereby still making nodes distinguishable w.r.t. their labels, which coincides with the findings in~\cite{Ma2021IsHA}.
\textbf{2)}~Despite the attention mechanism for adaptively utilizing relevant neighborhood information, GAT turns out to be the least robust method to arbitrary graphs. The entangled information in the mixed assortativity and disassortativity provides weak supervision signals for learning the attention weights. FactorGCN employs a graph factorization to disentangle different graph aspects but still adopts all of them for prediction without judgement, thereby performing poorly especially on the tough cases of $\mathcal{H}_\text{syn}=0.3, 0.4, \text{ and } 0.5$. 
\textbf{3)}~Both FAGCN and GPR-GNN model the dissimilarity between nearby nodes to go beyond the smoothness assumption in conventional GNNs, and display some superiority under heterophily. However, the correlation between graph edges and classification tasks is not explicitly defined and emphasized in their designs. In other words, the classification-harmful information still could be preserved in their node dissimilarity. Experimental results also show that these methods are constantly beaten by our disentangled approach.
\textbf{4)}~The proposed ES-GNN consistently outperforms, or matches, others across different graphs with different homophily levels, especially in the hardest case with $\mathcal{H}_\text{syn}=0.3$ where some baselines even perform worse than MLP. This is mainly because our ES-GNN is able to distinguish between task-relevant and irrelevant graph links, and makes prediction with the most correlated features only. We further provide detailed analyses in the following sections.

\begin{table}[t]
\caption{Edge Analysis of our ES-GNN on synthetic graphs with various homophily ratios. ``Removed Het.'' gives the percentage (\%) of heterophilic (inter-class) node connections excluded from the task-relevant topology and disentangled in the task-irrelevant topology. The last two rows list the corresponding node classification accuracies (\%) of ES-GNN and its variant while ablating ES-layer.}\label{tab:syn_edge_analysis}
\centering
\setlength\tabcolsep{4pt}
\resizebox{0.48\textwidth}{!}{
\begin{tabular}{lccccccccc|c}
\toprule
\multicolumn{1}{l|}{$\mathcal{H}_\text{syn}$}   &\textbf{0.1}    &\textbf{0.2}    &\textbf{0.3}    &\textbf{0.4}    &\textbf{0.5}    &\textbf{0.6}    &\textbf{0.7}    &\textbf{0.8}    &\textbf{0.9}    &\textbf{Avg.}    \\ \midrule
\multicolumn{1}{l|}{Removed Het.}  &41.9     &53.2     &60.8     &70.4     &74.2     &80.7     &86.7     &87.8     &89.9     &\textbf{71.7}     \\
\midrule
\multicolumn{1}{l|}{ES-GNN}     &90.0     &69.6     &62.1     &69.6     &85.4     &93.8     &98.3     &99.2     &100.0     &\textbf{85.3}     \\
\multicolumn{1}{l|}{ES-GNN w/o ES}     &84.6     &57.9     &53.3     &53.8     &74.2     &81.7     &86.3     &90.4     &96.7     &\textbf{75.4}     \\
\bottomrule
\end{tabular}
}
\end{table}

\subsection{Correlation Analysis}\label{sec:corr_analy}
To better understand our proposed method, we investigate the disentangled features on Chameleon, Cora, and three synthetic graphs as typical examples in Fig.~\ref{fig:feat_corr}. Clearly, on the strong heterophilic graph Chameleon with $\mathcal{H}=0.23$, correlation analysis of learned latent features displays two clear block-wise patterns, each of which represents task-relevant or task-irrelevant aspect respectively. In contrast, on the citation network Cora with $\mathcal{H}=0.81$, the node connections are in line with the classification task, since scientific papers mostly cite or are cited by others in the same research topic. Thus, most information will be retained in the task-relevant topology, while very minor information could be disentangled in the task-irrelevant topology (see Fig.~\ref{fig:disen_cora}).  
On the other hand, the results on synthetic graphs from Fig.~\ref{fig:disen_syn_0.1} to \ref{fig:disen_syn_0.9} display an attenuating trend on the second block-wise pattern with the incremental homophily ratios across $0.1, 0.5, \text{ and } 0.9$. This correlation analysis empirically verifies that our ES-GNN successfully disentangles the task-relevant and irrelevant features, and also demonstrates its universal adaptivity on different types of networks.

\begin{figure*}[!t]
    \centering
    \subfloat[Cora]{\includegraphics[width=0.246\textwidth]{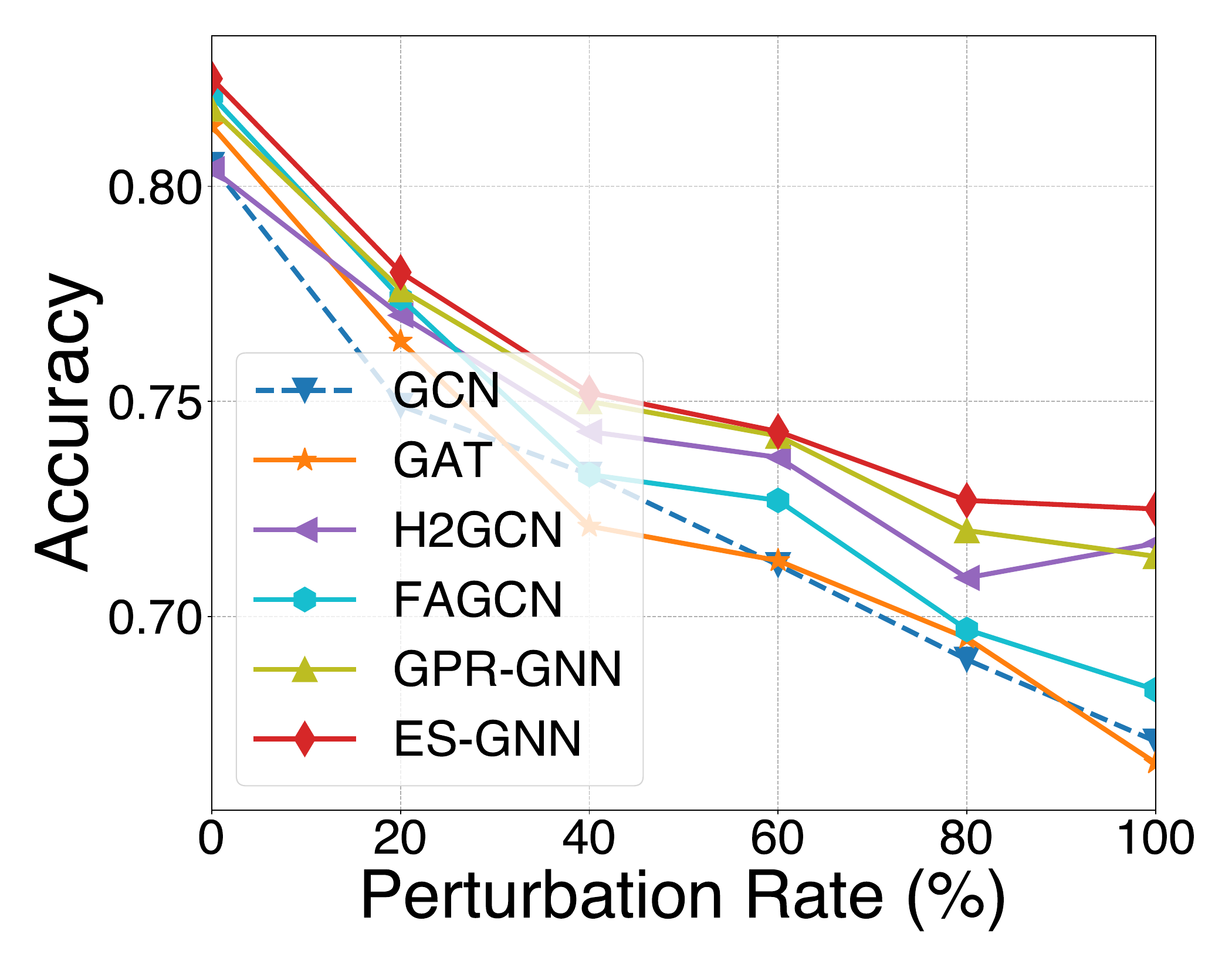}\label{fig:dfs_cora}}
    \hfil
    \subfloat[Citeseer]{\includegraphics[width=0.246\textwidth]{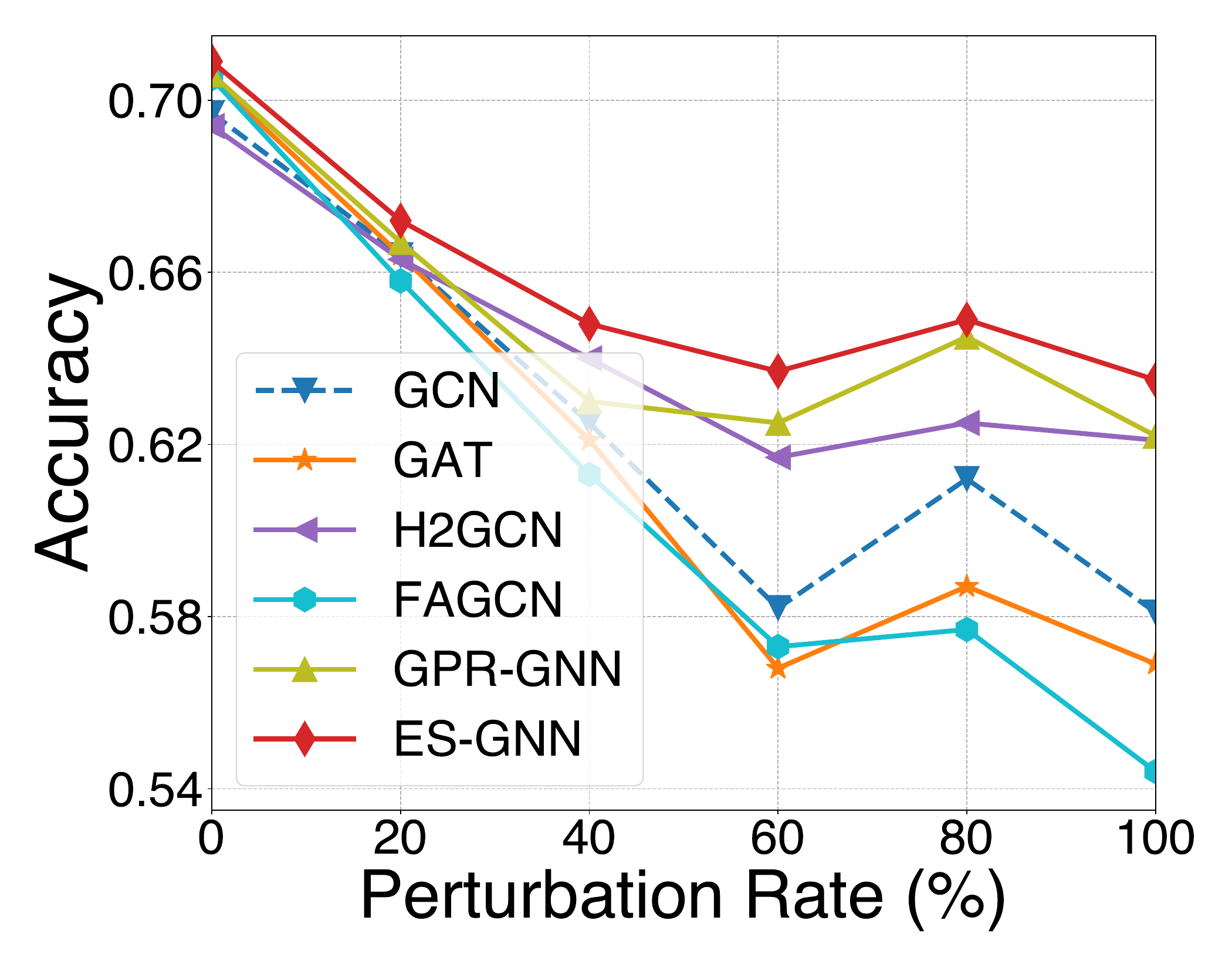}\label{fig:dfs_cite}}
    \hfil
    \subfloat[Pubmed]{\includegraphics[width=0.246\textwidth]{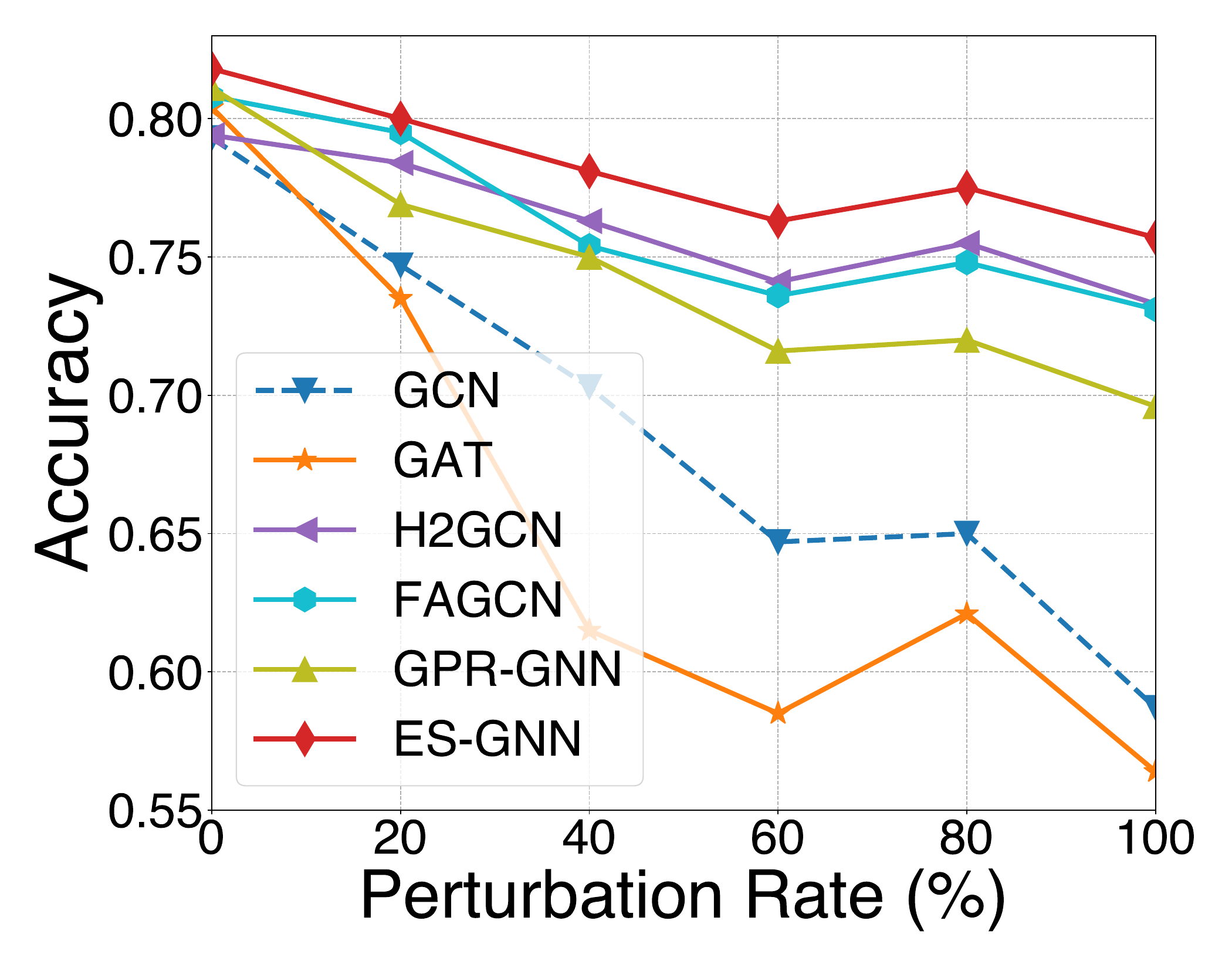}\label{fig:dfs_pub}}
    \hfil
    \subfloat[Polblogs]{\includegraphics[width=0.246\textwidth]{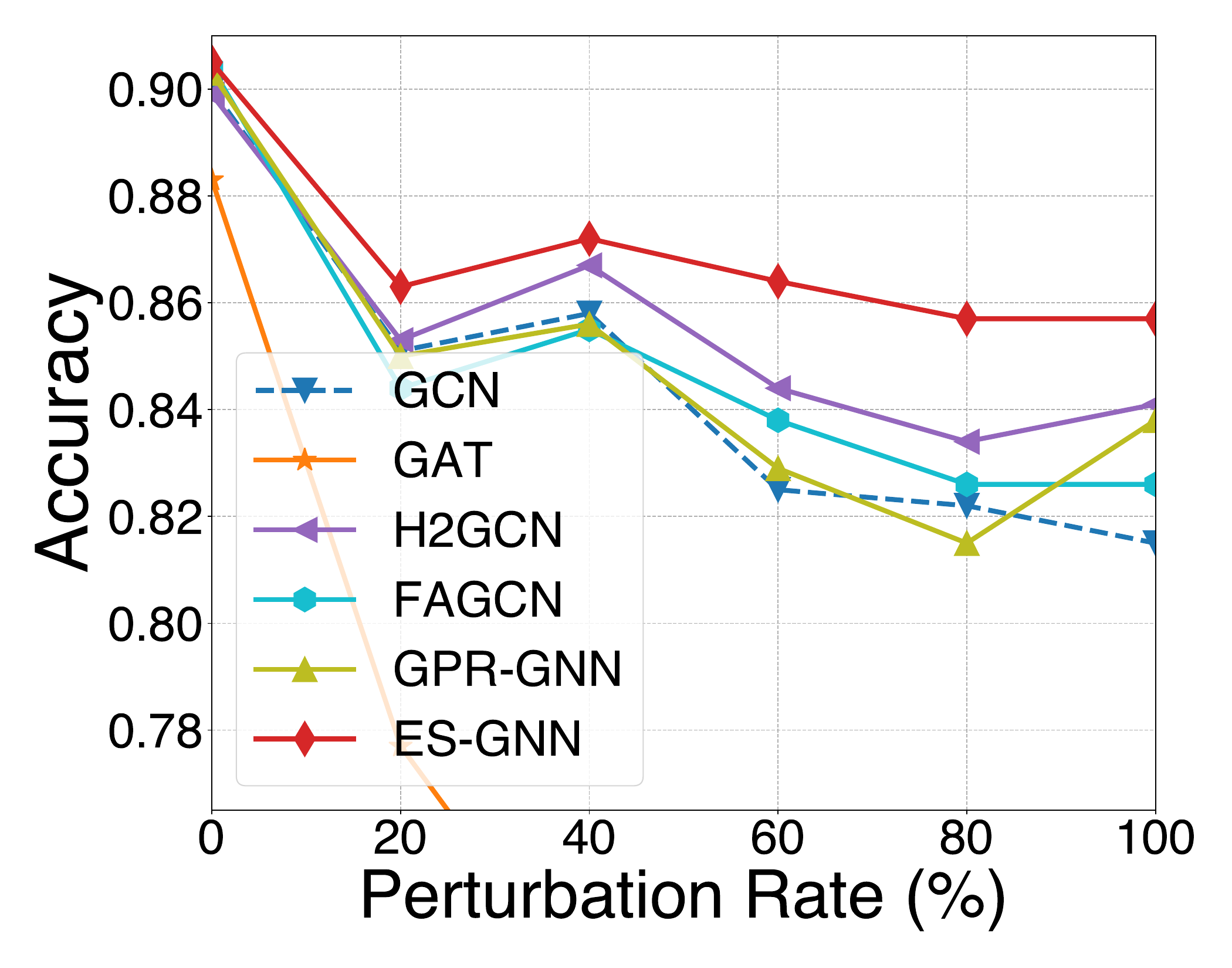}\label{fig:dfs_polblogs}}
    \caption{
    Results of different models on perturbed homophilic graphs. ES-GNN is able to identify the falsely injected (the task-irrelevant) graph edges, and exclude these connections from the final predictive learning, thereby displaying relative robust performance against adversarial edge attacks.
    }
    \label{fig:robust_dfs}
\end{figure*}

\subsection{Edge Analysis}\label{sec:analysis_es}
We analyze the split edges from our ES-layer using synthetic graphs as an example in this section. According to Section~\ref{sec:syn_data_gen}, the synthetic edges are defined as the task-relevant connections if they link nodes from the same class, and the task-irrelevant ones otherwise. Therefore, we calculate the percentages of heterophilic node connections, which are excluded from our task-relevant topology and disentangled in the task-irrelevant one, so as to investigate the discerning ability of ES-GNN between edges in different types. As can be observed in Table~\ref{tab:syn_edge_analysis}, 71.7\% task-irrelevant edges are identified on average across various homophily ratios. On the other hand, we also report the classification accuracies of ES-GNN and its variant while ablating ES-layer, from which approximately 10\% degradation can be observed. All of these strongly validate the effectiveness of our ES-layer and reasonably interprets the good performance of ES-GNN.

\subsection{Robustness Analysis}\label{sec:robust}
By splitting the original graph edge set into  task-relevant and task-irrelevant subsets, our proposed ES-GNN enjoys strong robustness particularly on homophilic graphs, since perturbed or noisy aspects of nodes could be purified from the task-relevant topology and disentangled in the task-irrelevant topology. To examine this, we randomly inject fake edges into graphs with perturbed rates from 0\% to 100\% with a step size of 20\%. Adversarially perturbed examples are generated from graphs with strong homophily, such as Cora, Citseer, Pubmed, and Polblogs. As shown in Fig.~\ref{fig:robust_dfs}, models considering graphs beyond homophily, i.e., H2GCN, FAGCN, GPR-GNN, and our model, consistently display a more robust behavior than GCN and GAT. That is mainly because fake edges may connect nodes across different labels, and consequently cause erroneous information sharing in the conventional methods. 

On the other hand, our ES-GNN beats all the baselines by an average margin of 2\% to 3\% on Citeseer, Pubmed, and Polblogs while displaying relatively the same results on Cora. We attribute this to the capability of our model in associating node connections with learning tasks. Take Pubmed dataset as an example. We investigate the learned task-relevant topologies and find that 81.0\%, 73.0\%, 82.1\%, 83.0\%, 82.6\% fake links get removed on adversatial graphs with perturbation rates from 20\% to 100\%. This also offers evidences supporting that our ES-layer is able to distinguish between task-relevant and irrelevant node connections. Therefore, despite a large number of false edge injections, the proximity information of nodes can still be reasonably mined in our model to predict their labels. Importantly, these empirical results also indicate that ES-GNN can still identify most of the task-irrelevant edges though no clear similarity or association between the connected nodes exists in the adversarial setting. 

\begin{figure}[t]
    \centering
    \subfloat[Cora]{\includegraphics[width=0.24\textwidth]{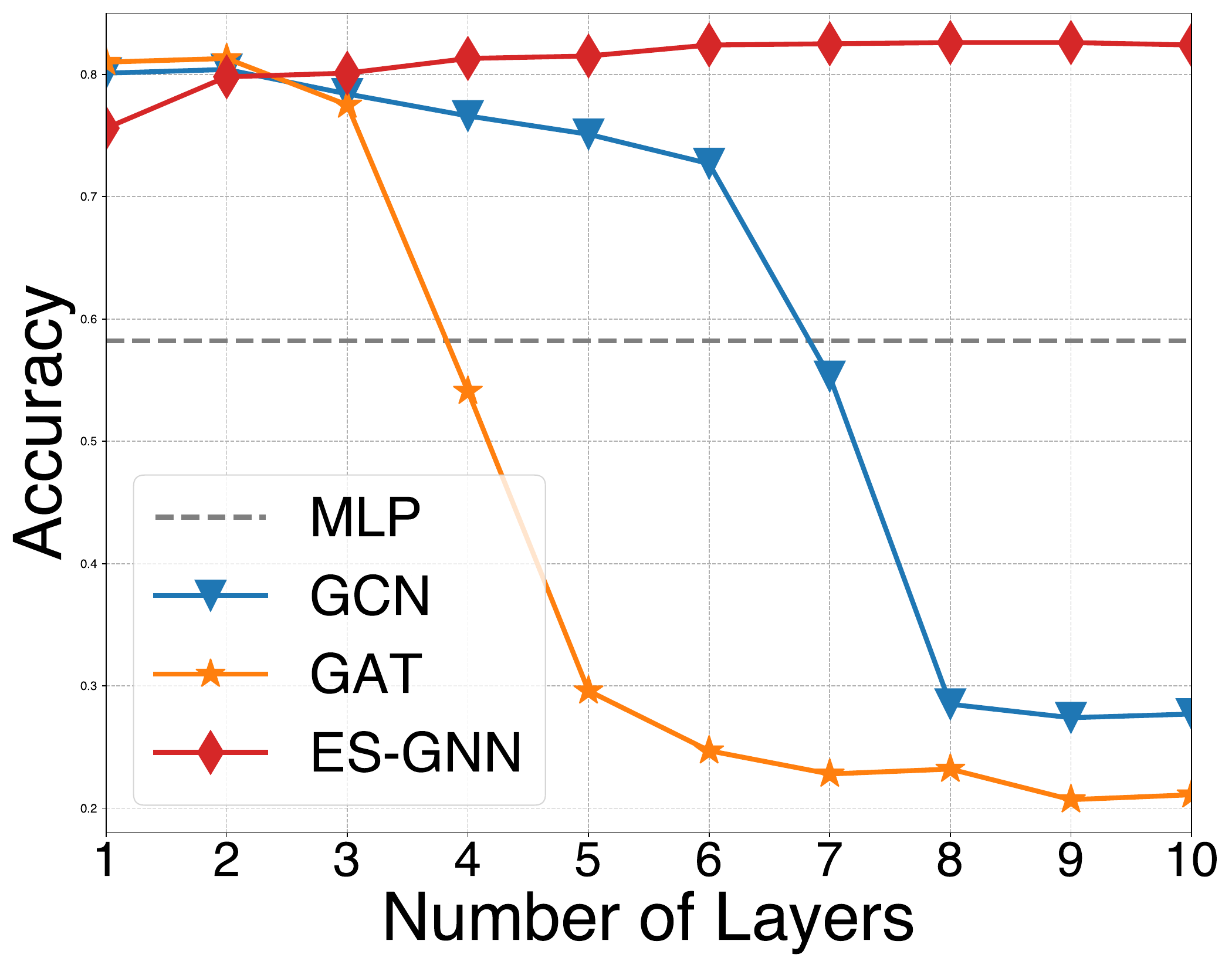}\label{fig:os_cora}}
    \subfloat[Citeseer]{\includegraphics[width=0.24\textwidth]{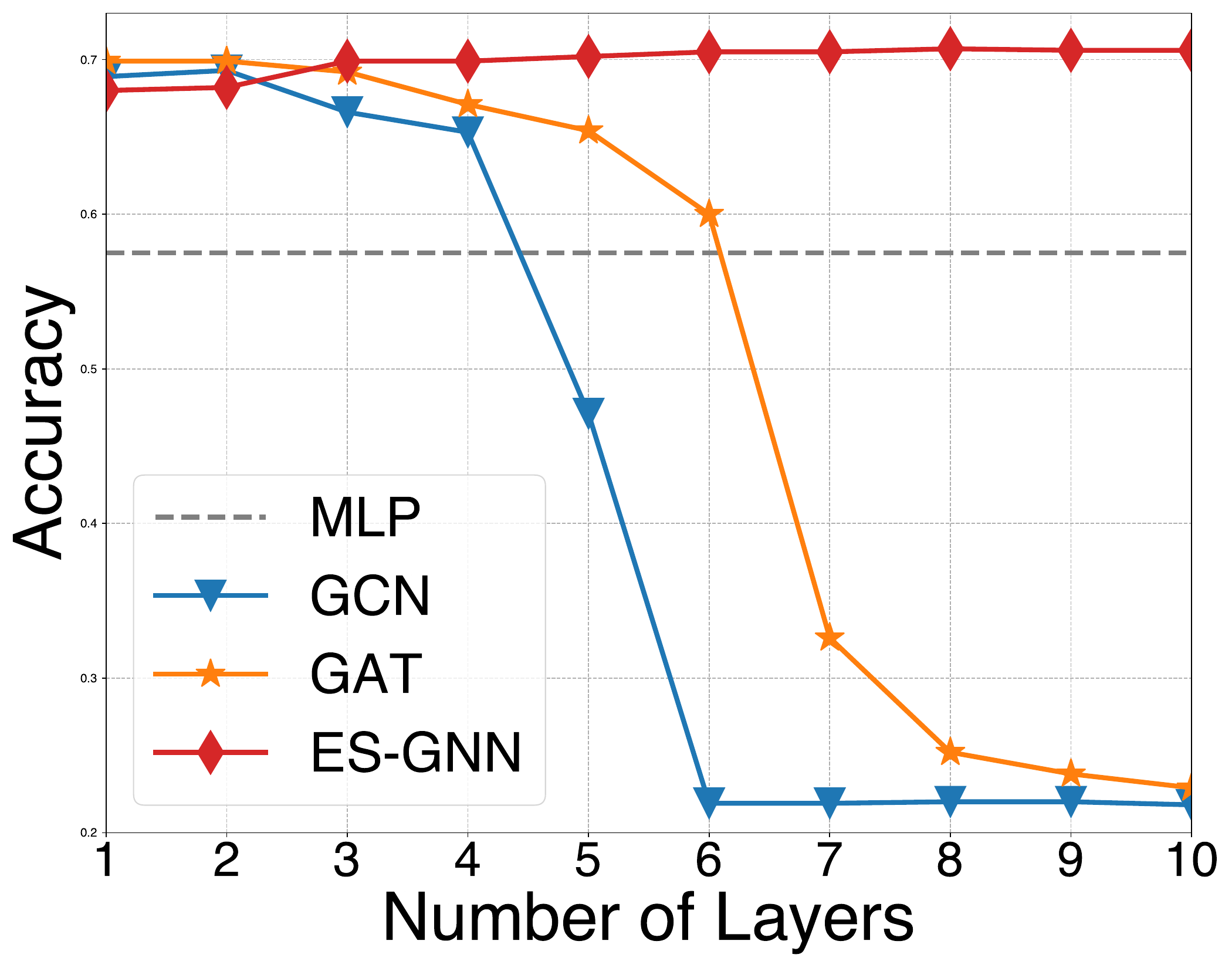}\label{fig:os_cite}}
    \hfill
    \subfloat[Pubmed]{\includegraphics[width=0.24\textwidth]{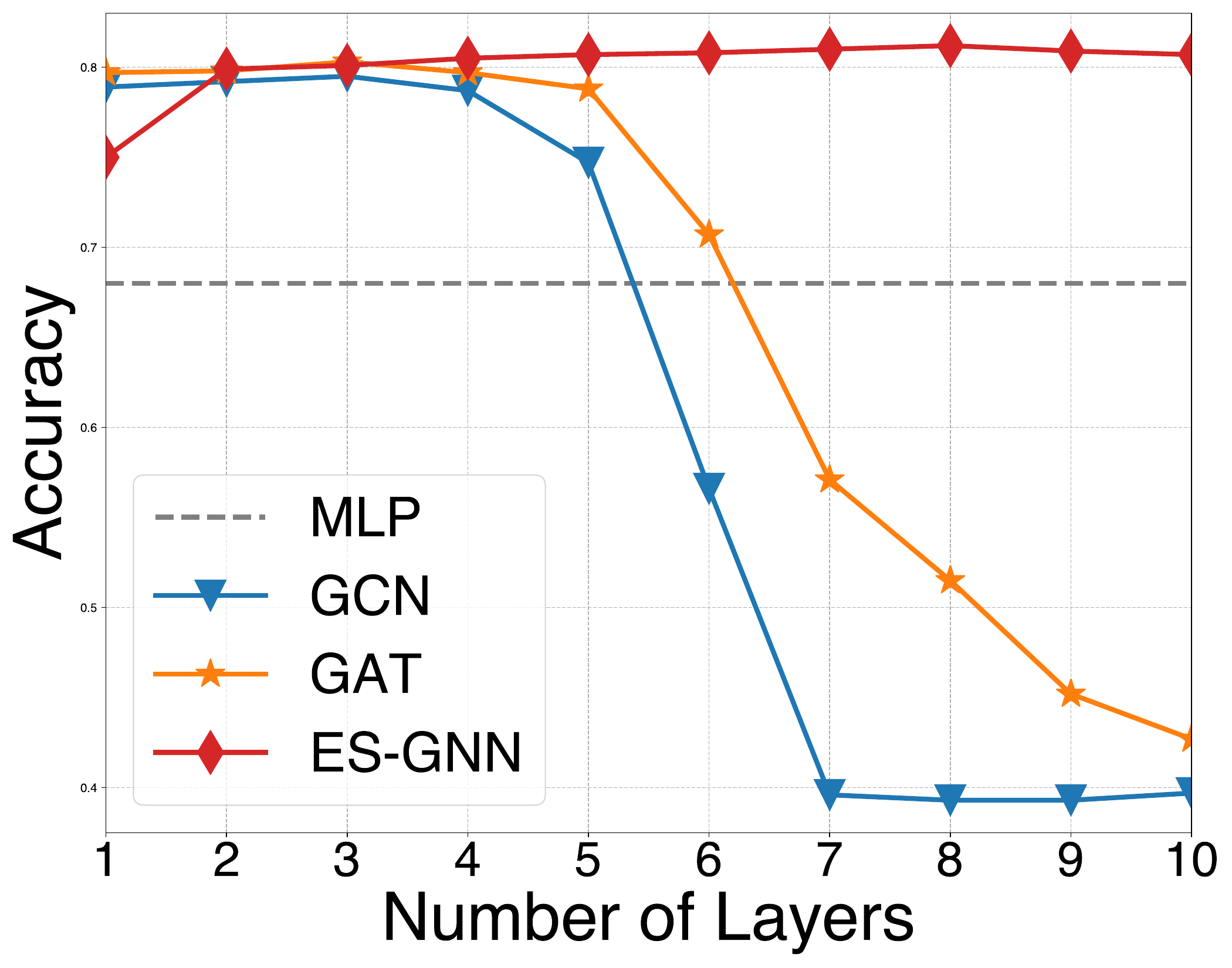}\label{fig:os_pubmed}}
    \subfloat[Polblogs]{\includegraphics[width=0.24\textwidth]{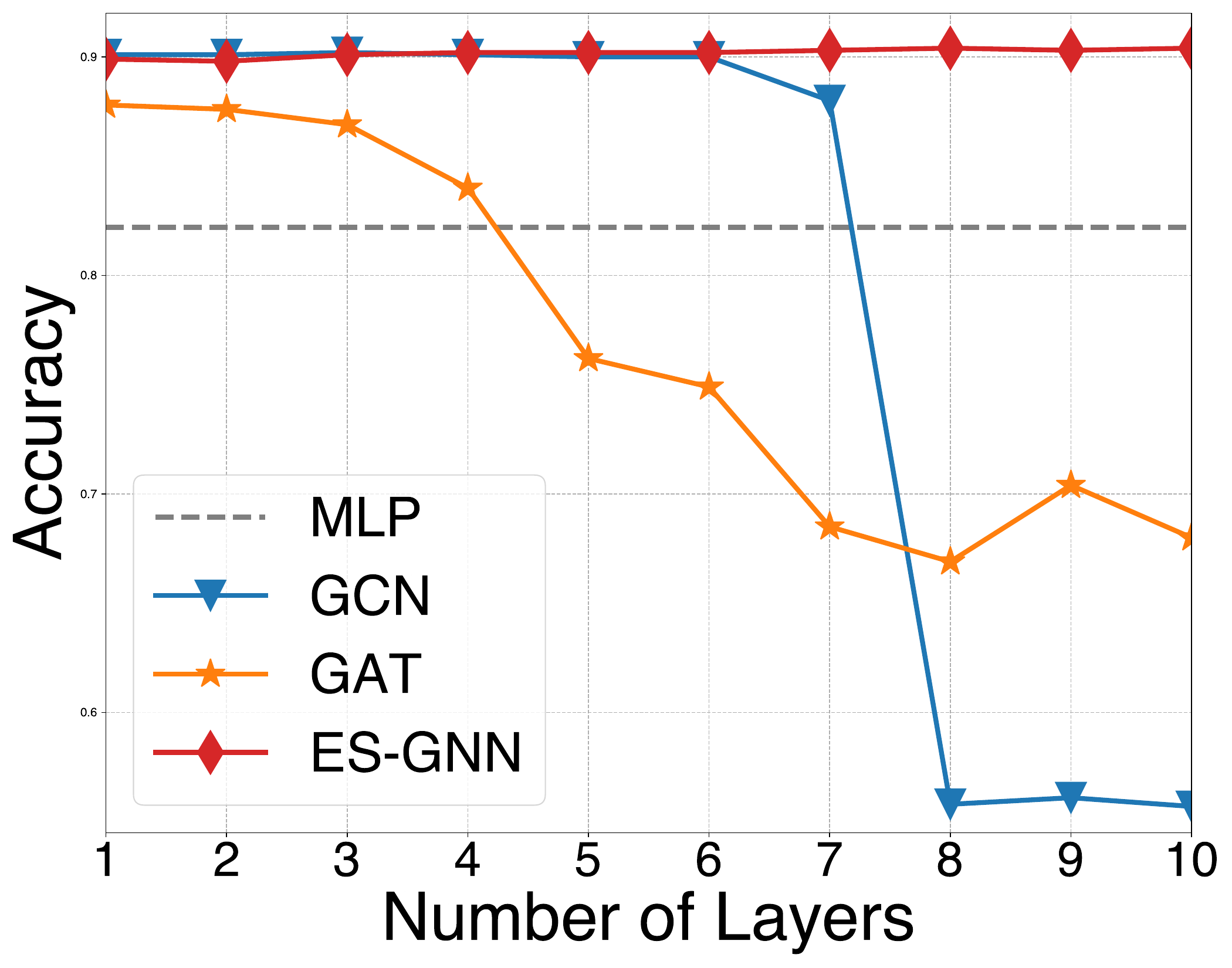}\label{fig:os__polblogs}}
    \caption{Classification accuracy vs. model depths.}
    \label{fig:os_prob}
\end{figure}

\begin{figure*}[t]
\centering
\includegraphics[width=0.85\textwidth]{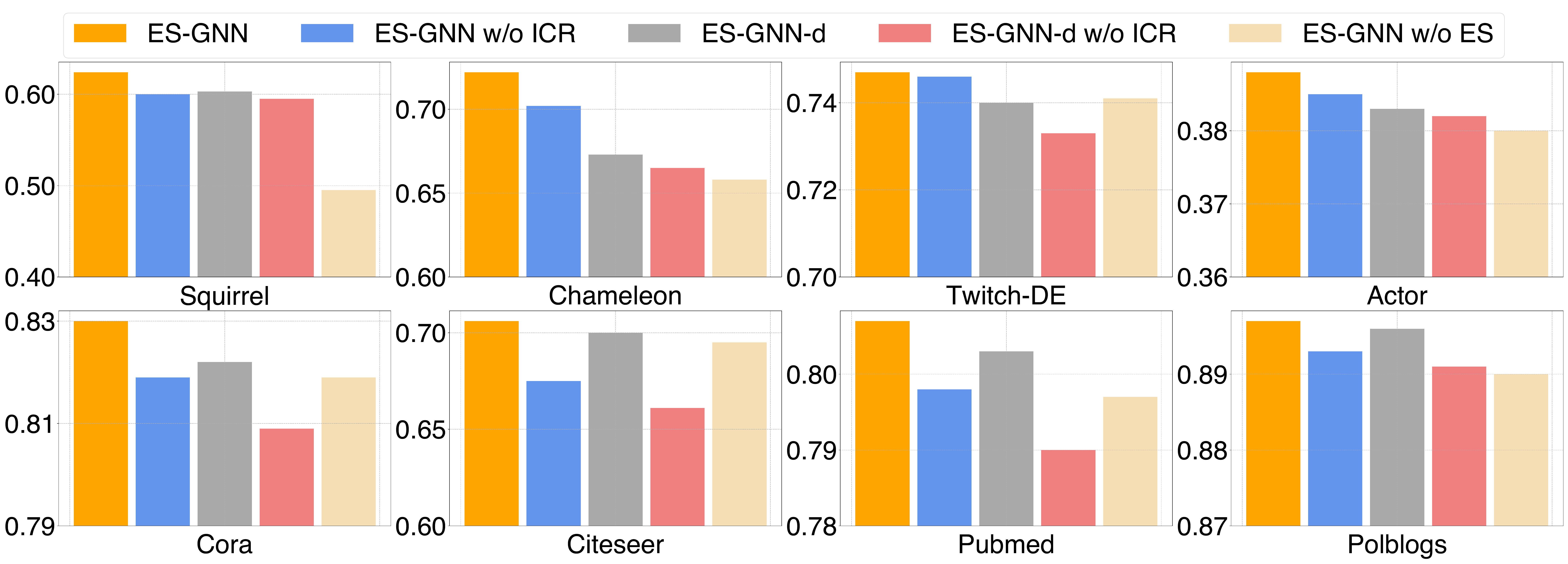}
\caption{Ablation study of ES-GNN on eight datasets in node classification.}
    \label{fig:abla_study}
\end{figure*}

\subsection{Alleviating Over-smoothing Problem}
In order to verify whether ES-GNN alleviates the over-smoothing problem, we compare it with GCN and GAT by varying the layer number in Fig.~\ref{fig:os_prob}. It can be observed that these two baselines attain their highest results when the number of layers reaches around two. As the layer goes deeper, the accuracies of both GCN and GAT gradually drop to a lower point. On the contrary, our ES-GNN presents a stable curve. In spite of starting from a relative lower point, the performance of ES-GNN keeps improving as the model depths increase, and eventually outperforms both GCN and GAT. The main reason is that, our ES-GNN can adaptively utilize proper graph edges in different layers to attain the task-optimal results with enlarged receptive fields. In other words, once an edge stops passing useful information or starts passing harmful messages, ES-GNN tends to identify it and remove it from learning the task-correlated representations, thereby having the ability of mitigating the over-smoothing problem.

\subsection{Channel Analysis and Ablation Study}
In this section, we compare ES-GNN with its variant ES-GNN-d which takes dual (both the task-relevant and irrelevant) channels for prediction, and perform an ablation study. Fig.~\ref{fig:abla_study} provides comparison on eight real-world datasets as examples. Here, we first specify some annotations including 1) ``w/o ICR'': without regularization loss $\mathcal{L}_{\text{ICR}}$, and 2) ``w/o ES'': without edge splitting (ES-) layer. Overall, two conclusions can be drawn from Fig.~\ref{fig:abla_study}. First, ES-GNN is consistently better than ES-GNN-d, implying that the task-irrelevant channels indeed capture some false information where model performance downgrades even with the doubled feature dimensions. Second, removing either ICR or ES-layer from both ES-GNN and ES-GNN-d leads to a clear accuracy drop. That validates the effectiveness of our model designs.

\subsection{Parameter Study}
This section presents the sensitivity analysis of hyper-parameters, specifically $\lambda_{\text{ICR}}$, $\epsilon_{\text{R}}$ and $\epsilon_{\text{IR}}$, using Chameleon and Cora datasets as typical examples. Fig.~\ref{fig:param_sensi} illustrates how varying these parameters affect our model's learning performance. Overall, we have the following observations: \textbf{1)} The effect of the regularization coefficient $\lambda_{\text{ICR}}$ is depicted in Fig.~\ref{fig:param_sensi}(a)-(b). For instance, the classification accuracy on the Chameleon dataset increases first and then gradually decreases. Favorable results can be attained by choosing $\lambda_{\text{ICR}}$ from $\left[1\mathrm{e}{-7},1\mathrm{e}{-5}\right]$. A similar trend can be also observed on Cora dataset where $\lambda_{\text{ICR}}$ is relatively robust within a wide albeit distinct interval; \textbf{2)} The influence of the scaling coefficients $\epsilon_{\text{R}}$ and $\epsilon_{\text{IR}}$ is evaluated by adjusting their values from $1\mathrm{e}{-3}$ to $1.0$ on Fig.~\ref{fig:param_sensi}(c)-(d) and Fig.~\ref{fig:param_sensi}(e)-(f), respectively. It can be observed that, despite variations in optimal settings, selecting values between $5\mathrm{e}{-2}$ and $0.5$ consistently yields promising performance.

\begin{figure}[!t]
    \centering
    \subfloat[Chameleon]{\includegraphics[width=0.235\textwidth]{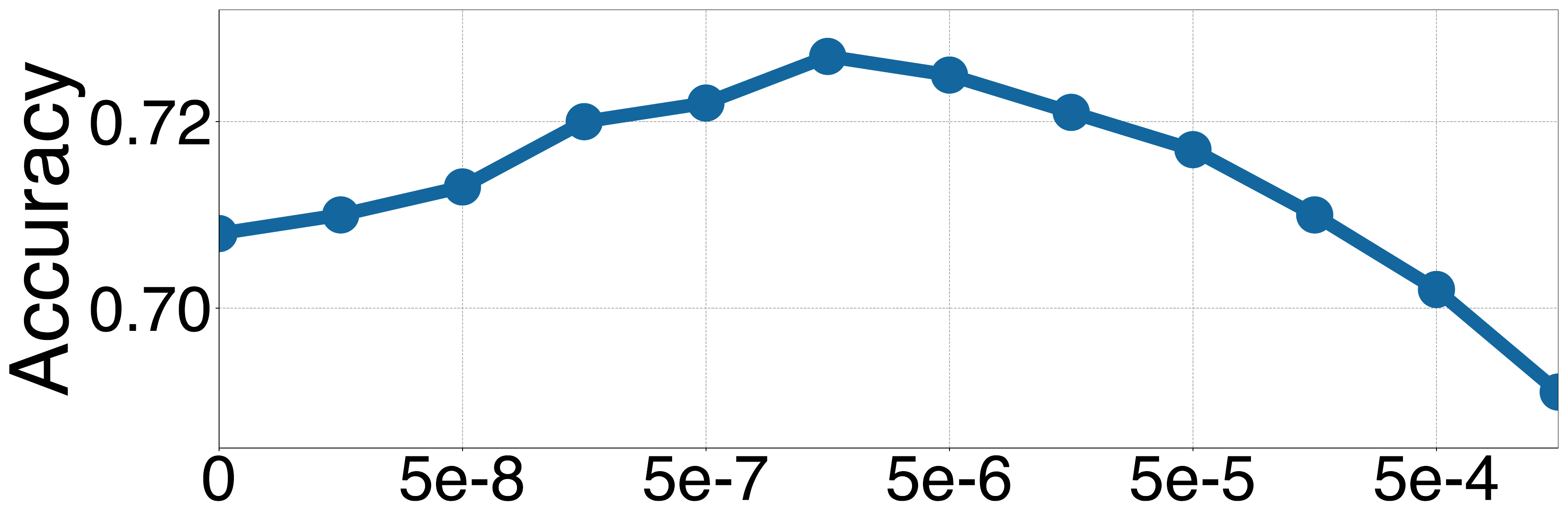}\label{fig:icr_cham}}
    \subfloat[Cora]{\includegraphics[width=0.235\textwidth]{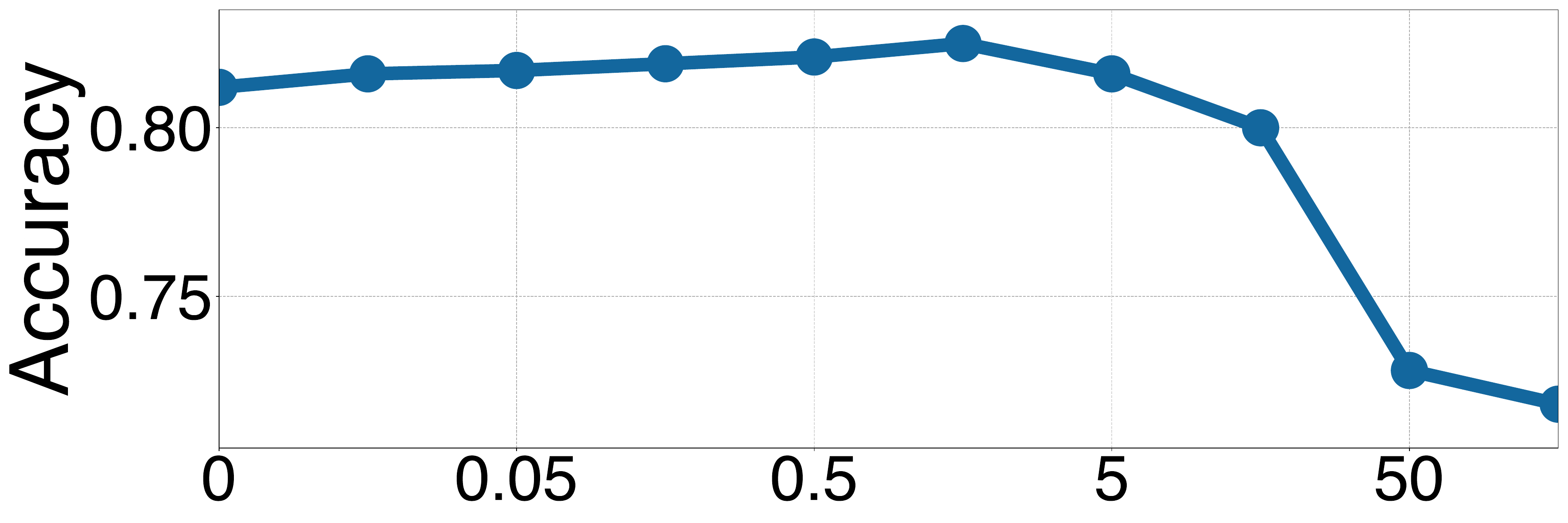}\label{fig:icr_cora}}
    \vfill
    \subfloat[Chameleon]{\includegraphics[width=0.235\textwidth]{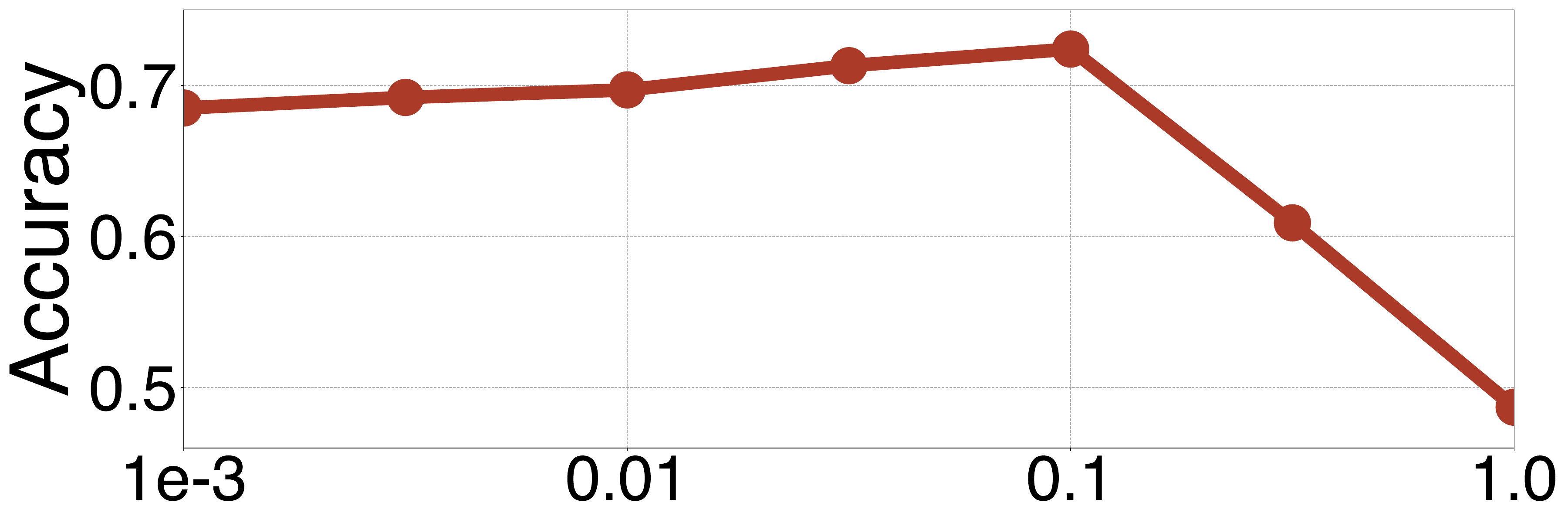}\label{fig:epsR_cham}}
    \subfloat[Cora]{\includegraphics[width=0.235\textwidth]{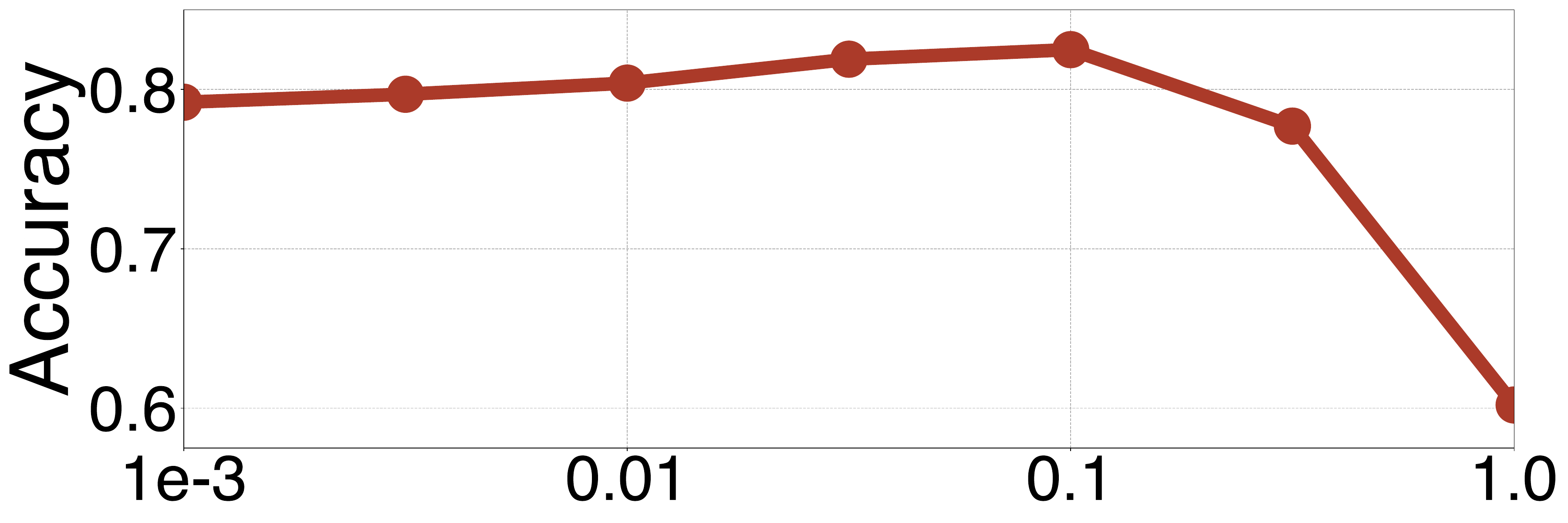}\label{fig:epsR_cora}}
    \vfill
    \subfloat[Chameleon]{\includegraphics[width=0.235\textwidth]{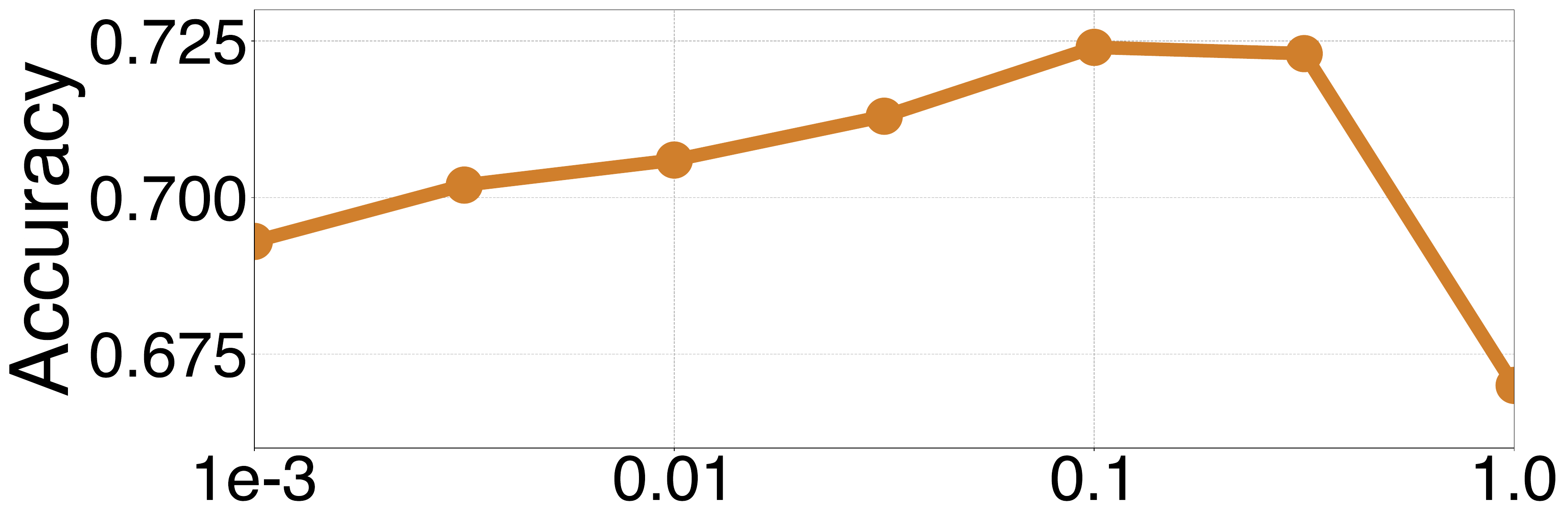}\label{fig:epsIR_cham}}
    \subfloat[Cora]{\includegraphics[width=0.235\textwidth]{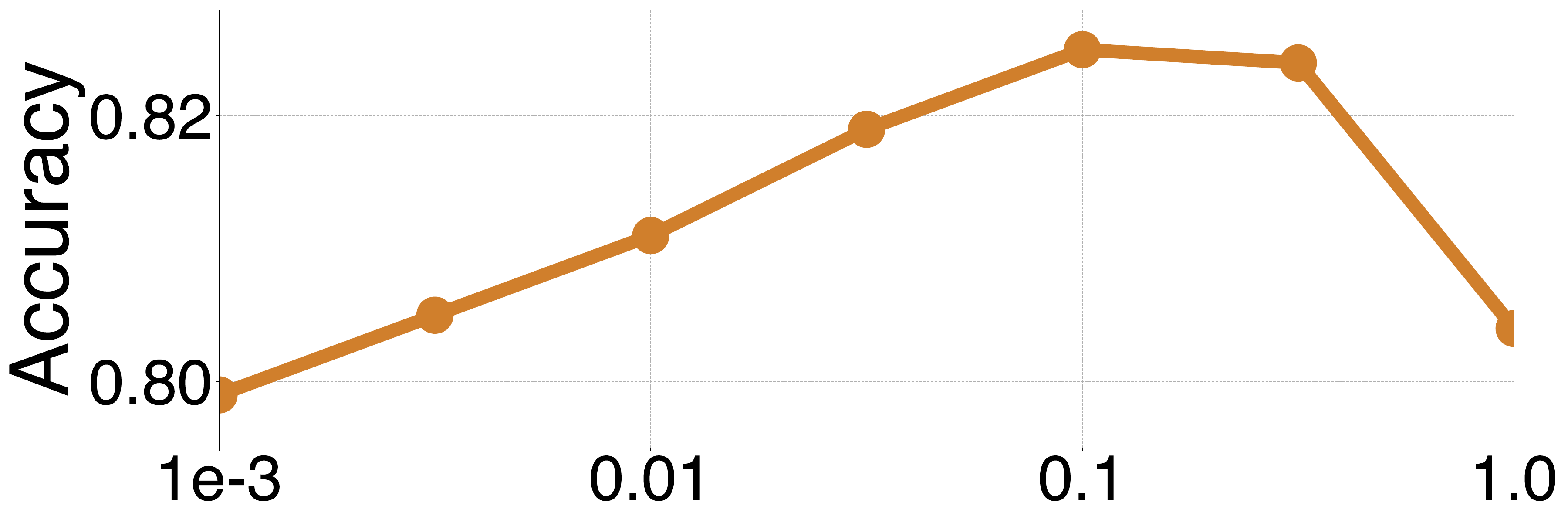}\label{fig:epsIR_cora}}
    \caption{Sensitivity analysis of hyper-parameters: $\lambda_{\text{ICR}}$, $\epsilon_{\text{R}}$ and $\epsilon_{\text{IR}}$ from top to bottom rows.
    }\label{fig:param_sensi}
\end{figure}

\section{Conclusion}
In this paper, we develop a novel graph learning framework that enables GNNs to go beyond the strong homophily assumption on graphs. We manage to establish a correlation between node connections and learning tasks through a plausible hypothesis, from which ES-GNN is derived with interpretable edge splitting. Our ES-GNN essentially partitions the original graph structure into task-relevant and irrelevant topologies as guide to disentangle node features, whereby the classification-harmful information can be disentangled and excluded from the final prediction target. Theoretical analysis illustrates our motivation and offers interpretations on the expressive power of ES-GNN on different types of networks. To provide empirical verification, we conduct extensive experiments over 11 benchmark and 1 synthetic datasets. The node classification results demonstrate the overall superior performance of our ES-GNN compared to 15 competitive baselines, which specialize in task-relevance, graph disentanglement and heterophily. In particular, we also conduct analysis on the split edges, correlation among disentangled features, model robustness, and the ablated variants. All of these results demonstrate the success of ES-GNN in identifying graph edges between different types, which also validates the effectiveness of our interpretable edge splitting. In future work, we will further explore more sophisticated designs in the edge splitting layer. Another promising direction would be how to extend our learning paradigm in accomplishing graph-level tasks.


\ifCLASSOPTIONcaptionsoff
  \newpage
\fi

\bibliographystyle{IEEEtran}
\bibliography{esgnn}

\begin{thebibliography}{100}
\providecommand{\url}[1]{#1}
\csname url@samestyle\endcsname
\providecommand{\newblock}{\relax}
\providecommand{\bibinfo}[2]{#2}
\providecommand{\BIBentrySTDinterwordspacing}{\spaceskip=0pt\relax}
\providecommand{\BIBentryALTinterwordstretchfactor}{4}
\providecommand{\BIBentryALTinterwordspacing}{\spaceskip=\fontdimen2\font plus
\BIBentryALTinterwordstretchfactor\fontdimen3\font minus \fontdimen4\font\relax}
\providecommand{\BIBforeignlanguage}[2]{{%
\expandafter\ifx\csname l@#1\endcsname\relax
\typeout{** WARNING: IEEEtran.bst: No hyphenation pattern has been}%
\typeout{** loaded for the language `#1'. Using the pattern for}%
\typeout{** the default language instead.}%
\else
\language=\csname l@#1\endcsname
\fi
#2}}
\providecommand{\BIBdecl}{\relax}
\BIBdecl

\bibitem{ciano2021inductive}
G.~Ciano, A.~Rossi, M.~Bianchini, and F.~Scarselli, ``On inductive--transductive learning with graph neural networks,'' \emph{IEEE Transactions on Pattern Analysis and Machine Intelligence}, vol.~44, no.~2, pp. 758--769, 2021.

\bibitem{zhou2020graph}
J.~Zhou, G.~Cui, S.~Hu, Z.~Zhang, C.~Yang, Z.~Liu, L.~Wang, C.~Li, and M.~Sun, ``Graph neural networks: A review of methods and applications,'' \emph{AI Open}, vol.~1, pp. 57--81, 2020.

\bibitem{chen2020handling}
T.~Chen and R.~C.-W. Wong, ``Handling information loss of graph neural networks for session-based recommendation,'' in \emph{Proceedings of the 26th ACM SIGKDD International Conference on Knowledge Discovery \& Data Mining}, 2020, pp. 1172--1180.

\bibitem{zhang2020deep}
Z.~Zhang, P.~Cui, and W.~Zhu, ``Deep learning on graphs: A survey,'' \emph{IEEE Transactions on Knowledge and Data Engineering}, 2020.

\bibitem{gilmer2017neural}
J.~Gilmer, S.~S. Schoenholz, P.~F. Riley, O.~Vinyals, and G.~E. Dahl, ``Neural message passing for quantum chemistry,'' in \emph{International Conference on Machine Learning}.\hskip 1em plus 0.5em minus 0.4em\relax PMLR, 2017, pp. 1263--1272.

\bibitem{wu2019simplifying}
F.~Wu, A.~Souza, T.~Zhang, C.~Fifty, T.~Yu, and K.~Weinberger, ``Simplifying graph convolutional networks,'' in \emph{International Conference on Machine Learning}.\hskip 1em plus 0.5em minus 0.4em\relax PMLR, 2019, pp. 6861--6871.

\bibitem{balcilar2020analyzing}
M.~Balcilar, G.~Renton, P.~H{\'e}roux, B.~Ga{\"u}z{\`e}re, S.~Adam, and P.~Honeine, ``Analyzing the expressive power of graph neural networks in a spectral perspective,'' in \emph{International Conference on Learning Representations}, 2020.

\bibitem{ma2021unified}
Y.~Ma, X.~Liu, T.~Zhao, Y.~Liu, J.~Tang, and N.~Shah, ``A unified view on graph neural networks as graph signal denoising,'' in \emph{Proceedings of the 30th ACM International Conference on Information \& Knowledge Management}, 2021, pp. 1202--1211.

\bibitem{zhu2021interpreting}
M.~Zhu, X.~Wang, C.~Shi, H.~Ji, and P.~Cui, ``Interpreting and unifying graph neural networks with an optimization framework,'' in \emph{Proceedings of the Web Conference 2021}, 2021, pp. 1215--1226.

\bibitem{zhu2020beyond}
J.~Zhu, Y.~Yan, L.~Zhao, M.~Heimann, L.~Akoglu, and D.~Koutra, ``Beyond homophily in graph neural networks: current limitations and effective designs,'' \emph{Advances in Neural Information Processing Systems}, vol.~33, 2020.

\bibitem{chien2021adaptive}
E.~Chien, J.~Peng, P.~Li, and O.~Milenkovic, ``Adaptive universal generalized pagerank graph neural network,'' in \emph{International Conference on Learning Representations}, 2021.

\bibitem{zhu2021graph}
J.~Zhu, R.~A. Rossi, A.~Rao, T.~Mai, N.~Lipka, N.~K. Ahmed, and D.~Koutra, ``Graph neural networks with heterophily,'' in \emph{Proceedings of the AAAI Conference on Artificial Intelligence}, vol.~35, no.~12, 2021, pp. 11\,168--11\,176.

\bibitem{wang2021graph}
R.~Wang, S.~Mou, X.~Wang, W.~Xiao, Q.~Ju, C.~Shi, and X.~Xie, ``Graph structure estimation neural networks,'' in \emph{Proceedings of the Web Conference 2021}, 2021, pp. 342--353.

\bibitem{suresh2021breaking}
S.~Suresh, V.~Budde, J.~Neville, P.~Li, and J.~Ma, ``Breaking the limit of graph neural networks by improving the assortativity of graphs with local mixing patterns,'' \emph{Proceedings of the 27th ACM SIGKDD Conference on Knowledge Discovery \& Data Mining}, 2021.

\bibitem{meta_yang2022graph}
X.~Ma, Q.~Chen, Y.~Ren, G.~Song, and L.~Wang, ``Meta-weight graph neural network: Push the limits beyond global homophily,'' in \emph{Proceedings of the ACM Web Conference 2022}, 2022, pp. 1270--1280.

\bibitem{nt2019revisiting}
H.~Nt and T.~Maehara, ``Revisiting graph neural networks: All we have is low-pass filters,'' \emph{arXiv preprint arXiv:1905.09550}, 2019.

\bibitem{mcpherson2001birds}
M.~McPherson, L.~Smith-Lovin, and J.~M. Cook, ``Birds of a feather: Homophily in social networks,'' \emph{Annual review of sociology}, vol.~27, no.~1, pp. 415--444, 2001.

\bibitem{fagcn2021}
D.~Bo, X.~Wang, C.~Shi, and H.~Shen, ``Beyond low-frequency information in graph convolutional networks,'' in \emph{{AAAI}}.\hskip 1em plus 0.5em minus 0.4em\relax {AAAI} Press, 2021.

\bibitem{liu2021non}
M.~Liu, Z.~Wang, and S.~Ji, ``Non-local graph neural networks,'' \emph{IEEE Transactions on Pattern Analysis and Machine Intelligence}, 2021.

\bibitem{yan2021two}
Y.~Yan, M.~Hashemi, K.~Swersky, Y.~Yang, and D.~Koutra, ``Two sides of the same coin: Heterophily and oversmoothing in graph convolutional neural networks,'' \emph{arXiv preprint arXiv:2102.06462}, 2021.

\bibitem{fang2022polarized}
Z.~Fang, L.~Xu, G.~Song, Q.~Long, and Y.~Zhang, ``Polarized graph neural networks,'' in \emph{Proceedings of the ACM Web Conference 2022}, 2022, pp. 1404--1413.

\bibitem{yang2022graph}
L.~Yang, W.~Zhou, W.~Peng, B.~Niu, J.~Gu, C.~Wang, X.~Cao, and D.~He, ``Graph neural networks beyond compromise between attribute and topology,'' in \emph{Proceedings of the ACM Web Conference 2022}, 2022, pp. 1127--1135.

\bibitem{li2022finding}
X.~Li, R.~Zhu, Y.~Cheng, C.~Shan, S.~Luo, D.~Li, and W.~Qian, ``Finding global homophily in graph neural networks when meeting heterophily,'' in \emph{International Conference on Machine Learning}.\hskip 1em plus 0.5em minus 0.4em\relax PMLR, 2022, pp. 13\,242--13\,256.

\bibitem{chen2020simple}
M.~Chen, Z.~Wei, Z.~Huang, B.~Ding, and Y.~Li, ``Simple and deep graph convolutional networks,'' in \emph{International conference on machine learning}.\hskip 1em plus 0.5em minus 0.4em\relax PMLR, 2020, pp. 1725--1735.

\bibitem{oono2020graph}
K.~Oono and T.~Suzuki, ``Graph neural networks exponentially lose expressive power for node classification,'' in \emph{International Conference on Learning Representations}, 2020.

\bibitem{velickovic2018graph}
P.~Veli{\v{c}}kovi{\'{c}}, G.~Cucurull, A.~Casanova, A.~Romero, P.~Li{\`{o}}, and Y.~Bengio, ``{Graph attention networks},'' \emph{International Conference on Learning Representations}, 2018, accepted as poster.

\bibitem{kim2021how}
D.~Kim and A.~Oh, ``How to find your friendly neighborhood: Graph attention design with self-supervision,'' in \emph{International Conference on Learning Representations}, 2021.

\bibitem{lim2021large}
D.~Lim, F.~Hohne, X.~Li, S.~L. Huang, V.~Gupta, O.~Bhalerao, and S.~N. Lim, ``Large scale learning on non-homophilous graphs: New benchmarks and strong simple methods,'' \emph{Advances in Neural Information Processing Systems}, vol.~34, pp. 20\,887--20\,902, 2021.

\bibitem{platonov2024characterizing}
O.~Platonov, D.~Kuznedelev, A.~Babenko, and L.~Prokhorenkova, ``Characterizing graph datasets for node classification: Homophily-heterophily dichotomy and beyond,'' \emph{Advances in Neural Information Processing Systems}, vol.~36, 2024.

\bibitem{kipf2017semi}
T.~N. Kipf and M.~Welling, ``Semi-supervised classification with graph convolutional networks,'' in \emph{International Conference on Learning Representations}, 2017.

\bibitem{Klicpera2019PredictTP}
J.~Klicpera, A.~Bojchevski, and S.~G{\"u}nnemann, ``Predict then propagate: Graph neural networks meet personalized pagerank,'' in \emph{ICLR}, 2019.

\bibitem{xu2018representation}
K.~Xu, C.~Li, Y.~Tian, T.~Sonobe, K.-i. Kawarabayashi, and S.~Jegelka, ``Representation learning on graphs with jumping knowledge networks,'' in \emph{International Conference on Machine Learning}.\hskip 1em plus 0.5em minus 0.4em\relax PMLR, 2018, pp. 5453--5462.

\bibitem{yang2020factorizable}
Y.~Yang, Z.~Feng, M.~Song, and X.~Wang, ``Factorizable graph convolutional networks,'' \emph{Advances in Neural Information Processing Systems}, vol.~33, 2020.

\bibitem{policygnn_kdd20}
K.-H. Lai, D.~Zha, K.~Zhou, and X.~Hu, ``Policy-gnn: Aggregation optimization for graph neural networks,'' in \emph{Proceedings of the 26th ACM SIGKDD International Conference on Knowledge Discovery \& Data Mining}, 2020, pp. 461--471.

\bibitem{isufi2021edgenets}
E.~Isufi, F.~Gama, and A.~Ribeiro, ``Edgenets: Edge varying graph neural networks,'' \emph{IEEE Transactions on Pattern Analysis and Machine Intelligence}, 2021.

\bibitem{bianchi2021graph}
F.~M. Bianchi, D.~Grattarola, L.~Livi, and C.~Alippi, ``Graph neural networks with convolutional arma filters,'' \emph{IEEE Transactions on Pattern Analysis and Machine Intelligence}, 2021.

\bibitem{gao2022hgnn}
Y.~Gao, Y.~Feng, S.~Ji, and R.~Ji, ``Hgnn+: General hypergraph neural networks,'' \emph{IEEE Transactions on Pattern Analysis and Machine Intelligence}, vol.~45, no.~3, pp. 3181--3199, 2022.

\bibitem{bouritsas2022improving}
G.~Bouritsas, F.~Frasca, S.~P. Zafeiriou, and M.~Bronstein, ``Improving graph neural network expressivity via subgraph isomorphism counting,'' \emph{IEEE Transactions on Pattern Analysis and Machine Intelligence}, 2022.

\bibitem{balcilar2021breaking}
M.~Balcilar, P.~H{\'e}roux, B.~Gauzere, P.~Vasseur, S.~Adam, and P.~Honeine, ``Breaking the limits of message passing graph neural networks,'' in \emph{International Conference on Machine Learning}.\hskip 1em plus 0.5em minus 0.4em\relax PMLR, 2021, pp. 599--608.

\bibitem{faber2021comparing}
L.~Faber, A.~K.~Moghaddam, and R.~Wattenhofer, ``When comparing to ground truth is wrong: On evaluating gnn explanation methods,'' in \emph{Proceedings of the 27th ACM SIGKDD Conference on Knowledge Discovery \& Data Mining}, 2021, pp. 332--341.

\bibitem{wang2022reinforced}
X.~Wang, Y.~Wu, A.~Zhang, F.~Feng, X.~He, and T.-S. Chua, ``Reinforced causal explainer for graph neural networks,'' \emph{IEEE Transactions on Pattern Analysis and Machine Intelligence}, 2022.

\bibitem{Schnake2021HigherOrderEO}
T.~Schnake, O.~Eberle, J.~Lederer, S.~Nakajima, K.~T. Schutt, K.-R. Muller, and G.~Montavon, ``Higher-order explanations of graph neural networks via relevant walks.'' \emph{IEEE transactions on pattern analysis and machine intelligence}, vol.~PP, 2021.

\bibitem{min2020scattering}
Y.~Min, F.~Wenkel, and G.~Wolf, ``Scattering gcn: Overcoming oversmoothness in graph convolutional networks,'' \emph{Advances in Neural Information Processing Systems}, vol.~33, pp. 14\,498--14\,508, 2020.

\bibitem{Pei2020GeomGCNGG}
H.~Pei, B.~Wei, K.~C.-C. Chang, Y.~Lei, and B.~Yang, ``Geom-gcn: Geometric graph convolutional networks,'' \emph{ArXiv}, vol. abs/2002.05287, 2020.

\bibitem{hou2019measuring}
Y.~Hou, J.~Zhang, J.~Cheng, K.~Ma, R.~T. Ma, H.~Chen, and M.-C. Yang, ``Measuring and improving the use of graph information in graph neural networks,'' in \emph{International Conference on Learning Representations}, 2019.

\bibitem{luan2022revisiting}
S.~Luan, C.~Hua, Q.~Lu, J.~Zhu, M.~Zhao, S.~Zhang, X.-W. Chang, and D.~Precup, ``Revisiting heterophily for graph neural networks,'' \emph{Advances in neural information processing systems}, vol.~35, pp. 1362--1375, 2022.

\bibitem{zheng2023finding}
Y.~Zheng, H.~Zhang, V.~Lee, Y.~Zheng, X.~Wang, and S.~Pan, ``Finding the missing-half: Graph complementary learning for homophily-prone and heterophily-prone graphs,'' in \emph{International Conference on Machine Learning}.\hskip 1em plus 0.5em minus 0.4em\relax PMLR, 2023, pp. 42\,492--42\,505.

\bibitem{zheng2020robust}
C.~Zheng, B.~Zong, W.~Cheng, D.~Song, J.~Ni, W.~Yu, H.~Chen, and W.~Wang, ``Robust graph representation learning via neural sparsification,'' in \emph{International Conference on Machine Learning}.\hskip 1em plus 0.5em minus 0.4em\relax PMLR, 2020, pp. 11\,458--11\,468.

\bibitem{wang2020unifying}
H.~Wang and J.~Leskovec, ``Unifying graph convolutional neural networks and label propagation,'' \emph{arXiv preprint arXiv:2002.06755}, 2020.

\bibitem{luo2021learning}
D.~Luo, W.~Cheng, W.~Yu, B.~Zong, J.~Ni, H.~Chen, and X.~Zhang, ``Learning to drop: Robust graph neural network via topological denoising,'' in \emph{Proceedings of the 14th ACM international conference on web search and data mining}, 2021, pp. 779--787.

\bibitem{zhang2022dotin}
S.~Zhang, F.~Zhu, J.~Yan, R.~Zhao, and X.~Yang, ``Dotin: Dropping task-irrelevant nodes for gnns,'' \emph{arXiv preprint arXiv:2204.13429}, 2022.

\bibitem{trivedi2022augmentations}
P.~Trivedi, E.~S. Lubana, Y.~Yan, Y.~Yang, and D.~Koutra, ``Augmentations in graph contrastive learning: Current methodological flaws \& towards better practices,'' in \emph{Proceedings of the ACM Web Conference 2022}, 2022, pp. 1538--1549.

\bibitem{gong2023ma}
X.~Gong, C.~Yang, and C.~Shi, ``Ma-gcl: Model augmentation tricks for graph contrastive learning,'' in \emph{Proceedings of the AAAI Conference on Artificial Intelligence}, vol.~37, no.~4, 2023, pp. 4284--4292.

\bibitem{xu2021infogcl}
D.~Xu, W.~Cheng, D.~Luo, H.~Chen, and X.~Zhang, ``Infogcl: Information-aware graph contrastive learning,'' \emph{Advances in Neural Information Processing Systems}, vol.~34, pp. 30\,414--30\,425, 2021.

\bibitem{sun2022graph}
Q.~Sun, J.~Li, H.~Peng, J.~Wu, X.~Fu, C.~Ji, and S.~Y. Philip, ``Graph structure learning with variational information bottleneck,'' in \emph{Proceedings of the AAAI Conference on Artificial Intelligence}, vol.~36, no.~4, 2022, pp. 4165--4174.

\bibitem{yang2021soft}
M.~Yang, Y.~Shen, H.~Qi, and B.~Yin, ``Soft-mask: adaptive substructure extractions for graph neural networks,'' in \emph{Proceedings of the Web Conference 2021}, 2021, pp. 2058--2068.

\bibitem{miao2022interpretable}
S.~Miao, M.~Liu, and P.~Li, ``Interpretable and generalizable graph learning via stochastic attention mechanism,'' in \emph{International Conference on Machine Learning}.\hskip 1em plus 0.5em minus 0.4em\relax PMLR, 2022, pp. 15\,524--15\,543.

\bibitem{wang2020deep}
C.~Wang, H.~Zhang, B.~Chen, D.~Wang, Z.~Wang, and M.~Zhou, ``Deep relational topic modeling via graph poisson gamma belief network,'' \emph{Advances in Neural Information Processing Systems}, vol.~33, pp. 488--500, 2020.

\bibitem{bengio2013representation}
Y.~Bengio, A.~Courville, and P.~Vincent, ``Representation learning: A review and new perspectives,'' \emph{IEEE transactions on pattern analysis and machine intelligence}, vol.~35, no.~8, pp. 1798--1828, 2013.

\bibitem{higgins2018towards}
I.~Higgins, D.~Amos, D.~Pfau, S.~Racaniere, L.~Matthey, D.~Rezende, and A.~Lerchner, ``Towards a definition of disentangled representations,'' \emph{arXiv preprint arXiv:1812.02230}, 2018.

\bibitem{tang2023context}
W.~Tang, L.~Li, X.~Liu, L.~Jin, J.~Tang, and Z.~Li, ``Context disentangling and prototype inheriting for robust visual grounding,'' \emph{IEEE Transactions on Pattern Analysis and Machine Intelligence}, 2023.

\bibitem{dalva2023image}
Y.~Dalva, H.~Pehlivan, O.~I. Hatipoglu, C.~Moran, and A.~Dundar, ``Image-to-image translation with disentangled latent vectors for face editing,'' \emph{IEEE Transactions on Pattern Analysis and Machine Intelligence}, 2023.

\bibitem{chu2021learning}
S.~Chu, D.~Kim, and B.~Han, ``Learning debiased and disentangled representations for semantic segmentation,'' \emph{Advances in Neural Information Processing Systems}, vol.~34, pp. 8355--8366, 2021.

\bibitem{chen2023sketchtrans}
C.~Chen, M.~Ye, M.~Qi, and B.~Du, ``Sketchtrans: Disentangled prototype learning with transformer for sketch-photo recognition,'' \emph{IEEE Transactions on Pattern Analysis and Machine Intelligence}, 2023.

\bibitem{liu2022spoof}
Y.~Liu and X.~Liu, ``Spoof trace disentanglement for generic face anti-spoofing,'' \emph{IEEE Transactions on Pattern Analysis and Machine Intelligence}, vol.~45, no.~3, pp. 3813--3830, 2022.

\bibitem{ma2019disentangled}
J.~Ma, P.~Cui, K.~Kuang, X.~Wang, and W.~Zhu, ``Disentangled graph convolutional networks,'' in \emph{International conference on machine learning}.\hskip 1em plus 0.5em minus 0.4em\relax PMLR, 2019, pp. 4212--4221.

\bibitem{liu2020independence}
Y.~Liu, X.~Wang, S.~Wu, and Z.~Xiao, ``Independence promoted graph disentangled networks,'' in \emph{Proceedings of the AAAI Conference on Artificial Intelligence}, vol.~34, no.~04, 2020, pp. 4916--4923.

\bibitem{li2021disentangled}
H.~Li, X.~Wang, Z.~Zhang, Z.~Yuan, H.~Li, and W.~Zhu, ``Disentangled contrastive learning on graphs,'' \emph{Advances in Neural Information Processing Systems}, vol.~34, pp. 21\,872--21\,884, 2021.

\bibitem{xiao2022decoupled}
T.~Xiao, Z.~Chen, Z.~Guo, Z.~Zhuang, and S.~Wang, ``Decoupled self-supervised learning for graphs,'' \emph{Advances in Neural Information Processing Systems}, vol.~35, pp. 620--634, 2022.

\bibitem{li2022disentangled}
H.~Li, Z.~Zhang, X.~Wang, and W.~Zhu, ``Disentangled graph contrastive learning with independence promotion,'' \emph{IEEE Transactions on Knowledge and Data Engineering}, 2022.

\bibitem{he2022variational}
Y.~He, C.~Wang, H.~Zhang, B.~Chen, and M.~Zhou, ``A variational edge partition model for supervised graph representation learning,'' \emph{Advances in Neural Information Processing Systems}, vol.~35, pp. 12\,339--12\,351, 2022.

\bibitem{zhao2022exploring}
T.~Zhao, X.~Zhang, and S.~Wang, ``Exploring edge disentanglement for node classification,'' in \emph{Proceedings of the ACM Web Conference 2022}, 2022, pp. 1028--1036.

\bibitem{guo2022learning}
J.~Guo, K.~Huang, X.~Yi, and R.~Zhang, ``Learning disentangled graph convolutional networks locally and globally,'' \emph{IEEE Transactions on Neural Networks and Learning Systems}, 2022.

\bibitem{fan2022debiasing}
S.~Fan, X.~Wang, Y.~Mo, C.~Shi, and J.~Tang, ``Debiasing graph neural networks via learning disentangled causal substructure,'' \emph{Advances in Neural Information Processing Systems}, vol.~35, pp. 24\,934--24\,946, 2022.

\bibitem{zheng2023adversarial}
S.~Zheng, Z.~Zhu, Z.~Liu, J.~Cheng, and Y.~Zhao, ``Adversarial graph disentanglement with component-specific aggregation,'' \emph{IEEE Transactions on Artificial Intelligence}, 2023.

\bibitem{wu2022multi}
L.~Wu, H.~Lin, J.~Xia, C.~Tan, and S.~Z. Li, ``Multi-level disentanglement graph neural network,'' \emph{Neural Computing and Applications}, vol.~34, no.~11, pp. 9087--9101, 2022.

\bibitem{guo2020interpretable}
X.~Guo, L.~Zhao, Z.~Qin, L.~Wu, A.~Shehu, and Y.~Ye, ``Interpretable deep graph generation with node-edge co-disentanglement,'' in \emph{Proceedings of the 26th ACM SIGKDD international conference on knowledge discovery \& data mining}, 2020, pp. 1697--1707.

\bibitem{guo2021deep}
X.~Guo, Y.~Du, and L.~Zhao, ``Deep generative models for spatial networks,'' in \emph{Proceedings of the 27th ACM SIGKDD Conference on Knowledge Discovery \& Data Mining}, 2021, pp. 505--515.

\bibitem{wang2022deep}
S.~Wang, X.~Guo, and L.~Zhao, ``Deep generative model for periodic graphs,'' \emph{Advances in Neural Information Processing Systems}, vol.~35, 2022.

\bibitem{mercatali2022symmetry}
G.~Mercatali, A.~Freitas, and V.~Garg, ``Symmetry-induced disentanglement on graphs,'' \emph{Advances in neural information processing systems}, vol.~35, pp. 31\,497--31\,511, 2022.

\bibitem{du2022disentangled}
Y.~Du, X.~Guo, H.~Cao, Y.~Ye, and L.~Zhao, ``Disentangled spatiotemporal graph generative models,'' in \emph{Proceedings of the AAAI Conference on Artificial Intelligence}, vol.~36, no.~6, 2022, pp. 6541--6549.

\bibitem{bae2021disentangled}
I.~Bae and H.-G. Jeon, ``Disentangled multi-relational graph convolutional network for pedestrian trajectory prediction,'' in \emph{Proceedings of the AAAI Conference on Artificial Intelligence}, vol.~35, no.~2, 2021, pp. 911--919.

\bibitem{zhou2015infinite}
M.~Zhou, ``Infinite edge partition models for overlapping community detection and link prediction,'' in \emph{Artificial intelligence and statistics}.\hskip 1em plus 0.5em minus 0.4em\relax PMLR, 2015, pp. 1135--1143.

\bibitem{xia2023disentangled}
L.~Xia, Y.~Shao, C.~Huang, Y.~Xu, H.~Xu, and J.~Pei, ``Disentangled graph social recommendation,'' in \emph{2023 IEEE 39th International Conference on Data Engineering}.\hskip 1em plus 0.5em minus 0.4em\relax IEEE, 2023, pp. 2332--2344.

\bibitem{ren2023disentangled}
X.~Ren, L.~Xia, J.~Zhao, D.~Yin, and C.~Huang, ``Disentangled contrastive collaborative filtering,'' in \emph{Proceedings of the 46th International ACM SIGIR Conference on Research and Development in Information Retrieval}, 2023, pp. 1137--1146.

\bibitem{li2023edge}
Y.~Li, Y.~Hao, P.~Zhao, G.~Liu, Y.~Liu, V.~S. Sheng, and X.~Zhou, ``Edge-enhanced global disentangled graph neural network for sequential recommendation,'' \emph{ACM transactions on knowledge discovery from data}, vol.~17, no.~6, pp. 1--22, 2023.

\bibitem{li2022disenRecommendation}
A.~Li, Z.~Cheng, F.~Liu, Z.~Gao, W.~Guan, and Y.~Peng, ``Disentangled graph neural networks for session-based recommendation,'' \emph{IEEE Transactions on Knowledge and Data Engineering}, 2022.

\bibitem{zhao2022multi}
S.~Zhao, W.~Wei, D.~Zou, and X.~Mao, ``Multi-view intent disentangle graph networks for bundle recommendation,'' in \emph{Proceedings of the AAAI Conference on Artificial Intelligence}, vol.~36, no.~4, 2022, pp. 4379--4387.

\bibitem{qin2022graph}
Y.~Qin, X.~Wang, Z.~Zhang, P.~Xie, and W.~Zhu, ``Graph neural architecture search under distribution shifts,'' in \emph{International Conference on Machine Learning}.\hskip 1em plus 0.5em minus 0.4em\relax PMLR, 2022, pp. 18\,083--18\,095.

\bibitem{jang2016categorical}
E.~Jang, S.~Gu, and B.~Poole, ``Categorical reparameterization with gumbel-softmax,'' \emph{arXiv preprint arXiv:1611.01144}, 2016.

\bibitem{maddison2016concrete}
C.~J. Maddison, A.~Mnih, and Y.~W. Teh, ``The concrete distribution: A continuous relaxation of discrete random variables,'' \emph{arXiv preprint arXiv:1611.00712}, 2016.

\bibitem{Stretcu2019GraphAM}
O.~Stretcu, K.~Viswanathan, D.~Movshovitz-Attias, E.~A. Platanios, S.~Ravi, and A.~Tomkins, ``Graph agreement models for semi-supervised learning,'' in \emph{NeurIPS}, 2019.

\bibitem{kingma2014adam}
D.~P. Kingma and J.~Ba, ``Adam: A method for stochastic optimization,'' \emph{arXiv preprint arXiv:1412.6980}, 2014.

\bibitem{rozemberczki2021multi}
B.~Rozemberczki, C.~Allen, and R.~Sarkar, ``Multi-scale attributed node embedding,'' \emph{Journal of Complex Networks}, vol.~9, no.~2, p. cnab014, 2021.

\bibitem{tang2009social}
J.~Tang, J.~Sun, C.~Wang, and Z.~Yang, ``Social influence analysis in large-scale networks,'' in \emph{Proceedings of the 15th ACM SIGKDD international conference on Knowledge discovery and data mining}, 2009, pp. 807--816.

\bibitem{Sen2008CollectiveCI}
P.~Sen, G.~Namata, M.~Bilgic, L.~Getoor, B.~Gallagher, and T.~Eliassi-Rad, ``Collective classification in network data,'' \emph{AI Mag.}, vol.~29, pp. 93--106, 2008.

\bibitem{adamic2005political}
L.~A. Adamic and N.~Glance, ``The political blogosphere and the 2004 us election: Divided they blog,'' in \emph{Proceedings of the 3rd international workshop on Link discovery}, 2005, pp. 36--43.

\bibitem{jin2020graph}
W.~Jin, Y.~Ma, X.~Liu, X.~Tang, S.~Wang, and J.~Tang, ``Graph structure learning for robust graph neural networks,'' in \emph{Proceedings of the 26th ACM SIGKDD International Conference on Knowledge Discovery \& Data Mining}, 2020, pp. 66--74.

\bibitem{deshpande2018contextual}
Y.~Deshpande, S.~Sen, A.~Montanari, and E.~Mossel, ``Contextual stochastic block models,'' \emph{Advances in Neural Information Processing Systems}, vol.~31, 2018.

\bibitem{palowitch2022synthetic}
J.~Palowitch, A.~Tsitsulin, B.~Perozzi, and B.~A. Mayer, ``Synthetic graph generation to benchmark graph learning,'' in \emph{NeurIPS 2022 Workshop: New Frontiers in Graph Learning}, 2022.

\bibitem{akiba2019optuna}
T.~Akiba, S.~Sano, T.~Yanase, T.~Ohta, and M.~Koyama, ``Optuna: A next-generation hyperparameter optimization framework,'' in \emph{Proceedings of the 25th ACM SIGKDD international conference on knowledge discovery \& data mining}, 2019, pp. 2623--2631.

\bibitem{Ma2021IsHA}
Y.~Ma, X.~Liu, N.~Shah, and J.~Tang, ``Is homophily a necessity for graph neural networks?'' \emph{ArXiv}, vol. abs/2106.06134, 2021.

\end{thebibliography}

\vskip -2\baselineskip plus -1fil

\begin{IEEEbiography}[{\includegraphics[width=1in,height=1.25in,clip,keepaspectratio]{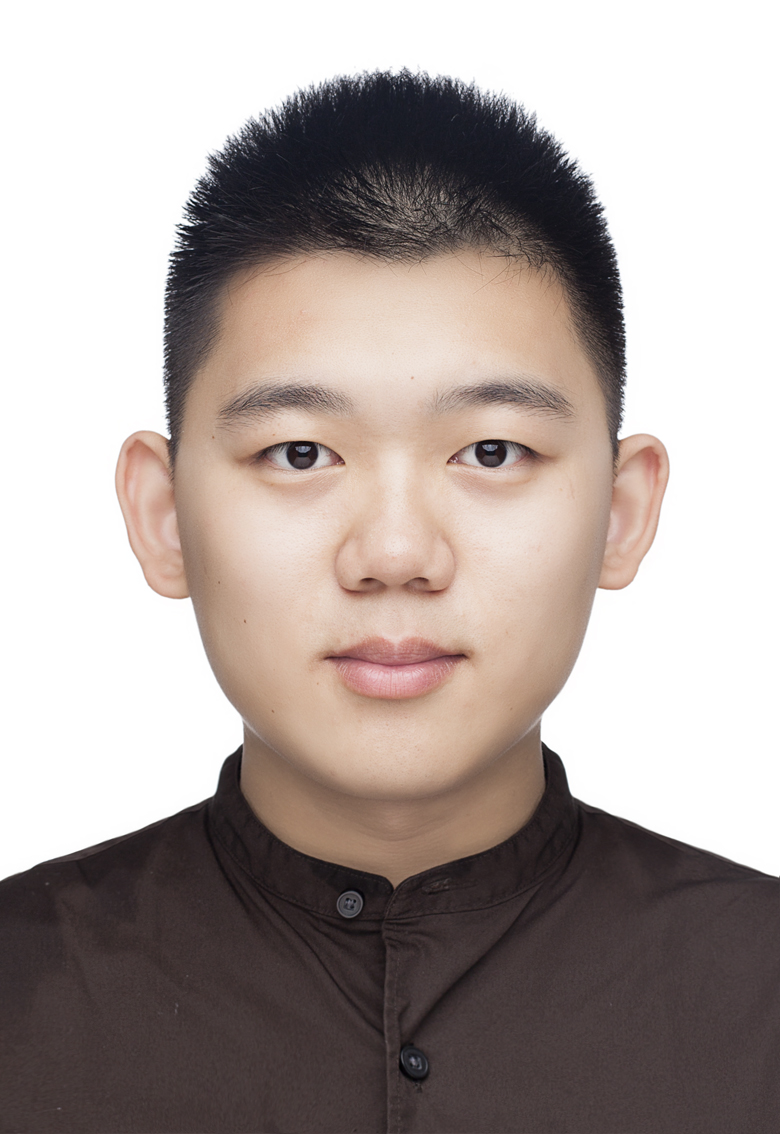}}]{Jingwei Guo}
received the First-class (Hons) degree in Applied Mathematics from University of Liverpool, UK, in 2018. After finishing his undergraduate study, he worked as a Research Associate at Xi’an Jiaotong-Liverpool University of China for a year. He is currently pursing his PhD degree at University of Liverpool, UK. His research focuses on developing new graph neural networks, and applying the techniques in various domains.
\end{IEEEbiography}

\vskip -2\baselineskip plus -1fil

\begin{IEEEbiography}[{\includegraphics[width=1in,height=1.25in,clip,keepaspectratio]{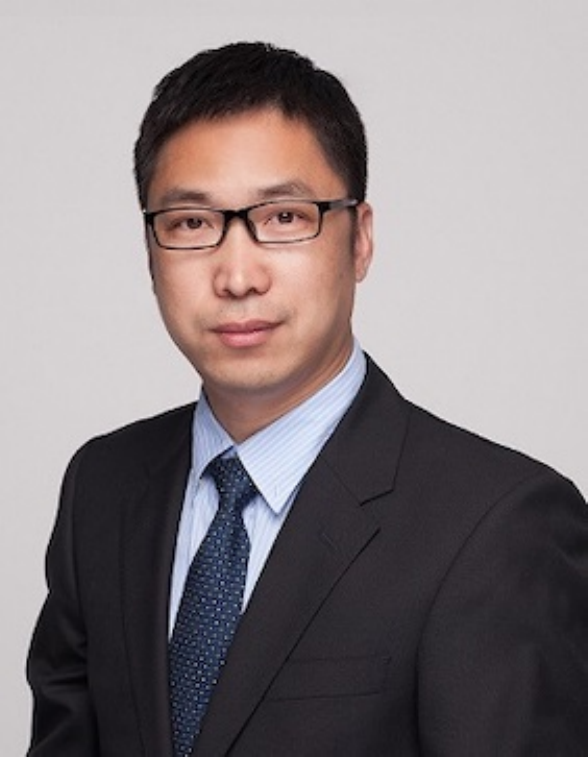}}]{Kaizhu Huang}
	 (corresponding author) 
	  works on machine learning, neural information processing, and pattern recognition. He is currently a tenured Professor of ECE at Duke Kunshan University (DKU). Prof. Huang obtained his PhD degree from Chinese University of Hong Kong (CUHK) in 2004. He worked in Fujitsu Research Centre, CUHK, University of Bristol, National Laboratory of Pattern Recognition, Chinese Academy of Sciences, and Xi’an Jiaotong-Liverpool University from 2004 to 2022. He was the recipient of 2011 Asia Pacific Neural Network Society Young Researcher Award. He received best (runner-up) paper or book awards nine times and published extensively  in journals (IEEE T-NNLS, IEEE T-IP, IEEE T-PAMI, IEEE T-CYB) and conferences (AAAI, ICML, CIKM, ECML, ICCV, CVPR, NeurIPS, ICDM). He serves as associated editors/advisory board members in a number of journals and book series. He was invited as keynote speaker at more than 40 international conferences or workshops.
\end{IEEEbiography}

\vskip -2\baselineskip plus -1fil

\begin{IEEEbiography}[{\includegraphics[width=1in,height=1.25in,clip,keepaspectratio]{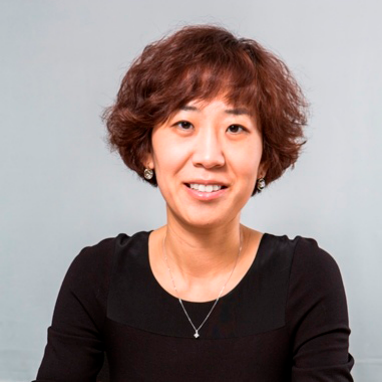}}]{Rui Zhang}
received the First-class (Hons) degree in Telecommunication Engineering from Jilin University of China in 2001 and the Ph.D. degree in Computer Science and Mathematics from University of Ulster, UK in 2007. After finishing her PhD study, she worked as a Research Associate at University of Bradford and University of Bristol in the UK for 5 years. She joined Xi’an Jiaotong-Liverpool University in 2012 and currently holds the position of Senior Associate Professor. Her  research interests include machine learning, data mining and statistical analysis.
\end{IEEEbiography}

\vskip -2\baselineskip plus -1fil

\begin{IEEEbiography}[{\includegraphics[width=1in,height=1.25in,clip,keepaspectratio]{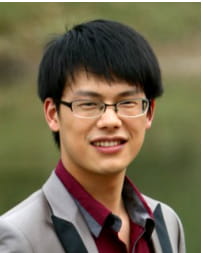}}]{Xinping Yi} received the Ph.D. degree in electronics and communications from T\'el\'ecom ParisTech, Paris, France, in 2015. He has been a Professor at Southeast University, Nanjing, China, since 2023. Prior to that, he was a Lecturer (Assistant Professor) with University of Liverpool, Liverpool, UK, a Research Associate with Technische Universitat Berlin, Berlin, Germany, and a Research Assistant with EURECOM, Sophia Antipolis, France. His main research interests include network information theory, trustworthy artificial intelligence, graph machine learning, and their applications in wireless communications.
\end{IEEEbiography}

\end{document}